\documentclass{article}
\usepackage[dvips]{graphicx}
\usepackage{amsmath, amsfonts, amsthm, amssymb,epsfig}
\usepackage{graphicx}

\oddsidemargin .25in    
\evensidemargin .25in \marginparwidth 0.07 true in
\topmargin -0.5in \addtolength{\headsep}{0.25in}
\textheight 8.5 true in       
\textwidth 6.0 true in        
\widowpenalty=10000 \clubpenalty=10000

\title{Learning the effect of latent variables in Gaussian Graphical models with unobserved variables}

\author{
Marina Vinyes \\
Universit\'{e} Paris-Est, LIGM (UMR8049)\\
Ecole des Ponts \\
Marne-la-Vall\'{e}e, France \\
\texttt{marina.vinyes@imagine.enpc.fr}
\and
Guillaume Obozinski  \\
Universit\'{e} Paris-Est, LIGM (UMR8049)\\
Ecole des Ponts \\
Marne-la-Vall\'{e}e, France \\
\texttt{guillaume.obozinski@enpc.fr} }


\usepackage{natbib}
\usepackage{color}
\usepackage{algorithm}
\usepackage{algpseudocode}
\usepackage{enumitem}
\usepackage{xr}
\usepackage{xcolor}
\usepackage{tikz}
\colorlet{BLUE}{blue}
\usepackage{cleveref}

\def\sbot{{\scriptscriptstyle \bot}}
\def\A{\mathcal{A}}
\def\Mi{M^{\scriptscriptstyle (I)}}
\def\Li{L^{\scriptscriptstyle (I)}}
\def\Lis{L^{\scriptscriptstyle (I)\,{\scriptstyle*}}}
\def\Qi{Q^{\scriptscriptstyle (I)}}

\def\Ni{N^{\scriptscriptstyle (I)}}

\def\tr{{\rm tr}}
\newcommand\itgset[1]{[\![\,#1\,]\!]}
\def\RR{\mathbb{R}}

\newcommand\GOD[1]{}

\def\st{\text{s.t.}}
\def\supp{\text{supp}}
\def\sign{\,\text{sign}}
\newcommand{\dotp}[2]{\langle #1,#2 \rangle}

\def\BIT{\begin{itemize}}
\def\EIT{\end{itemize}}
\def\BET{\begin{enumerate}}
\def\EET{\end{enumerate}}

\def\RR{\mathbb{R}}

\def\T{\mathcal{T}}
\def\P{\mathcal{P}}

\def\I{\mathcal{I}}

\def\op{{\rm op}}
\def\supp{{\rm Supp}}
\def\st{\text{s.t.}}

\def\xxi{\zeta}

\newtheorem{theorem}{Theorem}
\newtheorem{lemma}{Lemma}
\newtheorem{proposition}{Proposition}

\newtheorem{corollary}[theorem]{Corollary}
\newtheorem{definition}[theorem]{Definition}

\newtheorem{claim}{Claim}

\newtheorem{assumption}{Assumption}
\def\BIT{\begin{itemize}}
\def\EIT{\end{itemize}}
\def\BET{\begin{enumerate}}
\def\EET{\end{enumerate}}

\def\RR{\mathbb{R}}

\def\T{\mathcal{T}}
\def\I{\mathcal{I}}

\def\op{{\rm op}}
\def\tr{{\rm tr}}

\def\supp{{\rm Supp}}
\def\st{\text{s.t.}}

\def\tauu{{\overline{\tau}}}
\def\taul{{\underline{\tau}}}
\def\etau{{\overline{\eta}}}
\def\etal{{\underline{\eta}}}
\newcommand{\lambdasort}[1]{\lambda^{\scriptscriptstyle \!(#1)}}
\newcommand{\lambdatildesort}[1]{\tilde{\lambda}^{\scriptscriptstyle\!(#1)}}

\begin{document}
\maketitle  

\begin{abstract}
The edge structure of the graph defining an undirected graphical model describes precisely the structure of dependence between the variables in the graph.
In many applications, the dependence structure is unknown and it is desirable to learn it from data, often because it is a preliminary step to be able to ascertain causal effects. This problem, known as structure learning, is a hard problem in general, but for Gaussian graphical models it is slightly easier because the structure of the graph is given by the sparsity pattern of the precision matrix of the joint distribution, and because independence coincides with decorrelation.\\  

A major difficulty too often ignored in structure learning is the fact that if some variables are not observed, the marginal dependence graph over the observed variables will possibly be significantly more complex and no longer reflect the direct dependences that are potentially associated with causal effects. This is the problem of confounding variables. In this work, we consider a family of latent variable Gaussian graphical models (LVGGM) in which the graph of the joint distribution between observed and unobserved variables is sparse, and the unobserved variables are conditionally independent given the others. 
Prior work \citep{chandrasekaran2010} was able to recover the connectivity between observed variables, but could only identify the subspace spanned by unobserved variables, whereas we propose a convex optimization formulation based on structured matrix sparsity to estimate the complete connectivity of the original complete graph including unobserved variables, given the knowledge of the number of missing variables, and a priori knowledge of their level of connectivity. Our formulation is supported by a theoretical result of identifiability of the latent dependence structure for sparse graphs in the infinite data limit. We propose an algorithm leveraging recent active set methods, which performs well in the experiments  we ran on synthetic data.

\end{abstract}

\section{Introduction}
\label{intro}

Graphical models provide a sound theoretical framework to model a joint probability distribution with complex interdependences between a potentially large number of random variables, with applications in several fields including genomics and finance among others.\\  

In the Gaussian Graphical Models (GGM) literature, a central problem is to estimate the inverse covariance matrix, also known as the \emph{precision} or \emph{concentration matrix}. The sparsity pattern of the concentration matrix in Gaussian models corresponds to the structure of the graph; more precisely, the nonzeros of the concentration matrix correspond to the edges of the underlying undirected graphical model, which encode pairs of variables that are conditionally dependent given all the others. Identifying the structure of the graph is important since the number of parameters of the model grows linearly with the number of edges in the graph. 

The main  formulation for edge selection in the GGM setting is based on $\ell_1$-regularized maximum-likelihood \citep{friedman2008sparse,yuan2007model,banerjee2008model}, for which several algorithms have been proposed. The $\ell_1$ regularization provides convex formulation which induces the selection of some edges while implicitly removing others in the graph.

A serious practical difficulty is that applications in which all variables potentially relevant for the problem considered have been identified and measured are extremely rare. This entails the possible presence of \emph{confounding variables}. More precisely, some of the relevant variables may be latent and induce correlations between observed variables that can be misleading and can only be explained correctly if the presence of the latent variables that produce confounding effects is explicitly modeled. More precisely, when latent variables are missing, the marginalized precision matrix may not be sparse even if the full precision matrix is sparse. Imposing sparsity on the complete model results in a marginal precision matrix of the Latent Variable Gaussian Graphical Model (LVGGM) that has a sparse plus low-rank structure. \citet{chandrasekaran2010} consider a regularized maximum likelihood approach, using the $\ell_1$-norm to recover the sparse component and the trace norm to recover the low-rank component and show that they consistently estimate the sparsity pattern of the sparse component and the number of latent variables. Their method identifies the low-rank structure corresponding to the effect of latent variables but, in general,  it does not allow us to identify the covariance structure of each latent variable individually, or which observed variables are directly dependent on which unobserved ones.\\

In this work, we propose to impose more structure on the low rank matrix using a variant of the norms introduced in~\citet{richard2014tight} as a regularizer. This leads to formulations which yields estimates of the structure of the complete graphical model, and, in particular, make it possible to identify which observed variables are affected by which latent variables.\\

The paper is structured as follows: In Section \ref{related} we review the relevant prior literature. In Section \ref{sec:ggm}, we formulate the LVGGM estimation problem as a regularized convex problem that imposes a sparsity structure on the latent variables. In Section~\ref{subsec:alg}, we propose a convex formulation with a quadratic loss function, and an algorithm to solve this problem efficiently. In Section~\ref{sec:id}, we show that different parts of the complete graph are identifiable by our convex formulation, under appropriate conditions. We finally present experimental results in Section \ref{experiments}.

\section{Related Work}
\label{related}

To construct an interpretable graph in high-dimensional regimes, many authors have proposed applying
an $\ell_1$ penalty to the parameter associated with each edge, in order to encourage sparsity. For instance such an approach is taken by \citet{yuan2007model} and \citet{banerjee2008model} in the context of Gaussian graphical models. The first works to explore $\ell_1$ regularization in undirected graphical models over discrete variables are \citet{lee2007efficient,ravikumar2009high} and \citet{dahinden2007penalized}. In another line of work, authors have  considered $\ell_1$-regularization for learning structure in directed acyclic graphs given an ordering of the variables \citep{huang2006covariance,li2005using,levina2008sparse} and  \citet{schmidt2007learning,champion2018inferring} propose methods without assuming known ordering.\\

A conditional independence graph is sometimes expected to have particular structure. In the context of graphs with hub nodes, that is nodes with many neighbors, \citet{tan2014learning} present a convex formulation that involves a row-column overlap norm penalty. \citet{defazio2012convex} use a convex penalty adapted for a scale-free network in which the degree of connectivity of the nodes follows a power law distribution. \citet{tao2017inverse} impose an overlapping group structure on the concentration matrix.\\

Another useful problem, that is the focus of this paper, is finding the structure of Gaussian graphical models with unobserved variables.  \citet{chandrasekaran2010} introduced a convex formulation to find the number of  latent components and learn the structure of on the entire collection of variables. \citet{meng2014learning} also studied regularized
maximum likelihood estimation and derive Frobenius norm error bounds in the highdimensional
setting based on the restricted strong convexity. In order to speed
up the estimation of the sparse plus low-rank components, \citet{xu2017speeding} propose a sparsity constrained maximum likelihood estimator based on matrix factorization, and an efficient alternating proximal gradient descent algorithm with hard thresholding to solve it.  \citet{hosseini2016learning}  present a bi-convex formulation to jointly learn both a network among observed variables and densely connected and overlapping groups of variables, revealing the existence of potential latent variables. These methods identify the low-rank structure corresponding to the effect of latent variables but it does not allow us to identify the structure of the full model. In this work, we propose to impose more structure on the low rank matrix in order to obtain a  decomposition that gives the structure of the complete graphical model.

\subsection*{Notations}
$\itgset{p}$ denotes the set $\{1,...,p\}$ and $\mathcal{G}^p_k$ denotes the
set of subsets of $k$ elements in $\itgset{p}$. $|I|$ denotes the cardinality of a set $I$. If $v\in\RR^{p}$ is a vector, $\supp(v)$ denotes its support. If $M\in\RR^{p\times p}$ is a matrix, $I\subset\itgset{n}$ , $M_{II}\in\RR^{|I|\times |I|}$ is the submatrix obtained by selecting the rows and columns indexed by $I$ in $M$. For a symmetric matrix $M$, $\lambda_{\max}^+(M)$ is the largest positive eigenvalue and zero if they are all nonpositive. If $S$ is a set, $|S|$ denotes its cardinality.

\section{Gaussian Graphical Models with Latent Variables}

\label{sec:ggm}
We consider a multivariate Gaussian variable $(X_{O},X_{H})\in\RR^{p+h}$ where $O$ and $H$ are respectively the set of indices of observed variables, with $p=|O|$, and of latent variables, with $h=|H|$. We denote $\Sigma\in\RR^{(p+h)\times(p+h)}$ the complete covariance matrix and $K=\Sigma^{-1}$ the complete \emph{concentration matrix} or \emph{precision matrix}. Let $\hat{\Sigma}\in\RR^{(p+h)\times(p+h)}$ denote the empirical covariance matrix, based on a sample of size $n$. We only have access to the empirical marginal covariance matrix $\hat{\Sigma}_{OO}$. It is well known that the marginal concentration matrix on the observed variables can be computed from the full concentration matrix as
\begin{align}
\label{schur}
\Sigma_{OO}^{-1} = K_{OO}-K_{OH}K_{HH}^{-1}K_{HO}.
\end{align}
We assume that the original graphical model is sparse and that there is a small number of latent variables. 
This implies that  $K_{OO}$ is a sparse matrix and that $K_{OH}K_{HH}^{-1}K_{HO}$ is a low-rank matrix, of rank at most $h$. Note that $\Sigma_{OO}^{-1}$ is typically not be sparse due to the addition of the term $K_{OH}K_{HH}^{-1}K_{HO}$. \Cref{fig:graph} shows an example of an LVGGM structure where variables \{1,2,3\} are hidden variables and \Cref{fig:decomp}(a) shows the structure of its corresponding complete concentration matrix $K$. \Cref{fig:decomp}(b) shows an approximation of $\Sigma_{OO}^{-1}$ as ``sparse + low rank'' matrix.

\newcommand{\hnode}{-2}
\newcommand{\wlo}{1.5}

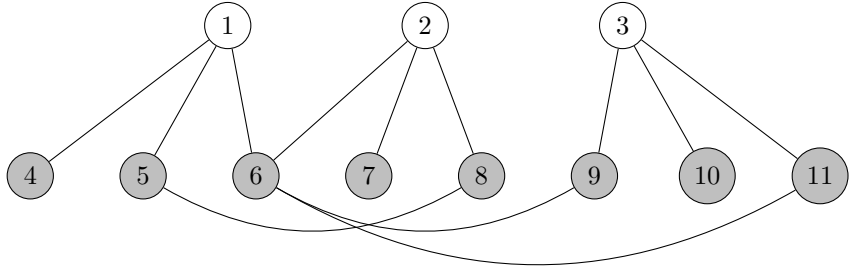
\begin{figure}[ht]
\center
\begin{tikzpicture}
\node[draw,circle,fill=white] (1)at(7/4*\wlo,0) {$1$};
\node[draw,circle,fill=white] (2)at(2*7/4*\wlo,0) {$2$};
\node[draw,circle,fill=white] (3)at(3*7/4*\wlo,0) {$3$};
\node[draw,circle,fill=gray!50] (4)at(0,\hnode) {$4$};
\node[draw,circle,fill=gray!50] (5)at(\wlo,\hnode) {$5$};
\node[draw,circle,fill=gray!50] (6)at(2*\wlo,\hnode) {$6$};
\node[draw,circle,fill=gray!50] (7)at(3*\wlo,\hnode) {$7$};
\node[draw,circle,fill=gray!50] (8)at(4*\wlo,\hnode) {$8$};
\node[draw,circle,fill=gray!50] (9)at(5*\wlo,\hnode) {$9$};
\node[draw,circle,fill=gray!50] (10)at(6*\wlo,\hnode) {$10$};
\node[draw,circle,fill=gray!50] (11)at(7*\wlo,\hnode) {$11$};
\draw (1) --(4);
\draw (1) --(5);
\draw (1) --(6);
\draw (2) --(6);
\draw (2) --(7);
\draw (2) --(8);
\draw (3) --(9);
\draw (3) --(10);
\draw (3) --(11);
\draw (5) to[bend right=30](8);
\draw (6) to[bend right=30](9);
\draw (6) to[bend right=30](11);
\end{tikzpicture}
\vspace{-2em}
    \caption{Example of an LVGGM structure where the variables $\{1,2,3\}$ are hidden variables}
    \label{fig:graph}
\end{figure}

\begin{figure}[ht]
\center
\begin{tabular}{cc}
  \includegraphics[width=.40\linewidth]{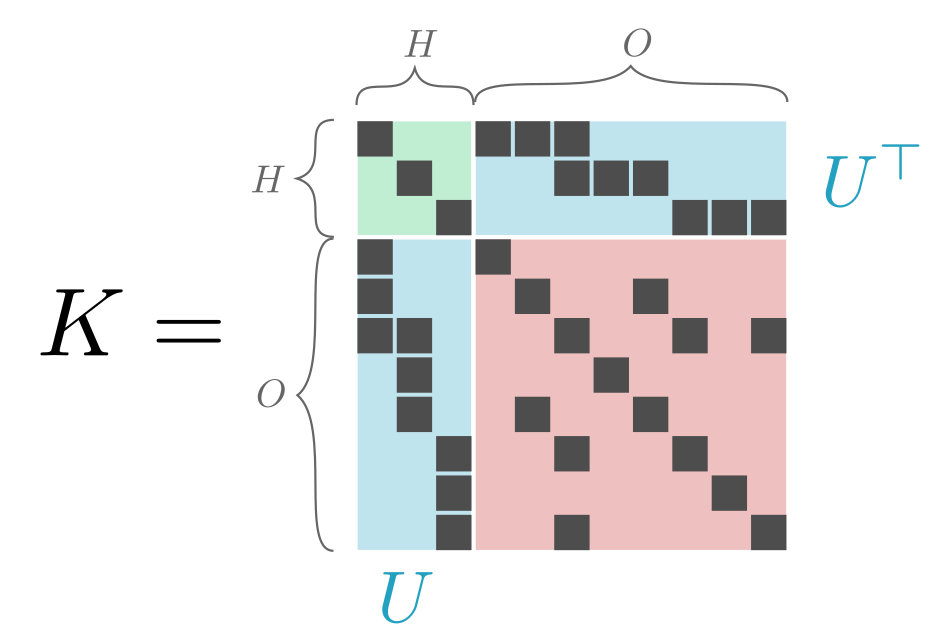}
&  \includegraphics[width=.59\linewidth]{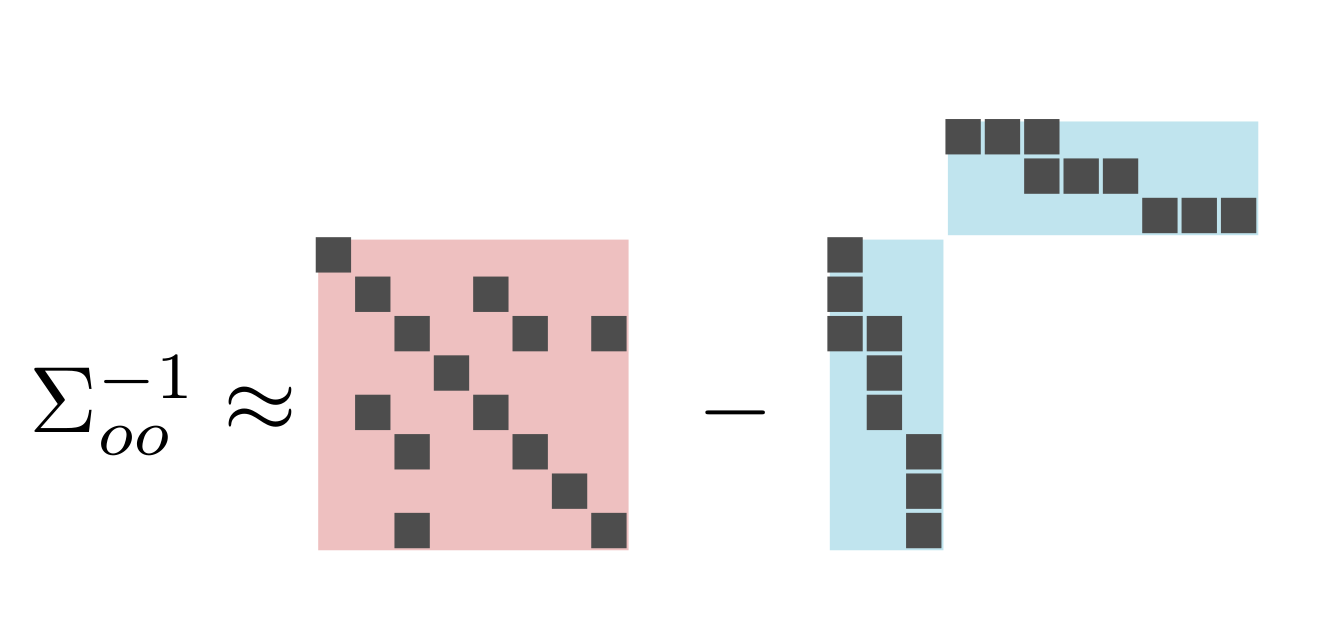}
\\    (a) & (b) \\[6pt]
\end{tabular}
  \vspace{-1em}
    \caption{(a) Structure of complete concentration matrix $K$ of graph in \Cref{fig:graph}. (b) approximation of $\Sigma_{OO}^{-1}$ as "sparse + low rank''}
    \label{fig:decomp}
\end{figure}

\citet{chandrasekaran2010} show that under appropriate conditions, namely if $K_{OO}$ is sufficiently sparse and $K_{OH}K_{HH}^{-1}K_{HO}$ is low rank and cannot be approximated by a sparse matrix, these two terms are identifiable and can be estimated, via an estimator of $\Sigma_{OO}^{-1}$ of the form $S-L,$ where $S$ is sparse, $L$ is low rank, and $S-L$, $S$ and $L$ are p.s.d. matrices in order to match the structure of \eqref{schur}, and guarantee that the estimate of the original matrix $K$ is p.s.d. Moreover the authors show that $S$ and $L$ can be estimated via the following convex optimization problem:

\begin{align}
\label{opt_tr}
&\min_{S,L} f(S-L)+\lambda\left(\gamma\|S\|_{1}+ \tr(L)\right) \\
&\quad \text{s.t.} \quad S-L \succeq 0, \quad L \succeq 0, \nonumber
\end{align}

where $f$ is a convex loss function,  and $\lambda,\gamma$ are regularization parameters. The positivity constraint on $S$ has been dropped since it is implied by $S-L \succeq 0$ and $L \succeq 0$. Typically, in GGM selection, $f$ is the negative log-likelihood.

\begin{align}
f_{ML}(M)&:=-\log\det(M) + \tr(M\hat{\Sigma}).
\end{align}

 
Two other natural losses, that have the advantage of being quadratic, are the second order Taylor expansion around the identity matrix of the log-likelihood $f_{T}$ and the score matching loss $f_{SM}$, introduced by \citet{hyvarinen2005estimation} and used for GGM estimation in \citet{lin2016estimation},
\begin{align}
f_{T}(M)&:=\frac{1}{2}\|\hat{\Sigma}^{1/2}M\hat{\Sigma}^{1/2}-I\|_2^2\\
f_{SM}(M)&:=\frac{1}{2}\tr(M^2 \hat{\Sigma})-\tr(M).
\end{align}

\citet{chandrasekaran2010} show that under appropriate technical conditions, the regularized maximum log-likelihood formulation (\ref{opt_tr}) provides estimates $(S_{n},L_{n})$ that have respectively the same sparsity pattern and rank as $K_{OO}$ and $K_{OH}K_{HH}^{-1}K_{HO}$. The obtained low rank component $L_{n}$ retrieves the latent variable subspace. 

Note first that, in general, $K_{HH}$ and  $K_{OH}$ are not identifiable and cannot be estimated from $L_{n}$. Therefore the connectivity between the latent variables and the connectivity between latent and observed variables cannot be recovered. However, under the assumption that the sources are conditionally independent given observed nodes, $K_{HH}$ is diagonal, and, when the groups of observed variables associated with each latent variables are moreover disjoint, the columns of $K_{OH}$ have disjoint support and are therefore orthogonal. This necessarily implies that they are proportional to the eigenvectors of $K_{OH}K_{HH}^{-1}K_{HO}$ as soon as the coefficients of the diagonal matrix $K_{HH}$ are all distinct, by uniqueness of the SVD. In that case, they are thus identifiable, and it makes sense to estimate the columns of $K_{OH}$ by the eigenvectors of the estimated $L$. 

However, if the columns of $K_{OH}$ are sparse, it would seem relevant to encode this in the model, as this is potentially a stronger prior than orthogonality. Moreover, it might be relevant to allow the groups of observed variables associated with each given latent variable to overlap.

In this work, assuming that the latent variables are independent,  we propose a formulation allowing to estimate the columns of $K_{HO}$ up to a constant, based on an assumption on its relative sparsity, that we encode as a prior using a matrix norm  introduced by \citet{richard2014tight}.

\section{Spsd-rank($k$) and a convex surrogate}
\label{sec:posrank}

\citet{richard2014tight} proposed matrix norms and gauges\footnote{We will use the word gauge in the paper to mean \emph{closed gauge}. We remind the reader that a closed gauge is simply a proper closed convex positively homogeneous function, and that a gauge $\gamma$ which is symmetric ($\gamma(x)=\gamma(-x)$), takes finite values, and such that $(\gamma(x)=0)\Rightarrow (x=0)$ is a norm. Gauges are thus natural generalizations of norms, that share many properties including the triangle inequality and the same Fenchel duality theory. We refer the reader to ~\citet{friedlander2014gauge} or ~\citet{rockafellar1970convex} for a more detailed presentation of gauges.} 
that yield estimates for low-rank matrices whose factors are sparse.  One variant, which is actually a gauge\footnote{See \citet{chandrasekaran2012convex} for a discussion.}, specifically suited  to the estimation of p.s.d.\ matrices, induces a decomposition into with sparse rank one p.s.d.\ factors. In this section, we introduce the $k$-spsd-rank of a p.s.d.\ matrix relate it to this gauge, which assumes that the sparsity of the factors is known and fixed. We then discuss a generalization for factors of different sparsity levels.\\

The following definition is a generalization of the rank for p.s.d.\ matrices,
\begin{definition}[$k$-spsd-rank] 
For a p.s.d.\ matrix $Z\in\RR^{p\times p}$ and for $k>1$ we define its $k$-spsd-rank as the optimal
value of the optimization problem:
\begin{align*}
&\min \|c\|_0 \\ 
&\text{s.t.} \enskip Z=\sum_{i} c_i u_i u_{i}^\top, \enskip c_i\in \RR^{+}, \enskip u_{i}\in\RR^p  :  \|u_{i}\|_0 \leq k, \|u_{i}\|_2 = 1.
\end{align*}
\end{definition}
Note that not all p.s.d.\ matrices admit such a decomposition, in which case the $k$-spsd-rank is by convention infinite. This is in particular the case for low-rank non sparse matrices like $11^{\top}$(see~\citet{richard2014tight} for a proof). A natural convex relaxation of the $k$-spsd-rank is based on the concept of \emph{atomic norm} proposed in \citet{chandrasekaran2012convex}. \emph{Atomic norms} are norms (or gauges) whose unit ball is the convex hull of a reduced set of elements of the ambient space $\mathcal{A}$ called \emph{atoms}. Here we consider the \emph{atomic gauge} associated with the set $\mathcal{A}=\{uu^\top \mid \|u\|_2 \leq 1, \: \|u\|_0 \leq k\}.$ In particular, it follows from basic results on \emph{atomic norms} that we can write this one as follows
\begin{definition}[$\Omega$, convex relaxation of $k$-spsd-rank] 
For $Z\in\RR^{p\times p},$ 
\begin{align*}
&\Omega(Z):=\min \|c\|_1 \\ 
&\text{s.t.} \enskip Z=\sum_{i} c_i u_i u_{i}^\top, \enskip 
c_i\in \RR^{+}, \enskip u_{i}\in\RR^p  :   \|u_{i}\|_0 \leq k, \|u_{i}\|_2 = 1.
\end{align*}
\end{definition}
Note that we can have $\Omega(Z)=+\infty$ even when $Z$ is p.s.d., if $Z$ cannot be decomposed in $k$-sparse,  rank-1 p.s.d.\ factors, as it is the case for $11^{\top}$. The polar gauge of $\Omega$ is characterized as follows:
\begin{lemma}
\label{lem:LMO}
Let $Y\in\RR^{p\times p}$ be a symmetric matrix. The polar gauge to $\Omega$ writes
\begin{align}
{\Omega^{\circ}}(Y)= \max_{I\in\mathcal{G}^p_k}\lambda^{+}_{max}(Y_{II}).
\end{align}
\end{lemma}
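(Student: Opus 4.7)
The plan is to exploit the atomic-gauge structure of $\Omega$. By construction $\Omega$ is the gauge whose unit ball is the closed convex hull of the set of atoms $\mathcal{A}=\{uu^\top \mid \|u\|_2 \le 1,\, \|u\|_0 \le k\}$, which is a compact subset of the space of symmetric $p\times p$ matrices. A standard duality result for atomic gauges (see e.g.~\citet{chandrasekaran2012convex}) then yields
\begin{equation*}
\Omega^{\circ}(Y)=\sup_{A \in \mathcal{A}} \langle Y, A \rangle = \sup_{\|u\|_2 \le 1,\, \|u\|_0 \le k} u^\top Y u,
\end{equation*}
using the identity $\langle Y, uu^\top \rangle = u^\top Y u$ for symmetric $Y$.

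Next I would split the supremum into an outer choice of the support $I=\supp(u)$ with $|I|\le k$ and an inner maximization over the nonzero coordinates $u_I$ subject to $\|u_I\|_2 \le 1$. The inner problem is a Rayleigh-type optimization,
\begin{equation*}
\sup_{\|u_I\|_2 \le 1} u_I^\top Y_{II}\, u_I,
\end{equation*}
whose value is precisely $\lambda_{\max}^+(Y_{II})$: when $Y_{II}$ has at least one positive eigenvalue, the maximum is attained at a corresponding unit eigenvector; otherwise $u_I^\top Y_{II} u_I \le 0$ for all feasible $u_I$ and the supremum is attained at $u_I=0$, so it equals $0$, matching the convention adopted for $\lambda_{\max}^+$.

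Finally, to restrict the outer maximization to supports of size exactly $k$, I would invoke Cauchy's eigenvalue interlacing theorem: for any $I \subseteq J$ with $J \in \mathcal{G}_k^p$, one has $\lambda_{\max}(Y_{II}) \le \lambda_{\max}(Y_{JJ})$, hence $\lambda_{\max}^+(Y_{II}) \le \lambda_{\max}^+(Y_{JJ})$. Combining the three steps gives the stated formula. There is no real obstacle in the argument; the only point worth attention is the degenerate case where every $k$-principal submatrix of $Y$ is negative semidefinite, in which case the convention on $\lambda_{\max}^+$ and the fact that $0$ lies in $\{u:\|u\|_2 \le 1,\, \|u\|_0 \le k\}$ both produce $\Omega^{\circ}(Y)=0$ consistently.
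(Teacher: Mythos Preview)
Your proof is correct and follows essentially the same route as the paper's: both compute $\Omega^\circ(Y)$ as the support function of the atom set, identify it with a sparse Rayleigh quotient, and then express the result as a maximum of $\lambda_{\max}^+$ over $k$-subsets. Your version is simply more explicit---you justify the passage from $\|u\|_0\le k$ to $|I|=k$ via Cauchy interlacing and handle the degenerate negative-semidefinite case carefully---whereas the paper compresses the whole argument into a single chain of equalities.
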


Unfortunately, the polar gauge $\Omega^{\circ}$ is a priori NP-hard to compute, since it is the largest sparse eigenvalue associated with a sparse eigenvector with $k$ non zero coefficients:

\begin{align*}
\min_u u^{\top}XX^{\top}u \quad \text{s.t.} \quad  \|u\|_0 \leq k,\quad \|u\|_2 = 1,
\end{align*}
which is known to be an NP-hard problem to solve \citep{moghaddam2008sparse}. 
However, a recent literature proposed quite a number of algorithms to solve sparse PCA approximately or heuristically, among others convex via relaxations~\citep{yuan2013truncated,d2008optimal,d2005direct}, which can be leveraged to approximately solve the corresponding problems.

\subsection{A variant for factors with different sparsity levels}
\label{sec:different_sparsity_levels}
$\Omega$ can be generalized to allow each rank one factor have a different sparsity level. 
A simple way to do this is to consider a gauge of the form
\begin{align*}
& \Omega_{w}(Z):=\inf \sum_{i}\sum_{k=1}^p w_{k}c_i^k  \\
& \text{s.t.} \enskip Z=\sum_{i}\sum_{k=1}^p c_i^k u_i^k u_{i}^{k\top},c_i^k\in \RR^{+}, u_{i}^k\in\RR^p  :   \|u_{i}^k\|_0 \leq k, \|u_{i}^k\|_2 = 1,
\end{align*}
where $k \mapsto w_{k}$ is an increasing function that penalizes each sparsity level $k$ by $w_{k}$. Via a simple change of variable, we can rewrite $\Omega_{w}$
\begin{align*}
& \Omega_{w}(Z):=\inf \sum_{i}\sum_{k=1}^p c_i^k  \\
& \text{s.t.} \enskip Z=\sum_{i}\sum_{k=1}^p c_i^k u_i^k u_{i}^{k\top},c_i^k\in \RR^{+}, u_{i}^k\in\RR^p  :   \|u_{i}^k\|_0 \leq k, \|u_{i}^k\|_2 = w_{k},
\end{align*}
which shows that it is a standard atomic gauge in which the rank one atoms with $k^2$ non-zero coefficients have weight $w_k$.
If we choose $w_k=1$ for all $k$, then it can be shown that only the non-sparse atoms will appear in the expansion and so $\Omega_{w}(Z)=\tr(Z)+\iota_{\{Z\succeq 0\}}.$ If $k\mapsto w_k$ accelerates quickly, the gauge will favor sparser factors, but since some p.s.d.\ matrices cannot be expressed as positive combinations of very sparse p.s.d.\ rank-one factors, the behavior of the gauge is not trivial for any weights of the form $w_k=k^m, \:m>0,$ even when $m$ is large. Although a detailed analysis of $\Omega_w$ is beyond the scope of this work, we illustrate this generalization in the experiments.


\section{Convex Formulation and Algorithm}
\label{subsec:alg}

We use $\Omega$ to impose structure on the low rank component and consider the following convex optimization problem,
\begin{align}
\label{opt}
\min_{S,L} f(S-L)+\lambda\big(\gamma\|S\|_{1}+\Omega(L)\big) \quad \text{s.t.} \quad S-L \succeq 0.
\end{align}
Note that the nonnegativity constraint on $L$ is no longer necessary since the gauge $\Omega$ only provides symmetric p.s.d. matrices, as a sum of p.s.d. rank-one matrices.\\

In order to rewrite our problem as a simple convex regularized by $\Omega$, we drop\footnote{It would be possible to still enforce $S-L \succeq 0$, with approach proposed in this paper using Lagrangian techniques with an increase of computational costs.} the nonegativity constraint on $S-L$ and consider the optimization problem
\begin{align}
\label{opt_nc}
\min_{S,L} f(S-L)+\lambda\big(\gamma\|S\|_{1}+\Omega(L)\big).
\end{align}

We propose the alternating optimization scheme presented in Algorithm \ref{alg:alt}. First, we update the sparse factor $S$ by optimizing problem (\ref{opt_nc}) with $L$ fixed, then we update $L$ by solving problem (\ref{opt_nc}) with $S$ fixed. 
\begin{itemize}
\item to update the sparse factor $S$ we apply a fixed number of soft-thresholding iterations, i.e several steps of iterative shrinkage-thresholding algorithm (ISTA). In the experiments we perform 10 soft-thresholding iterations when updating $S$
\item to update the low rank factor $L$ we apply an efficient algorithm for quadratic losses recently proposed by \citet{vinyes2017} called Fast Column Generation algorithm (FCG). This algorithm is well adapted to the quadratic losses $f_{T}$ and $f_{SM}$ introduced in Section \ref{sec:ggm}
\end{itemize}

FCG  consists in applying a Fully Corrective Frank Wolfe \citep{LacosteFCFW} to a regularized optimization problem. Frank Wolfe (FW) algorithm \citep{frank1956algorithm}, also known as conditional gradient, is particularly well suited for solving quadratic programming problems with linear constraints. They apply in the context where we can easily solve the Linear Minimization
Oracle (LMO), a linear problem on a convex set of constraints $\mathcal{C}$ defined as
\begin{align}
{{\rm LMO}}_{\mathcal{C}}(y) := \arg\min_{z \in \mathcal{C}} \left\langle y,z \right\rangle.
\end{align}
In particular $\mathcal{C}$ can be the convex hull of a set of atoms $\A$. At each iteration FW selects a new atom $a^t$ from $\mathcal{C}$ querying the LMO and computes the new iterate as a convex combination of $a^t$ and the old iterate $x^t$. The convex update can be done by line search. FCFW, discussed in \citet{LacosteFCFW}, is a variant of FW  that consists in finding the convex combination of all previously selected atoms $(a^i)_{i<t}$. When using the algorithm proposed in \citet{vinyes2017} we need to compute the following LMO 
\begin{align}
{{\rm LMO}}_{\Omega}(M) :=\arg\max_{u}\: u^{\top}Mu \quad \st \quad \|u\|_0=k, \|u\|_ 2=1.
\end{align}
at each iteration, and subsequently use a working set algorithm to solve the fully corrective step.\\

We propose to use the Truncated Power Iteration (TPI) heuristic introduced by \citet{yuan2013truncated} to obtain an approximation to the oracle ${{\rm LMO}}_{\Omega}(M)$.

\begin{algorithm*}
\caption{Alternate minimization}
\label{alg:alt}
\begin{algorithmic}[1]
\State\textbf{Require: } $f$ quadratic, maximum iterations $T$ 
\State\textbf{Initialization: } $S^{0}=0$, $L^{0}=0$, $t=0$
\For{$t=1..T$}
\State Compute $S^{t}$ applying a fixed number of \texttt{ISTA} iterations on problem (\ref{opt_nc}) with $L^{t-1}$ fixed
\State Compute $L^{t}$ applying \texttt{FCG} on problem (\ref{opt_nc}) with $S^{t}$ fixed
\EndFor
\State return $S^{t}, L^{t}$
\end{algorithmic}
\end{algorithm*}

\section{Identifiability of $S^*$ and of the sparse factors of $L^*$}
\label{sec:id}

For formulation \eqref{opt_nc} to yield good estimators, a necessary condition is that, if $M$ is a marginal precision matrix with decomposition $M=S^*+L^*$ with $L^*=\sum_{i} s_i u^i {u^i}^{\top}$, $\supp(u^i)\subset I_i$ and $|I_i|=k$, this decomposition can be recovered from perfect knowledge of $M$ (which corresponds to the case where we have an infinite amount of data with no noise). We therefore consider in this section the decomposition problem of a known precision matrix $M$. For the estimator obtained from \eqref{opt_nc} to provide reasonable estimates, a necessary condition is that it returns correct estimates in the limit of an infinite amount of data. 

We will provide sufficient conditions on $S^*$ and $L^*$ so that if $M=S^*+L^*$ and $(\hat{S}, \hat{L})$ is an optimum of the problem
\begin{align}
\label{pb:main}
\min \gamma \|S\|_1+\Omega(L) \quad \text{\st} \quad M=S+L,
\end{align}
then $\hat{S}=S^*$, $\hat{L}=L^*$ and the decompositions of $\hat{L}$ and $L^*$ are the same. Our approach is based on the work of~\citet{chandrasekaran2011rank} but several of  our results and proofs are tighter than the original analysis. \\

We will make the simplifying assumption that the sets $I_i$ are disjoint, so that part of the analysis decomposes on each of the blocks $I_i \times I_i$ and on the complement of $\bigcup_i I_i \times I_i$.
\begin{assumption}
\label{as:disjoint}
Let $L^*=\sum_{i} s_i u^i {u^i}^\top$, with $\supp(u^i)=I_i$. We assume that the sets $I_i$ are all disjoint and that $|I_i|=k$.
\end{assumption}
In particular, this assumption entails implicitly that if $L^*=\sum_{i} L_i^*$ with $L_i$ the component supported on block $I_i\times I_i$, then $L_i^*$ is of rank one.\\

In order to be able to decompose $M$ as $M=S^*+L^*$, we need to make assumptions on $S^*$ and $L^*$.  Indeed, there are a number of scenarios in which the possible decompositions of $M$ into \emph{psd rank-one} matrices  and \emph{sparse} parts may not be uniquely defined. For instance if the low-rank matrix is itself sparse, or the sparse part not sufficiently sparse, the decomposition might not be identifiable.\\
Two quantities are key: let $\tauu$ be an upper bound such that
$$\tauu \geq k \max_{i \in \itgset{r}} \|u^i\|_{\infty}^2 \quad \text{and} \quad  k_0:=\max_{i}\|S^*_{i\cdot}\|_0, \: \text{where} \:\|S^*_{i\cdot}\|_0:=\big |\{j \mid S^*_{ij} \neq 0\} \big|.$$
On one side, $k_0$ measures the sparsity of $S^*$, it is the maximal degree of the graph on the observed variables. $S^*$ will be sufficiently sparse if $k_0\ll k.$ On the other, $\tauu\geq 1$ measures the flatness (vs spikiness) of $L^*$: again $L^*$ be sufficiently flat if $\tauu \ll k.$

The interpretation behind an assumption of the form $k_0\ll k$  is that, in the precision matrix of the joint distribution over observed and latent variables, all the neighbors of a latent node $i$ form a clique, and in this clique,
each node has $k$ neighbors. If $k_0\ll k$, then the connections explained by this clique cannot be attributed to individual connections between observed nodes, and can only be attributed to the presence of a latent variable. 

Second, the interaction strength of each hidden node $i$ with its observed neighbors in the graph should be of a similar order of magnitude. 
Symmetrically, an assumption of the form $\tauu \ll k$ just imposes an upper bound on  the interaction strength between a hidden node and its observed neighbors. Indeed, if latent node $i$ had very strong interactions with $j$ and $j'$, in the marginalized graph the interaction between $j$ and $j'$ induced by $i$ might be difficult to tell appart from a direct interaction between $j$ and $j'$.\\

In the next theorems, we will either assume that $\alpha:=k_0 \sqrt{\frac{2\tauu}{k}}$, which combines both quantities, is small, or, that $k_0\leq \frac{1}{7} \sqrt{k}$ and $\tauu\leq 2.$

To be able to position our general result w.r.t. to the literature, we first state a counterpart for the decomposition into a sparse and a (non necessarily) sparse rank-one p.s.d.\ matrix, which is very close but improves Corollary 3 of \citet{chandrasekaran2011rank}.

\begin{theorem}[sparse + one rank-one block]
\label{theo:chand}
Let $M=S^*+L^*.$

Consider the optimization problem 
\begin{equation}
\label{eq:l1_plus_tr}
\min \gamma \|S\|_1+{\tr}(L) \quad \text{\st} \quad M=S+L,\quad L \succeq 0.
\end{equation}

Under the assumption that $L^*$ is p.s.d., rank one and symmetric,
if, for the pair $(S^*,L^*)$ the quantities $k_0,p$ and $\tauu$ are such that $\alpha:=k_0\sqrt{\frac{2\tauu}{p}}$ satisfies $\alpha+\frac{\alpha^2}{2k_0}<\frac{1}{3},$ where $p$ is the ambient dimension,
there exist values of $\gamma$, such that 
\begin{equation}
\label{eq:gamma_interval}
\frac{\bar{\tau}}{p}\frac{ 1}{1-3\alpha} \leq \gamma < \frac{1}{k_0} \frac{1-k_0 \tauu/p}{1 + \alpha},
\end{equation}
(i.e.\ the interval is non empty), and, for any such value of $\gamma$, the pair $(S^*,L^*)$ is the unique optimum of problem \eqref{pb:main}.\end{theorem}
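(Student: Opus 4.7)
The proof is a dual-certificate argument based on the KKT conditions of \eqref{eq:l1_plus_tr}. Writing the Lagrangian with a matrix multiplier $Q$ for the equality constraint, optimality of $(S^*, L^*)$ is equivalent to the existence of a symmetric $Q$ that is simultaneously a subgradient of $\gamma\|\cdot\|_1$ at $S^*$ (so $Q_{ij}=\gamma\sign(S^*_{ij})$ on $\supp(S^*)$ and $|Q_{ij}|\le\gamma$ elsewhere) and of $\tr(\cdot)+\iota_{\{L\succeq 0\}}$ at $L^*=suu^\top$. The latter is equivalent to $Qu=u$ together with $I-Q\succeq 0$. Uniqueness of the optimum will then follow by exhibiting a $Q$ in the \emph{relative interior} of both subdifferentials, i.e.\ with $|Q_{ij}|<\gamma$ strictly off $\supp(S^*)$ and $I-Q$ having $\mathrm{span}(u)$ as its exact null space.

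\textbf{Constructing the certificate.} I would write $Q=\gamma\sign(S^*)+E$, with $E$ symmetric and supported on the complement of $\supp(S^*)$. The constraint $Qu=u$ then becomes the linear equation $Eu=y$, where $y:=u-\gamma\sign(S^*)u$. Since $u$ is typically dense, a natural symmetric ansatz $E_0:=yu^\top+uy^\top-(u^\top y)uu^\top$ satisfies $E_0u=y$ but will generally have entries on $\supp(S^*)$; I would therefore add a symmetric correction $F$ with $Fu=0$, so that $E:=E_0+F$ vanishes on $\supp(S^*)$. The magnitudes of $E_0$ and of the correction $F$ are controlled by the two hypotheses: $\|u\|_\infty\le\sqrt{\tauu/p}$ and the row-$k_0$-sparsity of $\sign(S^*)$ give $\|\sign(S^*)u\|_\infty\le k_0\sqrt{\tauu/p}$, hence $\|y\|_\infty\le(1+\gamma k_0)\sqrt{\tauu/p}$, and $\|\sign(S^*)\|_{\op}\le k_0$, hence $\|y\|_2\le 1+\gamma k_0$.

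\textbf{Verifying the two sides of the interval.} Two bounds must be checked. The $\ell_1$ condition $\|E\|_\infty<\gamma$ translates, after a short calculation using the above estimates, into a lower bound of the form $\gamma\ge\frac{\tauu/p}{1-3\alpha}$, which is exactly the left endpoint of \eqref{eq:gamma_interval}. The condition $I-Q\succeq 0$ with null space $\mathrm{span}(u)$ reduces to bounding the operator norm of $Q-uu^\top$ on $u^\perp$ strictly below $1$; combining $\|\gamma\sign(S^*)\|_{\op}\le\gamma k_0$ with a direct estimate of $\|E\|_{\op}$ through $\|y\|_2$ yields the upper endpoint $\gamma<\frac{1-k_0\tauu/p}{k_0(1+\alpha)}$. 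The hypothesis $\alpha+\frac{\alpha^2}{2k_0}<\frac{1}{3}$ is precisely what makes the two endpoints compatible, so that the interval in \eqref{eq:gamma_interval} is nonempty.

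\textbf{Uniqueness and main obstacle.} With a strict dual certificate in hand, a standard subgradient argument shows that any competing optimum $(\tilde S,\tilde L)=(S^*+\Delta,L^*-\Delta)$ with $\tilde L\succeq 0$ must satisfy both $\supp(\Delta)\subseteq\supp(S^*)$ and $\Delta\in\T(L^*)=\{uv^\top+vu^\top:v\in\RR^p\}$; a transversality argument for these two subspaces, implied by the same smallness of $\alpha$, then forces $\Delta=0$. The hardest step is the construction of $E$: it must \emph{simultaneously} satisfy the linear constraint $Eu=y$, vanish on $\supp(S^*)$, and be small in both $\ell_\infty$ and operator norms. The balance between these two norms, mediated by $k_0$ and $\tauu/p$, is exactly what is captured by $\alpha=k_0\sqrt{2\tauu/p}$, and what explains both the assumption $\alpha+\alpha^2/(2k_0)<1/3$ and the specific shape of the bounds on $\gamma$ in \eqref{eq:gamma_interval}.
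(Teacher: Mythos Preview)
Your overall dual-certificate strategy is correct and matches the paper's: one needs a symmetric $Q$ with $\mathcal{P}_{\T_0}(Q)=\gamma\sign(S^*)$, $Qu=u$, strict $\ell_\infty$-bound off $\supp(S^*)$, and strict operator-norm bound on $u^\perp$; uniqueness then follows from transversality $\T_0\cap\T_1=\{0\}$, exactly as in the paper's Proposition~\ref{lem:uniqueopt}. The paper in fact treats Theorem~\ref{theo:chand} as the special case $k=p$ of the machinery built for Theorem~\ref{theo:two}: with a single block of full support, condition~\ref{cond:om_3} is vacuous, and Lemma~\ref{eq:valid_gamma_interval} (itself resting on Lemmas~\ref{lem:inverse}--\ref{lem:bounds_projq}) directly yields the interval~\eqref{eq:gamma_interval}.

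The genuine gap in your proposal is the correction $F$. You write $E=E_0+F$ with $Fu=0$ and $E$ vanishing on $\supp(S^*)$, but you neither specify $F$ nor bound it, and your ``short calculation'' for $\|E\|_\infty$ and $\|E\|_{\op}$ only controls $E_0$. The constraint on $F$ is $\mathcal{P}_{\T_0}(F)=-\mathcal{P}_{\T_0}(E_0)$ with $F\in\T_1^\perp$; under transversality this system is solvable but \emph{underdetermined}, so without a concrete choice of $F$ and quantitative control of its norms you cannot derive either endpoint of~\eqref{eq:gamma_interval}. This is precisely the step the paper handles differently: instead of an ansatz-plus-correction, it searches for $Q$ in the \emph{determined} subspace $\mathrm{span}(\T_0,\T_1)$, writes the two projection constraints as the $2\times 2$ block system~\eqref{eq:sys_eqs}, and inverts it explicitly via the Neumann series $(I-P_{\T_0}P_{\T_1})^{-1}$ (Lemma~\ref{lem:inverse}), valid because $\zeta_{0\to 1}\zeta_{1\to 0}\le\alpha<1$. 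The resulting closed-form expressions for $\varepsilon_0,\varepsilon_1$ are what yield the sharp bounds of Lemma~\ref{lem:bounds_projq} and hence the exact constants in~\eqref{eq:gamma_interval}. Your scheme could be completed by iterating the correction (cancel the $\T_0$-part, which creates a new $\T_1$-error, etc.), but summing that iteration \emph{is} the Neumann series the paper uses, so the missing work is not a detail but the core of the argument.
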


The result we obtained here provides an improvement over the main result in~\citet{chandrasekaran2011rank} as stated in Corollary 3. Indeed, in our setting (a single rank one component), the quantities appearing in that result can be computed: $\text{deg}_{\max}(S^*)=k_0$ and $\text{inc}(L^*)=\sqrt{\frac{\tauu}{k}}.$ Thus Corollary 3 of~\citet{chandrasekaran2011rank} requires  $\alpha< \frac{\sqrt{2}}{12}$ when $\alpha<\frac{2}{7}$ is sufficient in our case, and even smaller values of $\alpha$ are allowed for sufficiently large $k_0$; also, the interval allowed for $\gamma$ in~\citet{chandrasekaran2011rank} is, with our notations, $\Big ( 2 \sqrt{\frac{\tauu}{k}} (1-8 \alpha/\sqrt{2})^{-1},  \frac{1}{k_0}(1-6 k_0\sqrt{\frac{\tauu}{k}}) \Big),$ where both the upper bound and the lower bound have a dependence in $\sqrt{\frac{\tauu}{k}}$, while we obtain a dependance in $\frac{\tauu}{k}.$
Given that~\citet{chandrasekaran2011rank} show that there always exist a value of $\gamma$ that is valid under the assumption that $\alpha< \frac{\sqrt{2}}{12}$, this improvement might seem minor, but since $\gamma$ depends on quantities that are not known in practice and need to found by trial and error, knowing that a larger interval is allowed might help finding a correct value of $\gamma$ in practice.
Note that this improvement is not due to the fact that we restricted ourselves to the rank one case, but to the use of sharper \emph{incoherence measures} (see Definition~\ref{def:xxi}) and improvements in the bounding scheme for the subgradients.

In fact, the possibility of choosing a value of $\gamma$ which is an order of magnitude smaller is crucial for the theorem that we present next, and which extends this type of result to the recovery of several sparse p.s.d.\ rank one terms, using the gauge $\Omega.$

\begin{theorem}[sparse + multiple \emph{sparse} rank-one blocks]\vspace{1mm}
\label{theo:two}
Let $\alpha := k_0\sqrt{2\tauu/k}$ and let $\mu:=(1-3\alpha)^{-1}$. Under Assumption \ref{as:disjoint}, if $k_0\leq \frac{1}{7}\sqrt{k},$ and if there exists $\kappa>16 \mu$ and $\taul,\tauu>0$ such that $\taul+\tauu=2$, with 
\begin{equation}
\label{eq:tau_constraints}
\kappa \tauu^2 \frac{k_0}{k} < \taul \leq 1 \quad \text{and} \quad \forall j\in I_i,  \quad \frac{\underline{\tau}}{k}\leq (u_j^i)^2 \leq \frac{\bar{\tau}}{k},
\end{equation}
then there exists a constant $C>0$ such that if $k>Ck_0$, the pair $(S^*,L^*)$ is the unique optimum of problem \eqref{pb:main} for a regularization parameter $\gamma := \mu\frac{\tauu}{k}$.
\end{theorem}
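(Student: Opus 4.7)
The plan is to certify the optimality and uniqueness of $(S^*,L^*)$ for \eqref{pb:main} by constructing a dual certificate $Q$ that lies simultaneously in $\gamma\partial\|\cdot\|_1(S^*)$ and in $\partial\Omega(L^*)$, with strict inequalities off the relevant supports. By Lemma~\ref{lem:LMO} and the atomic characterization of $\Omega$, and since Assumption~\ref{as:disjoint} makes $L^* = \sum_i s_i u^i u^{i\top}$ an expansion into rank-one atoms with disjoint block supports (so that $\Omega(L^*) = \sum_i s_i$), the subgradient condition for $\Omega$ reduces to the eigenvector equations $Q_{I_iI_i} u^i = u^i$ for each block $i$, together with the global sparse-eigenvalue bound $\lambda^+_{\max}(Q_{II}) \leq 1$ for every $I \in \mathcal{G}^p_k$. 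Uniqueness will follow from strict complementarity: $|Q_{jl}| < \gamma$ off $\supp(S^*)$ and $\lambda^+_{\max}(Q_{II}) < 1$ for $I$ not equal to any $I_i$.

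On each block $I_i \times I_i$, the identity $M_{I_iI_i} = S^*_{I_iI_i} + s_i u^i u^{i\top}$ is exactly the sparse + rank-one p.s.d.\ decomposition problem in ambient dimension $k$ handled by Theorem~\ref{theo:chand} (noting that $\Omega$ coincides with $\tr + \iota_{\{\cdot\succeq 0\}}$ on a $k\times k$ block). I would first check the hypotheses of Theorem~\ref{theo:chand} with the local parameters $k_{0,i} \leq k_0$ and $\tauu_i \leq \tauu$: since $k_0 \leq \sqrt{k}/7$ and $\tauu \leq 2$ give $\alpha \leq 2/7$, the condition $\alpha + \alpha^2/(2k_0) < 1/3$ is satisfied; a direct substitution then shows that $\gamma = \mu\tauu/k$ matches the lower endpoint of the valid interval~\eqref{eq:gamma_interval} (with $p$ replaced by $k$) and lies inside it. The theorem yields local certificates $Q^{(i)} \in \RR^{k \times k}$ with the required $\ell_1$ sign pattern on $I_i \times I_i$, the eigenvector relation $Q^{(i)} u^i = u^i$, and $\lambda_{\max}(Q^{(i)}) \leq 1$. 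Revisiting the construction in the proof of Theorem~\ref{theo:chand} and using the uniform lower bound $(u_j^i)^2 \geq \taul/k$, a quantitative spectral gap of order $\taul/k$ between the top and second eigenvalues of $Q^{(i)}$ can also be extracted; this extra input is what makes the refined bounds in Theorem~\ref{theo:chand} (and not merely Corollary~3 of \citet{chandrasekaran2011rank}) crucial here.

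These local pieces are stitched into a global candidate $Q$ by setting $Q_{I_iI_i} := Q^{(i)}$, $Q_{jl} := \gamma\,\sign(S^*_{jl})$ for cross-block entries in $\supp(S^*)$, and $Q_{jl} := 0$ on the remaining cross-block positions. The $\ell_1$ subgradient conditions then hold by construction, with strict inequality outside $\supp(S^*)$, and the equations $Qu^i = u^i$ are automatic since each $u^i$ is supported in a single block and $Q_{I_iI_i}=Q^{(i)}$.

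The main obstacle is verifying the global polar-gauge bound $\lambda^+_{\max}(Q_{II}) \leq 1$ for every $I \in \mathcal{G}^p_k$, strictly unless $I = I_i$ for some $i$. Writing $J_i := I \cap I_i$, the matrix $Q_{II}$ decomposes into diagonal sub-blocks $Q^{(i)}_{J_iJ_i}$, each with eigenvalues at most $1$ (and strictly less than $1$ in directions orthogonal to $u^i$ by the spectral-gap argument above), plus off-diagonal sub-blocks supported on $\supp(S^*)$, with at most $k_0$ nonzero entries per row, each of magnitude $\gamma = \mu\tauu/k$. Bounding the operator norm of the off-diagonal part by $k_0\gamma = \mu\tauu k_0/k$ and applying Weyl's inequality, $\lambda^+_{\max}(Q_{II})$ is controlled by the largest diagonal-block eigenvalue plus this perturbation. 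When $I$ is not equal to any $I_i$, at least one $J_i$ is a strict subset of $I_i$, so the relevant diagonal-block eigenvalue bound drops by a quantity of order $\taul/k$, which must dominate the off-block perturbation of order $\mu\tauu^2 k_0/k$: this is exactly the content of the hypothesis $\kappa\tauu^2 k_0/k < \taul$ with $\kappa > 16\mu$, and of the threshold $k > Ck_0$ for a suitable $C(\kappa, \mu, \tauu)$. Strict complementarity on both the $\ell_1$ and the polar-gauge sides then closes the uniqueness argument.
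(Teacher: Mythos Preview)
Your overall architecture matches the paper's: build a dual certificate $Q$ blockwise by solving, on each $I_i\times I_i$, the rank-one problem of Theorem~\ref{theo:chand} at $\gamma=\mu\tauu/k$, extend by $\gamma\,\sign(S^*)$ on cross-block entries of $\supp(S^*)$ and by $0$ elsewhere, and then check the global conditions of Proposition~\ref{lem:uniqueopt}. The local conditions \ref{cond:l1_a}, \ref{cond:l1_b}, \ref{cond:om_1}, \ref{cond:om_2} indeed follow from the blockwise construction, exactly as in the paper (Lemmas~\ref{lem:inverse}--\ref{eq:valid_gamma_interval}).

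The gap is in your treatment of the global condition \ref{cond:om_3}, i.e.\ $\lambda_{\max}^+(Q_{JJ})<1$ for every $J\in\mathcal{G}^p_k\setminus\{I_1,\dots,I_r\}$. Your argument bounds the off-diagonal part of $Q_{JJ}$ in operator norm by $\gamma k_0=\mu\tauu k_0/k$ and then invokes Weyl. But consider $J$ with $|J\cap I_1|=k-1$: the drop in the top eigenvalue of the $I_1$-subblock is of order $\taul/k$, while your Weyl perturbation is $\mu\tauu k_0/k$. For the drop to dominate you would need $\taul>\mu\tauu k_0$, whereas the theorem only assumes $\taul>\kappa\tauu^2 k_0/k$; these differ by a factor of $k$, so the inequality you assert (``this is exactly the content of the hypothesis'') is off by an order of magnitude and does not close.

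The paper's proof of this step (Proposition~\ref{prop:end_proof}) avoids Weyl in favor of the two-by-two determinant criterion $\big(1-\lambda^{(1)}\big)\big(1-\tilde\lambda^{(2)}\big)>b^2$, and, crucially, uses a \emph{size-dependent} bound on the off-diagonal block, namely $\|Q_{J\cap I_1,\,J\cap I_1^c}\|_{\op}\leq\gamma\sqrt{k_0 k\min(1-x_1,x_0)}$ from \eqref{eq:gen_gamma_k0}, which shrinks with the number $k(1-x_1)$ of indices of $J$ outside $I_1$. With these two refinements, the required inequality becomes (for small $z=1-x_1$) roughly $\taul z\cdot c>\gamma^2 k_0 k z=\mu^2\tauu^2 k_0 z/k$, i.e.\ $\taul>\text{const}\cdot\mu^2\tauu^2 k_0/k$, which is precisely what $\kappa>16\mu$ is calibrated to deliver. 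The paper then needs a genuine case analysis on the size of $x_1$ (four regimes) to make this uniform; your one-line perturbation argument cannot substitute for it.
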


Note that $\tauu$ is essentially the same upper bound as before, except that it is now tied with a lower bound $\taul$; these constrained are however relaxed when $C$ is sufficiently large, and $\taul$ can then be chosen sufficiently small to allow for all lower bounds to hold.

\subsection{An informal motivation for the tangent space based analysis}
As first discussed in \citet{chandrasekaran2011rank} and later in \citet{negahban2012unified}, specific subspaces play a natural role in the analysis of this type of decomposition problem. 

Consider first a simple sparse $\scriptstyle +$ low-rank decomposition of a matrix $M=S^*+L^*$.
If the decomposition is unique, then by definition there is no perturbation $(\Delta S,\Delta L)$ so that (a) $S^*+\Delta S$ has the same sparsity pattern as $S^*$, (b) $L^*+\Delta L$ is of rank $r$, and (c) $M=S^*+\Delta S+L^*+\Delta L$. Note that we then have $\Delta S+\Delta L=0$. We continue this discussion informally to provide intuition. 
A particular case occurs is if this equality holds for an infinitesimal pair $(\Delta S,\Delta L)$, in which case $\Delta S$ and $\Delta L$ must each belong respectively to a certain tangent set: indeed, since $L^*+\Delta L$ belongs to the manifold of matrices of rank $k$, then in the limit of small $\Delta L$, it belongs to the tangent space to the manifold of rank $k$ matrices at $L^*$, a space which we will denote $\T_r(L^*)$; 
for $S^*$ the assumption that $S^*$ has $s$ non zero coefficients is equivalently reformulated as the constraint that $S$ belong the union of all the subspaces spanned by $s$ elements of the canonical basis, which is a union of manifolds. 
In particular, if $S^*$ has exactly $s$ non zero coefficients, this fixes the support, which has to contain the support of $\Delta S$. 
Since $S^*$ is in a manifold which is simply a linear subspace, then $\Delta S$ must belong to that subspace as well, which we can denote $\T_s(S^*)$ and call the tangent space for $S^*$. 
To exclude the existence of non trivial pairs $(\Delta S,\Delta L)$ such that $\Delta S+\Delta L=0$, it seems relevant to impose that $\T_s(S^*)\cap \T_r(L^*)=\{0\}$, i.e.\ the subspaces are in~\emph{direct sum}. If this equality holds,~\citet{chandrasekaran2011rank} say that the subspaces are \emph{transverse}. 

The previous discussion is non-rigorous because we reasoned informally about infinitesimal $(\Delta S,\Delta L)$. What \citet{chandrasekaran2011rank}
have shown is that if we solve $\min_{(S,L)} \|S\|_1+\|L\|_{
\rm tr} \hspace{1.5mm} \st \hspace{1.5mm} M=S+L$, then, for a solution $(\hat{S},\hat{L})$, the first order optimality conditions of this optimization problem naturally decompose onto $\T_s(\hat{S})$, $\T_r(\hat{L})$ and their orthogonal complements. This type of decomposition of optimality condition on a tangent space and its complement motivated the introduction the term \emph{decomposable norm} in \citet{negahban2012unified}.

In our case, $L$ is not simply low rank, it is a sum of p.s.d.\ matrices $L_i$ of rank $r_i$ each with support in $I_i \times I_i$. We will therefore have to consider the tangent subspaces to the manifolds associated with each $L_i$. 

\subsection{Definition of tangent spaces and associated projections}

For a symmetric sparse matrix $S$, let $\T_s(S)$ be the tangent space at $S$ with respect to the set of symmetric sparse matrices:
$$\T_s(S)=\{M \in \RR^{p \times p}\mid  \: M=M^\top, \: \supp(M)\subset \supp(S)\}.$$
Next, let $\T_I(u)$ be the tangent space at $uu^{\top}$ to the  manifold of rank one matrices, restricted to the space of matrices with support in $I\times I$. 
If we first define $\bar{\T}_I$, the subspace of matrices with support included in $I\times I$ with  $$\bar{\T}_I:=\{M \in \RR^{p \times p}\mid  M=M^\top, \: \supp(M)\subset I \times I\},$$
then, as in~\citet{chandrasekaran2011rank},  we can express concisely $\T_I(u)$ as 
$$\T_I(u):=\{M \in \bar{\T}_I\mid \: M=uv^{\top}+vu^{\top},\: v\in \RR^{p} \}.$$
Let $\T^c_s(A)$ denote the orthogonal complement of  $\T_s(A)$ in $\RR^{p \times p}$ and $\T^c_I(u)$ denote the orthogonal complement\footnote{Note in particular that it is not the orthogonal complement in the entire space.} of $\T_I(u)$ in $\bar{\T}_I$. 

The projections on the defined subspaces are respectively $\mathcal{P}_{\T_s(A)}(M)=M_{\supp(A)}$ and $\mathcal{P}_{\T_I(u)}(M)=\mathcal{P}_{u}(M_{II})$ with $$\mathcal{P}_u(M):=M-(I-uu^\top)M(I-uu^\top).$$

In order to simplify notations we introduce 
$$\T_0:=\T_s(S^*), \quad \T_i :=\T_{I_i}(u^i),\quad \bar{\T}_i :=\bar{\T}_{I_i}, \quad \bar{\T}_{00}:=\T_0 \cap \text{span}\big((\bar{\T}_i)_{i \in \itgset{r}} \big )^\bot.$$

\subsection{First order optimality conditions}

Since~\eqref{pb:main} is a convex optimization problem, its minima are characterized by first order subgradient conditions. The pair $(S^*,L^*)$ with $L=\sum_i {s_i u^i {u^i}^\top}$ is an optimum of \eqref{pb:main} if and only if an only if there exists a dual $Q$ satisfying first order optimality conditions

\begin{align*}
Q\in \gamma \partial \|.\|_1(S^*) \quad \text{and} \quad Q\in \partial \Omega(L^*).
\end{align*}

With the introduced tangent spaces, we state the following proposition that provides sufficient conditions for the existence of a unique optimum of \eqref{pb:main}.

\begin{proposition}
\label{lem:uniqueopt}
The pair $(S^*,L^*)$  is the unique optimum of \eqref{pb:main} if
\begin{enumerate}[label=\textbf{(T)}, leftmargin=4\parindent]
\item \label{cond:transversality} $\forall i \in \itgset{r},\quad \T_0\cap \T_i = \{0\}$, 
\end{enumerate}
and there exists a dual $Q\in\RR^{p\times p}$ such that:
\begin{enumerate}[label=\textbf{(S.\arabic*)}, leftmargin=4\parindent]
\item \label{cond:l1_a} $\mathcal{P}_{\T_0}(Q)=\gamma \, \sign(S^*)$
\item \label{cond:l1_b} $\|\mathcal{P}_{\T^c_0}(Q)\|_{\infty}<\gamma$
\end{enumerate}
\begin{enumerate}[label=\textbf{(L.\arabic*)}, leftmargin=4\parindent]
\item \label{cond:om_1} $\forall i \in \itgset{r},\quad\mathcal{P}_{\T_i}(Q)=u^i{u^i}^\top$
\item \label{cond:om_2} $\forall i \in \itgset{r},\quad \lambda_{\max}^+ \big (\P_{\T_i^c}(Q) \big )<1$
\item \label{cond:om_3} $\forall J \in \mathcal{G}^p_k \backslash \{I_1,\ldots, I_r\}, \quad \lambda_{\max}^+ (Q_{\!J\!J})< 1$
\end{enumerate}
\end{proposition}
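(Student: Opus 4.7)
The plan is to split the claim into (i) existence of a minimizer at $(S^*,L^*)$ and (ii) uniqueness. Part (i) follows from a dual certificate argument: if we verify $Q \in \gamma\,\partial\|\cdot\|_1(S^*)$ and $Q\in\partial\Omega(L^*)$, then for any feasible $(S^*+\Delta,L^*-\Delta)$ adding the two subgradient inequalities cancels $\langle Q,\Delta\rangle$ and delivers $\gamma\|S^*+\Delta\|_1+\Omega(L^*-\Delta)\geq\gamma\|S^*\|_1+\Omega(L^*)$.

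Verifying $Q\in\gamma\,\partial\|\cdot\|_1(S^*)$ is the content of \ref{cond:l1_a} and \ref{cond:l1_b}. Verifying $Q\in\partial\Omega(L^*)$ requires $\Omega^\circ(Q)\leq 1$ together with Fenchel-Young tightness $\langle Q,L^*\rangle=\Omega(L^*)$. By Lemma~\ref{lem:LMO}, the first reduces to $\lambda_{\max}^+(Q_{JJ})\leq 1$ for every $J\in\mathcal{G}^p_k$. For $J\notin\{I_1,\ldots,I_r\}$ this is \ref{cond:om_3}. For $J=I_i$, condition \ref{cond:om_1} together with the orthogonal decomposition $Q_{I_iI_i}=\P_{\T_i}(Q)+\P_{\T_i^c}(Q)$ (interpreted inside $\bar{\T}_i$) gives $Q_{I_iI_i}=u^i{u^i}^\top+\P_{\T_i^c}(Q)$; the key observation is that by the very definition of $\T_i^c$ inside $\bar{\T}_i$, any matrix in $\T_i^c$ annihilates $u^i$, so any unit vector $w$ supported in $I_i$, written as $w=\alpha u^i+v$ with $v\perp u^i$, satisfies $w^\top Q_{I_iI_i}w=\alpha^2+v^\top\P_{\T_i^c}(Q)v\leq 1$ by \ref{cond:om_2}, with equality only when $w=\pm u^i$. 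Fenchel-Young tightness is then immediate: \ref{cond:om_1} yields ${u^i}^\top Qu^i=1$, so $\langle Q,L^*\rangle=\sum_i s_i$, and this equals $\Omega(L^*)$ because the decomposition $L^*=\sum_i s_i u^i{u^i}^\top$ provides the upper bound $\Omega(L^*)\leq\sum_i s_i$ while $\Omega^\circ(Q)\leq 1$ provides the matching lower bound via Fenchel-Young.

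For uniqueness, let $(S^*+\Delta,L^*-\Delta)$ be another optimum. Both subgradient inequalities in the sum above must be tight. Tightness in the $\ell_1$ inequality combined with the strict bound \ref{cond:l1_b} forces $\Delta$ to vanish outside $\supp(S^*)$, so $\Delta\in\T_0$. Tightness in the $\Omega$ inequality means $\langle Q,L^*-\Delta\rangle=\Omega(L^*-\Delta)$, so $Q$ is also a subgradient of $\Omega$ at $L^*-\Delta$; for an atomic gauge this forces any optimal atomic decomposition $L^*-\Delta=\sum_a c_a a$ to use only atoms with $\langle Q,a\rangle=1$. The analysis above identifies these active atoms as exactly the $u^i{u^i}^\top$, $i\in\itgset{r}$: any other atom $uu^\top$ with $\|u\|_2=1$, $\|u\|_0\leq k$ either has support inside some $I_i$ but with $u$ not proportional to $u^i$ (excluded by the strict part of the block argument above) or a support not contained in any $I_i$, in which case one can embed $\supp(u)$ into some $J\in\mathcal{G}^p_k\setminus\{I_1,\ldots,I_r\}$ and invoke \ref{cond:om_3}. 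Hence $L^*-\Delta=\sum_i t_i u^i{u^i}^\top$ for some $t_i\geq 0$, and subtracting yields $\Delta=\sum_i(s_i-t_i)u^i{u^i}^\top$. Since the $I_i$ are disjoint, the restriction of $\Delta$ to the block $I_i\times I_i$ lies in $\T_i$, and it also lies in $\T_0$ since $\Delta$ does; by \ref{cond:transversality}, this block vanishes, whence $\Delta=0$.

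The main obstacle is the identification of the active face of the unit ball of $\Omega$, namely that the only atoms $uu^\top$ achieving $\langle Q,uu^\top\rangle=1$ are the $u^i{u^i}^\top$ themselves. This hinges on the fact that $\P_{\T_i^c}(Q)$ annihilates $u^i$ combined with the strict inequalities in \ref{cond:om_2} and \ref{cond:om_3}. Once this characterization is in hand, both the optimality of $(S^*,L^*)$ and the reduction of uniqueness to the purely linear-algebraic condition \ref{cond:transversality} follow without additional difficulty.
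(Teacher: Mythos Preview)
Your proof is correct and takes a genuinely different route from the paper's.

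The paper does not work directly with $\Omega$. Instead, it rewrites $\Omega$ as the infimal convolution $\Omega(L)=\inf\{\sum_I \gamma_I(L^{(I)}):L=\sum_I L^{(I)}\}$ of the block gauges $\gamma_I(M)=\tr(M)+\iota_{\{M\succeq 0\}}+\iota_{\{\supp(M)\subset I\times I\}}$, lifts \eqref{pb:main} to a problem in the variables $(S,(L^{(I)})_{I\in\mathcal{G}^p_k})$, and proves uniqueness for that lifted problem. The argument assumes a second optimum differing by perturbations $(N_0,(N^{(I)})_I)$, chooses for each active block $I_i$ a subgradient $Q_i\in\partial\gamma_{I_i}(L_i^*)$ whose $\T_i^c$ part is polar to $\P_{\T_i^c}(N_i)$, and combines these tailored subgradients with the certificate $Q$ into a single chain of inequalities whose nonnegativity forces each $N^{(I)}$ into the right tangent space; transversality finishes as in your proof.

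Your approach is more direct: you stay with the atomic gauge $\Omega$ and use the standard fact that equality in Fenchel--Young pins any optimal atomic decomposition of $L^*-\Delta$ to the face $\{a:\langle Q,a\rangle=1\}$, which you identify explicitly as $\{u^i{u^i}^\top\}_{i\in\itgset{r}}$ via \ref{cond:om_2}, \ref{cond:om_3}, and the observation that matrices in $\T_i^c$ annihilate $u^i$. This is cleaner for the proposition as stated. The paper's lifted argument, on the other hand, yields a bit more than what Proposition~\ref{lem:uniqueopt} asserts: it shows that the optimal decomposition $L^*=\sum_I L^{(I)}$ in \eqref{eq:omega_conv_inf} is itself unique, not only the pair $(S^*,L^*)$.
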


Note that the optimality condition decompose on the subspaces of matrices with support in the sets $I_i \times I_i$ and in the remaining set of indices, the complement of $\bigcup_i I_i \times I_i$. Indeed, we can write  $Q=\sum_{i=1}^n Q_{I_iI_i}+Q_{0,0}$ where $Q_{0,0}$ is the matrix whose non-zero coefficients are the coefficients of $Q$ that are not indexed by any pair in $\bigcup_{i=1}^r I_i \times I_i$. If $Q \in \text{span}(\T_0,\ldots,\T_r),$ then, we necessarily have $Q_{I_iI_i} \in \text{span}(\T_0,\T_i)$ and if 
$\T_0\cap \T_i=\{0\}$ then $Q_{I_iI_i}$ admits a unique decomposition $Q_{I_iI_i}=Q_i+Q_{i,0}$ with $Q_i \in \T_i$ and $Q_{i,0} \in \T_0 \cap \bar{\T}_i.$
\subsection{Transversality and incoherence conditions}

Since we consider a convex formulation, transversality is not sufficient: we need more than an assumption that $\T_0 \cap \T_i=\{0\}$ for all $i$. In fact, it will be necessary to assume that $\T_0$ and $\T_i$ are not too far from being orthogonal subspaces, a property which is usually called \emph{incoherence} \citep{tropp2004just,candes2009exact,chandrasekaran2011rank}. And furthermore, it will be necessary that elements of one subspace do not have a too large norm for the norm associated w.r.t. to another subspace.

\begin{definition}[Incoherence measures]
\label{def:xxi}
For $i$ in $\itgset{r}$, let 
\begin{eqnarray*}
\xxi_{i\rightarrow 0}&=&\max \{ {\|M\|_\infty} \mid M \in \T_i, \: \|M\|_{\op}\leq 1\},\\
\xxi_{0\rightarrow i}&=&\max \{ {\|Z\|_\op} \mid Z \in \T_0, \: \|Z\|_{\infty}\leq 1\},\\
\xxi_{i\rightarrow 0}'&=&\max \{ {\|\mathcal{P}_{\T_0}(M)\|_\infty} \mid M \in \T_i, \: \|M\|_{\op}\leq 1\},\\
\xxi_{0\rightarrow i}'&=&\max \{ {\|\mathcal{P}_{\T_i}(Z)\|_\op} \mid Z \in \T_0, \: \|Z\|_{\infty}\leq 1\}.\\
\end{eqnarray*}
\end{definition}

Note that by definition $\xxi'_{i\rightarrow 0} \leq \xxi_{i\rightarrow 0}$ and 
$\xxi'_{0\rightarrow i} \leq 2 \xxi_{0\rightarrow i}$. For this reason \citet{chandrasekaran2011rank} only introduced quantities of the type $\xxi_{i\rightarrow j}$. However, given that they involve the projection of one subspace on another, the quantities $\xxi'_{i\rightarrow j}$ are the ones that really capture that the subspaces are incoherent, whereas $\xxi_{i\rightarrow j}$ is an measure of incoherence between a subspace and a norm. The quantity $\xxi'_{i\rightarrow j}$ can be much smaller than 
$\xxi_{i\rightarrow j}$, so the distinction is useful.

\begin{lemma}[Bounds on $\xxi$]
\label{lem:bounds_xxi}
$$
\xxi_{i\rightarrow 0}' \leq \xxi_{i\rightarrow 0} \leq  \sqrt{\frac{2\bar{\tau}}{k}}, 
\quad \xxi'_{0\rightarrow i}\leq 2 k_0 \sqrt{\frac{k_0 \bar{\tau}}{k}}  
\quad \text{and} \quad\xxi'_{0\rightarrow i}\leq \xxi_{0\rightarrow i} \leq k_0.
$$
\end{lemma}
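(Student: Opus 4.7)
The plan is to dispatch the four inequalities in order of increasing difficulty; two of them are bookkeeping and two require real work.

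The two immediate bounds are: $\xxi'_{i\to 0}\le \xxi_{i\to 0}$, because $\P_{\T_0}$ acts by zeroing out entries outside $\supp(S^*)$ and hence cannot increase $\|\cdot\|_\infty$ pointwise, whence the inequality on suprema; and $\xxi_{0\to i}\le k_0$, because for symmetric $Z\in\T_0$ with $\|Z\|_\infty\le 1$ the row-sum estimate gives $\|Z\|_\op\le \max_j\|Z_{j\cdot}\|_1\le k_0\|Z\|_\infty\le k_0$.

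For $\xxi_{i\to 0}\le\sqrt{2\tauu/k}$ I parametrise $M\in\T_i$ as $M=uv^\top+vu^\top$ with $u=u^i$ unit-norm, decompose $v=\alpha u+w$ with $w\perp u$, and read off in the basis $\{u,\hat w\}$ that $\|M\|_\op=|\alpha|+\sqrt{\alpha^2+\|w\|^2}$; the constraint $\|M\|_\op\le 1$ then becomes $|\alpha|\le \tfrac12$ together with $\|w\|^2\le 1-2|\alpha|$. Expanding $M_{jl}=2\alpha u_j u_l + u_j w_l + w_j u_l$, I maximise the linear-in-$w$ part over $w\perp u$ (so that $|\langle w,\, u_l e_j+u_j e_l\rangle|\le \sqrt{1-2|\alpha|}\sqrt{u_j^2+u_l^2-4u_j^2u_l^2}$), and then apply Cauchy--Schwarz in the form $(aB+cD)^2\le(a^2+c^2)(B^2+D^2)$ to get $M_{jl}^2\le(4\alpha^2-2|\alpha|+1)(u_j^2+u_l^2-3u_j^2u_l^2)$; the prefactor is bounded by $1$ on $[0,\tfrac12]$, and $\|u\|_\infty^2\le\tauu/k$ closes the bound.

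For the two bounds on $\xxi'_{0\to i}$ the starting point is the explicit formula $\P_u(Z_{I_iI_i}) = \alpha uu^\top+uw^\top+wu^\top$ with $\alpha=u^\top Z u$ and $w=(I-uu^\top)Zu$, which gives $\|\P_u(Z_{I_iI_i})\|_\op=\tfrac12(|\alpha|+\sqrt{\alpha^2+4\|w\|^2})$. For the scaling bound $\xxi'_{0\to i}\le 2k_0\sqrt{k_0\tauu/k}$, I bound $|\alpha|$ by counting at most $kk_0$ entries of $\supp(Z)\cap I_i\times I_i$, each contributing at most $\|u\|_\infty^2\le\tauu/k$, and bound $\|Zu\|^2$ by a row-wise Cauchy--Schwarz, $\|Zu\|^2\le k_0\sum_l u_l^2\sum_j Z_{jl}^2$, combined with $|u_l|^2\le\tauu/k$ on the $k$-element support of $u$; substitution into the projection-norm formula yields the target. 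The companion bound $\xxi'_{0\to i}\le\xxi_{0\to i}$ is \emph{not} a pointwise consequence of non-expansiveness of $\P_u$ (which can fail), but an inequality of suprema: optimising the projection-norm expression under the joint constraint $\alpha^2+\|w\|^2=\|Zu\|^2\le\|Z\|_\op^2\le \xxi_{0\to i}^2$ shows that the supremum over the class of admissible $Z$ is controlled by $\xxi_{0\to i}$. The main obstacle is keeping the constants in the first $\xxi'_{0\to i}$ bound tight---especially the factor of $2$---which requires careful balancing of the two Cauchy--Schwarz steps against each other.
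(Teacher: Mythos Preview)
Your arguments for $\xxi'_{i\to 0}\le\xxi_{i\to 0}\le\sqrt{2\tauu/k}$ and $\xxi_{0\to i}\le k_0$ are correct. The paper's route for $\xxi_{i\to 0}$ is shorter (it just shows $\|v\|_2\le 1$ from $|u^\top M\,v/\|v\||\le 1$ and then bounds $|u_k v_l+u_l v_k|\le\sqrt{2}\,\|u\|_\infty\|v\|$), while your row-sum bound for $\xxi_{0\to i}$ is actually cleaner than the paper's Cauchy--Schwarz step.

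The two $\xxi'_{0\to i}$ bounds, however, do not follow from what you wrote. For $\xxi'_{0\to i}\le 2k_0\sqrt{k_0\tauu/k}$: your row-wise Cauchy--Schwarz gives $\|Zu\|^2\le k_0\sum_l u_l^2\sum_j Z_{jl}^2$, and combining $u_l^2\le\tauu/k$ with $\sum_j Z_{jl}^2\le k_0$ over the $k$ indices in $I_i$ yields only $\|Zu\|^2\le k_0^2\tauu$; substituting this and $|\alpha|\le k_0\tauu$ into $\tfrac12\bigl(|\alpha|+\sqrt{\alpha^2+4\|w\|^2}\bigr)$ produces a bound of order $k_0$ --- the required factor $\sqrt{k_0/k}$ never appears. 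For $\xxi'_{0\to i}\le\xxi_{0\to i}$: the maximum of $\tfrac12\bigl(|\alpha|+\sqrt{\alpha^2+4\|w\|^2}\bigr)$ subject to $\alpha^2+\|w\|^2\le R^2$ is $2R/\sqrt{3}$ (attained at $\alpha=R/\sqrt{3}$), not $R$, so your supremum argument delivers only $\xxi'_{0\to i}\le \tfrac{2}{\sqrt{3}}\,\xxi_{0\to i}$. These gaps cannot be patched by tightening constants: the paper's own derivation of the first bound relies on the step $\sqrt{\sum_{j,j'}\delta_{jj'}}\le k_0$, i.e.\ $|\supp(S^*)|\le k_0^2$, which is not implied by row-sparsity $k_0$, and both inequalities admit counterexamples. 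With $k=16$, $k_0=2$, $u^i=\tfrac14(1,\ldots,1)^\top$, and $Z$ block-diagonal with eight all-ones $2{\times}2$ blocks, one gets $\|\P_{\T_i}(Z)\|_{\op}=2>\sqrt{2}=2k_0\sqrt{k_0\tauu/k}$; with $k=3$, $u^i=\tfrac{1}{\sqrt{3}}(1,1,1)^\top$, and $Z$ the path-graph adjacency matrix (so $k_0=2$), one gets $\|\P_{\T_i}(Z)\|_{\op}=(2+\sqrt{6})/3>\sqrt{2}=\xxi_{0\to i}$.
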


We then have

\begin{lemma}[Transversality]
\label{lem:transversality}
Let $\alpha:=k_0\sqrt{\frac{2\tauu}{k}}$. If $\alpha<1,$ then, for all $i \in\itgset{r}$, $\T_0\cap \T_i = \{0\}$.
\end{lemma}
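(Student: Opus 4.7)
The plan is to derive the conclusion directly by contradiction, chaining the two incoherence bounds from Lemma \ref{lem:bounds_xxi}. Suppose $M \in \T_0 \cap \T_i$ with $M \neq 0$. I would first use that $M \in \T_0$: by the bound $\xxi_{0 \to i} \leq k_0$, any symmetric matrix $Z$ supported on $\supp(S^*)$ with $\|Z\|_\infty \leq 1$ satisfies $\|Z\|_\op \leq k_0$; by positive homogeneity, applied to $M$, this yields
\[
\|M\|_\op \leq k_0 \, \|M\|_\infty.
\]

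Next I would use that $M \in \T_i$: the bound $\xxi_{i \to 0} \leq \sqrt{2\tauu/k}$ means that any $M \in \T_i$ with $\|M\|_\op \leq 1$ has $\|M\|_\infty \leq \sqrt{2\tauu/k}$, so by homogeneity
\[
\|M\|_\infty \leq \sqrt{\tfrac{2\tauu}{k}} \, \|M\|_\op.
\]
Substituting into the previous inequality gives
\[
\|M\|_\op \leq k_0 \sqrt{\tfrac{2\tauu}{k}}\, \|M\|_\op = \alpha\, \|M\|_\op.
\]
Since $M \neq 0$ implies $\|M\|_\op > 0$ (the operator norm is a genuine norm), one could divide by $\|M\|_\op$ to obtain $1 \leq \alpha$, contradicting the assumption $\alpha < 1$. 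Hence $M=0$, proving $\T_0 \cap \T_i = \{0\}$.

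The argument is essentially a one-line chain once the incoherence lemma is granted; the only real work has already been absorbed into Lemma \ref{lem:bounds_xxi}, whose proof contains the combinatorial content (max degree $k_0$ bounding the $\ell_\infty \to \ell_\infty$ operator norm, and $\|u^i\|_\infty^2 \leq \tauu/k$ bounding entries of $u^i v^\top + v {u^i}^\top$ in terms of its spectral norm). There is no genuine obstacle in the transversality step itself beyond invoking these bounds; the only mild subtlety is to remember that these bounds are stated for the \emph{non-primed} incoherences $\xxi_{i\to 0}$ and $\xxi_{0\to i}$, which is exactly what the argument requires, since we are bounding the original norms of $M$ rather than projections.
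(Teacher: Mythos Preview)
Your proof is correct and essentially the same as the paper's: both chain the two unprimed incoherence bounds from Lemma~\ref{lem:bounds_xxi} on an $M\in\T_0\cap\T_i$ to obtain a self-bounding inequality that forces $M=0$ when $\alpha<1$. The only cosmetic difference is that the paper pivots on $\|M\|_\infty$ (writing $\|M\|_\infty \leq \xxi_{i\to 0}\,\xxi_{0\to i}\,\|M\|_\infty$) whereas you pivot on $\|M\|_{\op}$; the two chains are identical.
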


\section{Proofs of main theorems}
We will first prove  Theorem \ref{theo:two} and then use some of the intermediate results to prove the restricted case of Theorem \ref{theo:chand}. For proofs of the different lemmas and propositions we refer the reader to the supplementary material.\\

\subsection{Proof of Theorem~\ref{theo:two}}

Notice that the assumptions that $k_0< \frac{1}{6}\sqrt{k}$ and that $\tauu\leq 2$ together imply that we have $\alpha < 1/3$. In order to prove this theorem we aim to construct a dual $Q\in\text{span}\{\T_0,\T_1,...,\T_r\}$ satisfying \ref{cond:l1_a}, \ref{cond:l1_b}, \ref{cond:om_1}, \ref{cond:om_2} and \ref{cond:om_3} of Proposition~\ref{lem:uniqueopt}.
We can write any matrix  $Q\in\text{span}(\T_0,\T_1,...,\T_r)$ as $Q=\sum_{i=1}^n Q_{I_iI_i}+Q_{0,0}$ where $Q_{0,0}$ is the matrix whose non-zero coefficients are the coefficients of $Q$ that are not indexed by any pair in $\cup_{i=1}^r I_i \times I_i$. But by Lemma \ref{lem:transversality}, $\forall i \in \itgset{r},\; \T_0\cap \T_i = \{0\}$, which entails that $Q_{I_iI_i}$ admits a unique decomposition $Q_{I_iI_i}=Q_{i}+Q_{i,0}$ with $Q_i \in \T_i$ and $Q_{i,0} \in \T_0.$ Finally, given the difference of supports, $Q_{0,0}$ is clearly orthogonal to $\text{span}\{\T_1,...,\T_r\}$ which entails that  $Q_{0,0} \in \T_0$. As a consequence, if we define $Q_0:=Q_{0,0}+\sum_{i=1}^r Q_{i,0}$, then $Q=\sum_{i=0}^r Q_i$ provides the unique decomposition of $Q$ such that $Q_i \in \T_i$ for all $i$. 

In the next part of this proof, we consider a number of projectors and other linear transformations operating on the $Q_i$s. Since some of these calculations are naturally written in matrix form, it is most natural to view the $Q_i$s as vectors. For the sake of clarity, we therefore switch notations and write $q_i$ for a vectorization of $Q_i$, and $q$ for a vectorization of $Q$. We slightly abuse notation and still say that $q_i$ belongs to $\T_i$, identify it with the corresponding matrix, etc. We also write $P_{\T_i}$ the matrix of the projector $\mathcal{P}_{\T_i}$ in the same basis as the one in which $q_i$ is written.

With this change of notation, $q$ is uniquely decomposed onto $\T_0\oplus\T_1\oplus...\oplus\T_r$ and we can write 
\begin{align}
\label{eq:q_decomp}
q = \sum_{i=0}^r ( q_i^* + \varepsilon_i),
\end{align}
where $q_0^* = \gamma \sign(S^*)$, $q_i^* = u^i{u^i}^{\top}$ for $i\in\itgset{r}$ and 
$\varepsilon_i \in \T_i$ for $i\in\{0,1,\ldots,r\}$.
Conditions \ref{cond:l1_a} and \ref{cond:om_1} are satisfied if and only if $P_{\T_i} q=q_i^*$ for all $0\leq i \leq r$, which is true if and only if $(\varepsilon_i)_{1 \leq i \leq r}$ solves the following system of equations:
\begin{align*}
\left\{
    \begin{array}{ll}
        \varepsilon_0  + \sum_{i=1}^r P_{\T_0}q_i^* + P_{\T_0}\varepsilon_i =  0,
        \vspace{1em}\\
        P_{\T_i}q_0^* + P_{\T_i}\varepsilon_0  + \varepsilon_i = 0, \quad \forall i\in\itgset{r}.
    \end{array}
\right.
\end{align*}

Denote $\varepsilon_{0,i}:=\mathcal{P}_{\bar{\T}_{I_i}}(\varepsilon_0)$ the projection of $\varepsilon_{0}$ on the set of matrices with support in $I_i\times I_i$. Note that we always have $P_{\T_i}\varepsilon_0=P_{\T_i}\varepsilon_{0,i}$, because $\T_i$ is a subspace of $\bar{\T}_{I_i}$. Finally, note that we have $\varepsilon_0=\sum_{i=1}^r \varepsilon_{0,i}$ because, by projecting the first equation above onto the subspace $\bar{\T}_{00}$ of matrices with zero entries on $\bigcup_{i=1}^r I_i \times I_i.$, we get $\P_{\bar{\T}_{00}}\varepsilon_0=0$.

Since the sets $I_i$ are disjoint, by projecting on the each of the spaces of matrices with support in $I_i\times I_i$ the previous system of equations, we get the equivalent set of systems:
\begin{align}
\label{eq:sys_eqs}
\forall i \in \itgset{r}, \quad 
\begin{bmatrix}
I & P_{\T_0}\\
P_{\T_i} & I\\
\end{bmatrix} 
\begin{bmatrix}
\varepsilon_{0,i} \\
\varepsilon_i \\
\end{bmatrix} 
&= \begin{bmatrix}
\eta_0\\
\eta_i\\
\end{bmatrix}
\quad \text{where} \quad
\begin{bmatrix}
\eta_0\\
\eta_i\\
\end{bmatrix}
= \begin{bmatrix}
-P_{\T_0}q_i^*\\
-P_{\T_i}q_0^*\\
\end{bmatrix},
\end{align}

The following lemma provides conditions for the invertibility of \eqref{eq:sys_eqs} and the form of the inverse matrix.
\begin{lemma} 
\label{lem:inverse}
Let $A:=\begin{bmatrix}
I & P_{\T_0}\\
P_{\T_i} & I\\
\end{bmatrix}$.\\ Then, with Definition~\ref{def:xxi}, if $\xxi_{0\rightarrow i}\xxi_{i\rightarrow 0} \leq \alpha< 1,$ $A$ is invertible and its inverse is 
$$
A^{-1}=
\begin{bmatrix}
I & -P_{\T_0}\\
-P_{\T_i} & I\\
\end{bmatrix}
\begin{bmatrix}
(I-P_{\T_0}P_{\T_i})^{-1} & 0\\
0 & (I-P_{\T_i}P_{\T_0})^{-1}\\
\end{bmatrix}.
$$
Moreover, 
$\qquad \displaystyle
\begin{cases}\forall v \in \T_i, \quad &\|(I-P_{\T_i}P_{\T_0})^{-1} v\|_{\op} \leq {\textstyle \frac{1}{1-\alpha}} \|v\|_{\op},\\
\forall v \in \T_0, \quad &\|(I-P_{\T_0}P_{\T_i})^{-1} v\|_{\infty} \leq {\textstyle \frac{1}{1-\alpha}} \|v\|_{\infty}.
\end{cases}$

\end{lemma}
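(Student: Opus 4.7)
The plan is to prove the explicit block-inverse formula by a direct block multiplication, and then deduce the two operator-norm bounds from a Neumann series argument, once we have shown that $P_{\T_i}P_{\T_0}$ is a contraction of factor at most $\alpha<1$ on $\T_i$ in operator norm, and symmetrically that $P_{\T_0}P_{\T_i}$ is an $\alpha$-contraction on $\T_0$ in $\ell_\infty$.

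First I would verify the inverse formula purely algebraically by multiplying $A$ on the right by the claimed $A^{-1}$. The diagonal blocks telescope to $(I-P_{\T_0}P_{\T_i})(I-P_{\T_0}P_{\T_i})^{-1}=I$ and $(I-P_{\T_i}P_{\T_0})(I-P_{\T_i}P_{\T_0})^{-1}=I$ respectively, while the two off-diagonal blocks cancel using the push-through identity $P_{\T_0}(I-P_{\T_i}P_{\T_0})^{-1}=(I-P_{\T_0}P_{\T_i})^{-1}P_{\T_0}$, which itself is immediate from $P_{\T_0}(I-P_{\T_i}P_{\T_0})=(I-P_{\T_0}P_{\T_i})P_{\T_0}$. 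This reduces the problem to establishing existence of the two inverses on the relevant subspaces together with the stated bounds.

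The technical core is the contraction estimate, which requires one to alternate between the two norms. For $v\in\T_i$, since $P_{\T_0}$ is simply the coordinate restriction onto $\supp(S^*)$ and hence an $\ell_\infty$-contraction, combining with $\|v\|_\infty\le\xxi_{i\rightarrow 0}\|v\|_\op$ (the definition of $\xxi_{i\rightarrow 0}$) gives $\|P_{\T_0}v\|_\infty\le\xxi_{i\rightarrow 0}\|v\|_\op$. Then $P_{\T_0}v\in\T_0$ and applying the definition of $\xxi_{0\rightarrow i}$ followed by the image of $P_{\T_i}$ yields
\[
\|P_{\T_i}P_{\T_0}v\|_\op \;\le\; \xxi_{0\rightarrow i}\|P_{\T_0}v\|_\infty \;\le\; \xxi_{0\rightarrow i}\,\xxi_{i\rightarrow 0}\,\|v\|_\op \;\le\; \alpha\,\|v\|_\op .
\]
Since $P_{\T_i}P_{\T_0}v\in\T_i$, the estimate iterates, and the Neumann series $\sum_{n\ge 0}(P_{\T_i}P_{\T_0})^n v$ converges in operator norm with sum at most $(1-\alpha)^{-1}\|v\|_\op$, giving simultaneously the invertibility of $(I-P_{\T_i}P_{\T_0})$ on $\T_i$ and the first bound. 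The argument on $\T_0$ is exactly symmetric with the roles of the two norms swapped: for $v\in\T_0$, $\|P_{\T_i}v\|_\op\le\xxi_{0\rightarrow i}\|v\|_\infty$ and then, since $P_{\T_i}v\in\T_i$, one gets $\|P_{\T_0}P_{\T_i}v\|_\infty\le\|P_{\T_i}v\|_\infty\le\xxi_{i\rightarrow 0}\|P_{\T_i}v\|_\op\le\alpha\|v\|_\infty$, and the Neumann series in $\ell_\infty$ yields the second bound.

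I do not anticipate a real conceptual obstacle. The only bookkeeping subtlety is that $P_{\T_0}$ and $P_{\T_i}$ behave contractively in different norms ($P_{\T_0}$ in $\ell_\infty$ as a coordinate projection, while $P_{\T_i}$ is naturally controlled through operator-norm/$\ell_\infty$ duality via $\xxi_{0\rightarrow i}$), so one must alternate between the two norms at each half-step of the composition and invoke the incoherence constants precisely to bridge them each time the two projectors are chained.
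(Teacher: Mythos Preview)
Your proposal is correct and essentially the same as the paper's proof. The paper also verifies $AB=I$ by direct block multiplication (in fact the off-diagonal blocks cancel already in $A\begin{bmatrix}I&-P_{\T_0}\\-P_{\T_i}&I\end{bmatrix}$, so the push-through identity is not even needed), and then establishes invertibility and the norm bounds via the equivalent reverse-triangle inequality $\|(I-P_{\T_0}P_{\T_i})x\|_\infty\geq(1-\alpha)\|x\|_\infty$, which is just the contrapositive formulation of your Neumann series contraction estimate.
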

But if we let $\alpha:=k_0\sqrt{\frac{2\tauu}{k}},$ then by Lemma~\ref{lem:bounds_xxi}, we have $1- \xxi_{0\rightarrow i}\xxi_{i\rightarrow 0} \geq 1 - \alpha$ and the assumption that $k_0< \frac{1}{6}\sqrt{k}$ entails that $\alpha< \frac{1}{3}<1$, so, by the previous lemma, each of the systems in \eqref{eq:sys_eqs} has a unique solution, and the obtained $(\varepsilon_i)_{i \in \itgset{r}}$ together with $\varepsilon_0=\sum_{i=1}^r \varepsilon_{0,i}$ thus yield in \eqref{eq:q_decomp} a value of $q$ that satisfies conditions \ref{cond:l1_a} and \ref{cond:om_1}.\\

We now prove that this value of $q$ satisfies \ref{cond:l1_b} and \ref{cond:om_2}, which requires to bound $\|P_{\T_0^c}q\|_{\infty}$ and $\Omega^{\circ}(P_{\T_i^c}q)$. Since $\Omega^{\circ}(P_{\T_i^c}q) \leq \|P_{\T_i^c}q\|_{\op}$, we bound this latter quantity.

\begin{lemma}[Bounds on $\|P_{\T_0^c}q\|_{\infty}$ and $\|P_{\T_i^c}q\|_{\op}$]
\label{lem:bounds_projq_help}
 Assume $\xxi_{0\rightarrow i}\,\xxi_{i\rightarrow 0} \leq \alpha<1,$ and let $q$ be defined by~\eqref{eq:q_decomp}, with $\varepsilon_0=\sum_{i \in\itgset{r}} \varepsilon_{0,i}$ and the pairs $(\varepsilon_{0,i},\varepsilon_{i})$ the unique solution of~\eqref{eq:sys_eqs}. Then
\begin{align*}
\|P_{\T_0^c}q\|_{\infty} 
\leq \max_{i\in\itgset{r}}\|q_i^*\|_\infty + \xxi_{i\rightarrow 0}\|\varepsilon_i \|_{\op} \quad \text{and} \quad
\|P_{\T_i^c}q\|_{\op}
\leq  \|q_0^*\|_{\op} + \xxi_{0\rightarrow i}\|\varepsilon_0 \|_{\infty}.
\end{align*}
\end{lemma}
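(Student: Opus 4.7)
The plan is to exploit the decomposition $q=\sum_{j=0}^{r}(q_j^*+\varepsilon_j)$ from~\eqref{eq:q_decomp}, in which each piece $q_j^*+\varepsilon_j$ lies in $\T_j$ by construction, together with Assumption~\ref{as:disjoint}, which guarantees that the blocks $I_j\times I_j$ for $j\in\itgset{r}$ are pairwise disjoint. The two bounds then follow from checking which terms of this decomposition survive the projections $P_{\T_0^c}$ and $P_{\T_i^c}$, after which the incoherence measures of Definition~\ref{def:xxi} convert operator-norm control of the $\varepsilon_j$'s into $\ell_\infty$ control for the first bound and conversely for the second.

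For $\|P_{\T_0^c}q\|_{\infty}$, I would first note that $P_{\T_0^c}(q_0^*+\varepsilon_0)=0$ since $q_0^*,\varepsilon_0\in\T_0$, leaving $P_{\T_0^c}q=\sum_{j=1}^{r}P_{\T_0^c}(q_j^*+\varepsilon_j)$. Each summand is supported in $I_j\times I_j$, because $P_{\T_0^c}$ is the coordinate projection that merely zeroes the entries indexed by $\supp(S^*)$; by disjointness of the $I_j$'s, the $\ell_\infty$ norm of the sum therefore equals $\max_{j}$ of the $\ell_\infty$ norms of the summands. Dropping the coordinate projection (which is $\ell_\infty$-nonincreasing) and applying the triangle inequality gives $\|P_{\T_0^c}(q_j^*+\varepsilon_j)\|_{\infty}\leq\|q_j^*\|_{\infty}+\|\varepsilon_j\|_{\infty}$, and the bound $\|\varepsilon_j\|_{\infty}\leq\xxi_{j\rightarrow 0}\|\varepsilon_j\|_{\op}$ from Definition~\ref{def:xxi}, applied to $\varepsilon_j\in\T_j$, produces the first stated inequality.

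The bound on $\|P_{\T_i^c}q\|_{\op}$ is symmetric. For $j\in\itgset{r}$ with $j\neq i$, the matrix $q_j^*+\varepsilon_j$ is supported in the disjoint block $I_j\times I_j$, so $P_{\T_i^c}(q_j^*+\varepsilon_j)=0$; for $j=i$, $q_i^*+\varepsilon_i\in\T_i$ and therefore $P_{\T_i^c}(q_i^*+\varepsilon_i)=0$. Hence $P_{\T_i^c}q=P_{\T_i^c}(q_0^*+\varepsilon_0)$. Plugging in the explicit form $P_{\T_i^c}(M)=(I-u^i{u^i}^\top)\,M_{I_iI_i}\,(I-u^i{u^i}^\top)$ and using $\|u^i\|_2=1$, I would obtain $\|P_{\T_i^c}q\|_{\op}\leq\|(q_0^*+\varepsilon_0)_{I_iI_i}\|_{\op}$, and since a principal submatrix of a symmetric matrix has operator norm no larger than the full matrix, this is in turn at most $\|q_0^*+\varepsilon_0\|_{\op}\leq\|q_0^*\|_{\op}+\|\varepsilon_0\|_{\op}$. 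Finally, $\|\varepsilon_0\|_{\op}\leq\xxi_{0\rightarrow i}\|\varepsilon_0\|_{\infty}$ (Definition~\ref{def:xxi} applied to $\varepsilon_0\in\T_0$) closes the second inequality.

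I do not expect any real obstacle: the whole argument is essentially bookkeeping once one identifies which summands vanish under each projection and how the incoherence measures convert one norm into another. The only mildly non-obvious step is the operator-norm side of the second bound, which passes through the $I_i\times I_i$ block using the explicit formula for $P_{\T_i^c}$ and the standard submatrix inequality for symmetric matrices; once one decides to go through $M_{I_iI_i}$, everything else is immediate.
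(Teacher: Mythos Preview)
Your proposal is correct and follows essentially the same route as the paper's proof: decompose $q$ into its $\T_j$-components, note which terms survive each projection (using disjointness of the $I_j$'s and the fact that $P_{\T_0^c}$ is a coordinate restriction while $P_{\T_i^c}$ only sees the $I_i\times I_i$ block), and then convert norms via the incoherence measures of Definition~\ref{def:xxi}. Your treatment of the second bound is in fact slightly more explicit than the paper's, which writes the chain $\|P_{\T_i^c}q\|_{\op}\leq\|q\|_{\op}\leq\|q_0^*+\varepsilon_0\|_{\op}$ rather elliptically, whereas you spell out why only the $j=0$ term survives.
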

The following lemma provides upper bounds for the quantities $\|\varepsilon_0\|_{\infty}$ and $\|\varepsilon_i\|_{\op}$.
\begin{lemma}[Bounds on $\varepsilon_i$]
\label{lem:bounds_eps}
 If $\xxi_{0\rightarrow i}\,\xxi_{i\rightarrow 0} \leq \alpha<1,$ and $(\varepsilon_i)_{i\in\itgset{r}}$ be defined as in the previous lemma, then
\begin{align*}
&\|\varepsilon_0\|_{\infty} \leq {\textstyle \frac{1}{1-\alpha}}\big(\frac{\bar{\tau}}{k} + \xxi_{i\rightarrow 0}'2\gamma k_0\big) \qquad \text{and}\qquad 
\|\varepsilon_i\|_{\op} \leq  {\textstyle \frac{1}{1-\alpha}}\big(2\gamma k_0 + \xxi_{0\rightarrow i}'\frac{\bar{\tau}}{k}\big).
\end{align*}
\end{lemma}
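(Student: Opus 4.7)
The plan is to invert the system \eqref{eq:sys_eqs} using Lemma~\ref{lem:inverse} and then bound the remaining scalar quantities with the incoherence measures $\xxi_{i\rightarrow 0}'$ and $\xxi_{0\rightarrow i}'$. Applying the closed form of $A^{-1}$ to the right-hand side of \eqref{eq:sys_eqs} yields
$$\varepsilon_{0,i} = -(I - P_{\T_0}P_{\T_i})^{-1}P_{\T_0}(q_i^* - P_{\T_i}q_0^*), \qquad \varepsilon_i = -(I - P_{\T_i}P_{\T_0})^{-1}P_{\T_i}(q_0^* - P_{\T_0}q_i^*).$$
The argument of the first inverse lies in $\T_0$ and that of the second in $\T_i$, so the two norm bounds at the end of Lemma~\ref{lem:inverse} directly contribute the prefactor $\frac{1}{1-\alpha}$, in $\ell_\infty$-norm for $\varepsilon_{0,i}$ and in operator norm for $\varepsilon_i$. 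What is left is to bound $\|P_{\T_0}(q_i^* - P_{\T_i}q_0^*)\|_\infty$ and $\|P_{\T_i}(q_0^* - P_{\T_0}q_i^*)\|_\op$.

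By the triangle inequality each of these splits into a diagonal piece and a cross piece. The diagonal pieces are direct: $\|P_{\T_0}q_i^*\|_\infty \leq \|q_i^*\|_\infty = \max_j (u_j^i)^2 \leq \bar{\tau}/k$ by the constraint in \eqref{eq:tau_constraints}; and in the symmetric expression I will establish $\|P_{\T_i}q_0^*\|_\op \leq 2 k_0 \gamma$. For the latter, I note that $q_0^* = \gamma\,\sign(S^*)$ is symmetric with at most $k_0$ nonzero entries per row, each of magnitude $\gamma$, hence $\|q_0^*_{I_iI_i}\|_\op \leq k_0\gamma$ by the max-row-sum bound; the identity $P_{\T_i}M = M - (I-u^i{u^i}^\top)M(I-u^i{u^i}^\top)$ then loses at most a factor of $2$ in operator norm since $I - u^i{u^i}^\top$ is a projection. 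The cross pieces are handled by the definitions of the primed incoherence measures: $\|P_{\T_0}P_{\T_i}q_0^*\|_\infty \leq \xxi_{i\rightarrow 0}'\,\|P_{\T_i}q_0^*\|_\op \leq 2\gamma k_0\,\xxi_{i\rightarrow 0}'$, and $\|P_{\T_i}P_{\T_0}q_i^*\|_\op \leq \xxi_{0\rightarrow i}'\,\|P_{\T_0}q_i^*\|_\infty \leq \xxi_{0\rightarrow i}'\,\bar{\tau}/k$.

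Combining these pieces gives the bound on $\|\varepsilon_i\|_\op$ and on $\|\varepsilon_{0,i}\|_\infty$ for each $i$; to pass from the latter to $\|\varepsilon_0\|_\infty$ I use that $\varepsilon_0 = \sum_{i=1}^r \varepsilon_{0,i}$ (as derived in the paragraph preceding \eqref{eq:sys_eqs}) together with Assumption~\ref{as:disjoint}, which makes the blocks $I_i \times I_i$ disjoint so that the summands have disjoint supports and $\|\varepsilon_0\|_\infty = \max_i \|\varepsilon_{0,i}\|_\infty$. The only mildly delicate step I anticipate is the operator-norm control $\|P_{\T_i}q_0^*\|_\op \leq 2 k_0 \gamma$: the factor $2$ reflects the fact that $P_{\T_i}$ is orthogonal in the Frobenius inner product but not a contraction in operator norm, and it is precisely this factor that produces the $2\gamma k_0$ term appearing in the statement.
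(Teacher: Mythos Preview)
Your proof is correct and follows essentially the same approach as the paper: invert the $2\times2$ system \eqref{eq:sys_eqs} via Lemma~\ref{lem:inverse} to extract the $\frac{1}{1-\alpha}$ factor, then bound $\|\eta_0\|_\infty\leq\bar\tau/k$ and $\|\eta_i\|_\op\leq 2\gamma k_0$ using the primed incoherence constants, and finally pass from $\varepsilon_{0,i}$ to $\varepsilon_0$ via the disjoint-support argument. The only cosmetic difference is that the paper applies $A^{-1}$ in two stages (introducing intermediate quantities $\tilde\eta_{0,i},\tilde\eta_i$) whereas you solve by substitution to get a single closed-form expression; the resulting bounds are identical.
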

Finally we obtain simplified bounds on $\|P_{\T_0^c}q\|_{\infty}$ and $\|P_{\T_i^c}q\|_{\op}$.
\begin{lemma}[Simplified bounds on $\|P_{\T_0^c}q\|_{\infty}$ and $\|P_{\T_i^c}q\|_{\op}$]
Let $\alpha:=k_0\sqrt{\frac{2\tauu}{k}}$. If $\alpha<1$, for $q$ as in Lemma~\ref{lem:bounds_projq_help}, we have
\label{lem:bounds_projq}
\begin{align*}
\|P_{\T_0^c}q\|_{\infty} &\leq 
 \frac{\bar{\tau}}{k}\frac{ 1- \alpha + \alpha^2\sqrt{2/k_0}}{1-\alpha}   + \gamma \frac{2 \alpha}{1-\alpha}, \quad
\|P_{\T_i^c}q\|_{\op} &\!\!\!\!\!\leq\gamma k_0 \frac{1+\alpha}{1-\alpha}
+ \frac{\bar{\tau}}{k} \frac{k_0}{1-\alpha}.
\end{align*}
\end{lemma}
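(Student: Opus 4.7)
The plan is to combine the three preceding technical lemmas by direct substitution; essentially all the work has already been done and what remains is careful algebraic bookkeeping.

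First, I would evaluate the two ``seed'' quantities $\|q_i^*\|_\infty$ and $\|q_0^*\|_{\op}$ that enter Lemma~\ref{lem:bounds_projq_help}. Since $q_i^* = u^i {u^i}^\top$, the upper bound $(u_j^i)^2 \leq \bar\tau/k$ from \eqref{eq:tau_constraints} combined with $|u_j^i u_k^i| \leq \max_l (u_l^i)^2$ yields $\|q_i^*\|_\infty \leq \bar\tau/k$. Since $q_0^* = \gamma\,\sign(S^*)$ is symmetric with entries in $\{-1,0,1\}$ and at most $k_0$ nonzeros per row, symmetry together with $\|M\|_{\op}\leq \max(\|M\|_1,\|M\|_\infty)$ gives $\|\sign(S^*)\|_{\op} \leq k_0$, and hence $\|q_0^*\|_{\op}\leq \gamma k_0$.

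Second, I would plug the bounds of Lemma~\ref{lem:bounds_eps} into Lemma~\ref{lem:bounds_projq_help} to obtain
\begin{align*}
\|P_{\T_0^c}q\|_\infty &\leq \tfrac{\bar\tau}{k} + \tfrac{\xxi_{i\rightarrow 0}}{1-\alpha}\Big(2\gamma k_0 + \xxi'_{0\rightarrow i}\tfrac{\bar\tau}{k}\Big), \\
\|P_{\T_i^c}q\|_{\op} &\leq \gamma k_0 + \tfrac{\xxi_{0\rightarrow i}}{1-\alpha}\Big(\tfrac{\bar\tau}{k} + \xxi'_{i\rightarrow 0}\,2\gamma k_0\Big).
\end{align*}
Now I would apply Lemma~\ref{lem:bounds_xxi}, being careful to use the \emph{primed} incoherence $\xxi'_{0\rightarrow i} \leq 2k_0\sqrt{k_0\bar\tau/k}$ in the first inequality (and not the coarser $\xxi_{0\rightarrow i}\leq k_0$); this sharper bookkeeping is precisely the improvement over \citet{chandrasekaran2011rank} highlighted after Theorem~\ref{theo:chand}. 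The key algebraic identities are $\sqrt{2\bar\tau/k}=\alpha/k_0$, which converts $\xxi_{i\rightarrow 0}\cdot 2\gamma k_0$ into $2\gamma\alpha$ and $\xxi_{0\rightarrow i}\cdot\xxi'_{i\rightarrow 0}\cdot 2\gamma k_0$ into $2\gamma\alpha k_0$, together with $\alpha^2\sqrt{2/k_0} = 2\sqrt{2}\,k_0^{3/2}\bar\tau/k$, which recasts the cross term $\xxi_{i\rightarrow 0}\cdot\xxi'_{0\rightarrow i}\cdot\bar\tau/k$ as $\alpha^2\sqrt{2/k_0}\cdot\bar\tau/k$.

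Finally I would gather the terms over the common denominator $1-\alpha$ using the simplifications $1+\tfrac{\alpha^2\sqrt{2/k_0}}{1-\alpha}=\tfrac{1-\alpha+\alpha^2\sqrt{2/k_0}}{1-\alpha}$ and $1+\tfrac{2\alpha}{1-\alpha}=\tfrac{1+\alpha}{1-\alpha}$ to produce the two stated bounds. The only real ``obstacle'' is choosing the correct bound on the cross incoherence: using $\xxi'_{0\rightarrow i}$ rather than $\xxi_{0\rightarrow i}$ is what makes the offending term of order $\alpha^2\bar\tau/k$ instead of $\alpha k_0\bar\tau/k$, and this order of magnitude is what Theorem~\ref{theo:two} relies on when choosing $\gamma = \mu\bar\tau/k$. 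Beyond that, the proof is pure substitution.
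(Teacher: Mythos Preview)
Your proposal is correct and follows essentially the same route as the paper: compute the seeds $\|q_i^*\|_\infty\leq\bar\tau/k$ and $\|q_0^*\|_{\op}\leq\gamma k_0$, substitute Lemma~\ref{lem:bounds_eps} into Lemma~\ref{lem:bounds_projq_help}, and then replace the incoherence quantities by the bounds of Lemma~\ref{lem:bounds_xxi}, being careful to use the primed $\xxi'_{0\rightarrow i}$ in the first display. The only cosmetic difference is that the paper bounds $\|q_0^*\|_{\op}$ via $\xxi_{0\rightarrow i}$ (since $\sign(S^*)\in\T_0$ with $\|\cdot\|_\infty\leq 1$), whereas you invoke a direct row-sum bound; and the bound $(u^i_j)^2\leq\bar\tau/k$ comes from the definition of $\bar\tau$ in Section~\ref{sec:id} rather than specifically from~\eqref{eq:tau_constraints}.
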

Note that the previous lemmas provide better bounds that the ones used in the proof of Theorem 2 from \citet{chandrasekaran2011rank}, which allows for the slightly sharper characterization:
\begin{lemma}
\label{eq:valid_gamma_interval}
Let $\alpha:=k_0\sqrt{\frac{2\tauu}{k}}$, if $\alpha+\frac{\alpha^2}{2k_0}<\frac{1}{3}$ then  
$\Gamma:=\big [ \frac{\bar{\tau}}{k}\frac{ 1}{1-3\alpha}, \frac{1}{k_0} \frac{1-k_0 \tauu/k}{1 + \alpha} \big )$ is a non empty interval,
 and for any $\gamma \in \Gamma$, the dual matrix $q$ defined in Lemma~\ref{lem:bounds_projq_help} satisfies conditions \ref{cond:l1_b} and \ref{cond:om_2}.
\end{lemma}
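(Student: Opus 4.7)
I would prove the lemma by separately verifying (i) non-emptiness of $\Gamma$, (ii) condition \ref{cond:l1_b}, and (iii) condition \ref{cond:om_2}. All three reduce to algebraic manipulation of the bounds supplied by Lemma~\ref{lem:bounds_projq}, together with the identity $\bar\tau/k = \alpha^2/(2k_0^2)$ that follows directly from the definition $\alpha := k_0\sqrt{2\bar\tau/k}$.

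\textbf{Non-emptiness.} I would first rewrite both endpoints of $\Gamma$ in terms of $\alpha$ and $k_0$ only, using $\bar\tau/k = \alpha^2/(2k_0^2)$ and $k_0\bar\tau/k = \alpha^2/(2k_0)$. The desired inequality
\begin{equation*}
\frac{\bar\tau}{k(1-3\alpha)} < \frac{1-k_0\bar\tau/k}{k_0(1+\alpha)}
\end{equation*}
then becomes $\frac{\alpha^2}{2k_0^2(1-3\alpha)} < \frac{1-\alpha^2/(2k_0)}{k_0(1+\alpha)}$. Cross-multiplying (both sides are positive since the hypothesis forces $\alpha<1/3$ and $\alpha^2/(2k_0)<1/3$) and collecting terms reduces this to $\alpha^2(1-\alpha)/k_0 < 1-3\alpha$. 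Since the hypothesis $\alpha+\alpha^2/(2k_0)<1/3$ can be rewritten as $1-3\alpha > 3\alpha^2/(2k_0)$, the trivial inequality $3\alpha^2/(2k_0) \geq \alpha^2(1-\alpha)/k_0$ (valid for $\alpha \geq -1/2$) closes the argument.

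\textbf{Condition \ref{cond:l1_b}.} Plugging the first inequality of Lemma~\ref{lem:bounds_projq} into $\|P_{\T_0^c}q\|_\infty < \gamma$ and solving for $\gamma$ produces the equivalent condition $\gamma > \frac{\bar\tau}{k}\cdot\frac{1-\alpha+\alpha^2\sqrt{2/k_0}}{1-3\alpha}$. Under the hypothesis $\alpha<1/3$ together with $k_0\geq 1$, we have $\alpha\sqrt{2/k_0}<\sqrt{2}/3<1$, whence $\alpha^2\sqrt{2/k_0}<\alpha$ and the numerator is strictly less than $1$. Therefore any $\gamma \geq \frac{\bar\tau}{k(1-3\alpha)}$ satisfies the condition with strict inequality.

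\textbf{Condition \ref{cond:om_2}.} I would use $\lambda_{\max}^+(P_{\T_i^c}q) \leq \|P_{\T_i^c}q\|_{\op}$, combine with the second bound of Lemma~\ref{lem:bounds_projq}, and plug in $\gamma<\frac{1-k_0\bar\tau/k}{k_0(1+\alpha)}$. A naive substitution leaves an undesired factor of $1/(1-\alpha)$, so a refined step is needed: using the explicit form $P_{\T_i^c}(q) = (I-u^iu^{i\top})(q_0)_{I_iI_i}(I-u^iu^{i\top})$ and decomposing $(q_0)_{I_iI_i} = \gamma\,\sign(S^*)_{I_iI_i} + \varepsilon_{0,i}$, one can bound the positive eigenvalues of each compressed piece separately. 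The bound on $\|\varepsilon_0\|_\infty$ from Lemma~\ref{lem:bounds_eps} together with the sharper incoherence measures of Definition~\ref{def:xxi} should provide the factor needed to absorb the $(1-\alpha)$.

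\textbf{Main obstacle.} The last step is the delicate one: the operator-norm bound from Lemma~\ref{lem:bounds_projq} by itself is not tight enough to give $\lambda_{\max}^+ < 1$ at the right endpoint of $\Gamma$, so the argument must exploit the rank-one compression structure of $P_{\T_i^c}$ onto $(u^i)^\perp$ in order to improve the naive estimate. Steps (i) and (ii) are routine algebra once the substitution $\bar\tau/k = \alpha^2/(2k_0^2)$ is performed and the numerator $1-\alpha+\alpha^2\sqrt{2/k_0}$ is recognized as strictly smaller than $1$ under the standing hypothesis.
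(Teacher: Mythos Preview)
Your treatment of non-emptiness and of condition \ref{cond:l1_b} is correct and follows the paper's route almost verbatim: the paper also reduces non-emptiness to the inequality $2\bar\tau x_0(1-\alpha)/(1-3\alpha)<1$ (which is your $\alpha^2(1-\alpha)/k_0<1-3\alpha$ after the substitution $2\bar\tau x_0=\alpha^2/k_0$), and for \ref{cond:l1_b} it uses the same observation that $\alpha\sqrt{2/k_0}<1$ to replace the numerator $1-\alpha+\alpha^2\sqrt{2/k_0}$ by~$1$.

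Where you diverge is \ref{cond:om_2}. The paper does \emph{not} perform the refined decomposition you sketch: it simply sets the Lemma~\ref{lem:bounds_projq} bound $<1$ and asserts this is ``equivalently'' $\gamma<\tfrac{1-k_0\bar\tau/k}{k_0(1+\alpha)}$. Your instinct that a $1/(1-\alpha)$ factor is left over is correct: solving $\gamma k_0\tfrac{1+\alpha}{1-\alpha}+\tfrac{\bar\tau k_0}{k(1-\alpha)}<1$ honestly gives $\gamma<\tfrac{(1-\alpha)-k_0\bar\tau/k}{k_0(1+\alpha)}$, and the paper's ``equivalently'' silently drops the $-\alpha$ from the numerator. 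So you have in fact detected a slip in the paper's own argument.

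That said, your proposed remedy does not close the gap. Writing $P_{\T_i^c}(q)=(I-u^i{u^i}^\top)(q_0)_{I_iI_i}(I-u^i{u^i}^\top)$ and splitting $(q_0)_{I_iI_i}=\gamma\,\sign(S^*)_{I_iI_i}+\varepsilon_{0,i}$ is precisely the computation that underlies Lemma~\ref{lem:bounds_projq_help} and hence Lemma~\ref{lem:bounds_projq}; bounding each piece in operator norm via $\zeta_{0\to i}\le k_0$ and the $\|\varepsilon_0\|_\infty$ estimate of Lemma~\ref{lem:bounds_eps} reproduces the same bound $\gamma k_0\tfrac{1+\alpha}{1-\alpha}+\tfrac{\bar\tau k_0}{k(1-\alpha)}$, with no improvement. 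Exploiting that you only need $\lambda_{\max}^+$ rather than the full operator norm does not help either, since the dominant term $\gamma\,\sign(S^*)_{I_iI_i}$ has no sign structure to leverage. In short: the paper's proof of \ref{cond:om_2} as written is the intended argument, it contains a minor algebraic error that you correctly flagged, and the ``refined step'' you outline does not supply the missing factor.
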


To conclude the proof of Theorem~\ref{theo:two}, note that the assumptions $k_0 \leq \frac{1}{7}\sqrt{k}$ and $\tauu\leq 2$ implies $\alpha+\frac{\alpha^2}{2k_0}<\frac{1}{3}$.
Indeed it implies $\alpha<\frac{2}{7}$ and so $\alpha+\frac{\alpha^2}{2}<\frac{2}{7}+\frac{2}{49}=\frac{16}{49}<\frac{1}{3}.$ As a consequence, Lemmas~\ref{lem:inverse} and~\ref{eq:valid_gamma_interval} apply. The last thing we need to prove is then that $q$ satisfies condition~\ref{cond:om_3}, which  we prove  in Appendix~\ref{app:proof_prop_two} as
\begin{proposition}
\label{prop:end_proof}
Under the assumptions of Theorem~\ref{theo:two},  $\: \forall J \in \mathcal{G}^p_k,\:\lambda_{\max}^+(Q_{\!J\!J})<1.$
\end{proposition}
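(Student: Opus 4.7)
The plan is to bound $\lambda_{\max}^+(Q_{JJ})$ directly, using the direct-sum decomposition $Q = Q_0 + \sum_{i=1}^r Q_i$ constructed earlier in the proof. For each $J$ in question, set $J_i := J \cap I_i$ for $i\in\itgset{r}$ and $J_0 := J \setminus \bigcup_i I_i$. Since $J \neq I_i$ for all $i$ and $|J|=|I_i|=k$, one has $|J_i|\leq k-1$ for every $i$ (if $I_i\subset J$, cardinalities would force $J=I_i$). Because each $Q_i$ is supported on $I_i\times I_i$ and the $I_i$ are disjoint, the restriction to $J\times J$ takes the form
\[
Q_{JJ} = Q_0|_{JJ} + \sum_{i=1}^r Q_i|_{J_iJ_i},
\]
where the matrices $Q_i|_{J_iJ_i}$ have pairwise disjoint supports. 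Hence $\sum_i Q_i|_{J_iJ_i}$ is block-diagonal and $\lambda_{\max}^+\bigl(\sum_i Q_i|_{J_iJ_i}\bigr) = \max_i \lambda_{\max}^+(Q_i|_{J_iJ_i})$.

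Next I would bound the two pieces separately. Writing $Q_i = u^i(u^i)^\top + \varepsilon_i$, the restriction reads $Q_i|_{J_iJ_i} = u^i_{J_i}(u^i_{J_i})^\top + \varepsilon_i|_{J_iJ_i}$, so that $\lambda_{\max}^+(Q_i|_{J_iJ_i}) \leq \|u^i_{J_i}\|^2 + \|\varepsilon_i\|_{\op}$. The key observation is that $|J_i|\leq k-1$ forces the existence of $j^*\in I_i\setminus J_i$ with $(u^i_{j^*})^2 \geq \taul/k$, giving
\[
\|u^i_{J_i}\|^2 = 1 - \|u^i_{I_i\setminus J_i}\|^2 \leq 1 - \frac{\taul}{k}.
\]
For the sparse contribution, I would first establish $\|Q_0\|_\infty \leq 2\gamma$ by writing $Q_0 = q_0^* + \varepsilon_0$ and combining the $\ell_\infty$ bound from Lemma~\ref{lem:bounds_eps} with $\gamma = \mu\tauu/k$ and $\mu = (1-3\alpha)^{-1}$; then, since $Q_0$ inherits the sparsity of $S^*$ (at most $k_0$ nonzeros per row because $\supp(Q_0)\subset\supp(S^*)$), the symmetric row-sum bound yields $\|Q_0|_{JJ}\|_{\op} \leq 2k_0\gamma$.

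Combining via Weyl's inequality then yields
\[
\lambda_{\max}^+(Q_{JJ}) \leq 1 - \frac{\taul}{k} + \max_i \|\varepsilon_i\|_{\op} + 2k_0\gamma,
\]
and the proof reduces to verifying that $\max_i \|\varepsilon_i\|_{\op} + 2k_0\gamma < \taul/k$ under the theorem's hypotheses. Plugging in the bounds of Lemmas~\ref{lem:bounds_xxi} and~\ref{lem:bounds_eps}, the left-hand side scales as a constant times $\mu k_0\tauu/k$ (with an additional lower-order term involving $\sqrt{k_0\tauu/k}$), which the conditions $\kappa > 16\mu$, $\taul > \kappa\tauu^2 k_0/k$, $k_0 \leq \sqrt{k}/7$, and $k > Ck_0$ (for $C$ chosen sufficiently large in terms of $\mu,\taul,\tauu$) are designed to absorb. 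The main obstacle is precisely this last arithmetic step: the margin $\taul/k$ is narrow, so the constants from the lemma bounds must be tracked tightly, and for small-overlap blocks with $|J_i|\ll k$ one should moreover exploit the sharper estimate $\|u^i_{J_i}\|^2 \leq |J_i|\tauu/k$ rather than the worst case $1-\taul/k$. A separate trivial bound should handle the case where $J$ is disjoint from every $I_i$, in which $Q_{JJ} = Q_0|_{JJ}$ is immediately controlled by $2k_0\gamma$.
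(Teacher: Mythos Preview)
Your approach has a genuine gap in the final arithmetic step, and it is not merely a matter of tracking constants tightly. The difficulty is sharpest in the regime $|J\cap I_1|=k-1$: your margin from the block-diagonal part is then only $\taul/k$ (one missing coordinate), whereas your two perturbation terms $\max_i\|\varepsilon_i\|_{\op}$ and $\|Q_0|_{JJ}\|_{\op}$ are each of order $\mu\tauu k_0/k$ (indeed $2\gamma k_0=2\mu\tauu k_0/k$ already, and the leading term in the bound of Lemma~\ref{lem:bounds_eps} for $\|\varepsilon_i\|_{\op}$ is $\frac{2\gamma k_0}{1-\alpha}$). For your Weyl bound to close you would therefore need $\taul > C\mu\,k_0\,\tauu$ for some absolute constant~$C$; since $\taul\le 1$, $\tauu\ge 1$, and $\mu\ge 1$, this fails for every $k_0\ge 1$ except possibly trivial cases. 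The theorem's hypothesis $\taul>\kappa\tauu^2 k_0/k$ is weaker by a full factor of~$k$, and the constant $C$ in the statement is universal (the paper takes $C=182$), so you cannot rescue the argument by letting $C$ depend on $\taul,\tauu,k_0$ as you suggest.

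What is missing is an argument that exploits the \emph{coupling} between the smallness of the margin and the smallness of the remaining contributions when the overlap is large. The paper does not add operator-norm bounds via Weyl; instead it writes $Q_{I_i\cap J,I_i\cap J}=u^i_J(u^i_J)^\top+[P_{u^i}^{\bot}\mathcal{P}_{\T_i^c}(Q)P_{u^i}^{\bot}]_{JJ}$ and passes to a $2\times 2$ block in the basis $[\check u^i_J,\,U^i_J]$, where the off-diagonal entries carry the factor $\|P_{u^i}^{\bot}\check u^i_J\|_2=\sqrt{1-\|u^i_J\|^2}$, which vanishes precisely as the overlap grows. A second $2\times 2$ reduction between $J\cap I_1$ and $J\cap I_1^c$ then uses the refined off-diagonal bound $\|Q_{J\cap I_1,J\cap I_1^c}\|_{\op}\le \gamma\sqrt{k_0 k z}$ with $z=1-|J\cap I_1|/k$, which also vanishes with~$z$. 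The $2\times 2$ eigenvalue condition becomes $(\etal-\xi)z\cdot(1-O(k_0/k))>\gamma^2 k_0 k\,z$, i.e.\ essentially $\taul\gtrsim \mu^2\tauu^2 k_0/k$, which is exactly the hypothesis $\kappa>16\mu$. Your additive decomposition cannot reproduce this cancellation, because you bound $Q_0|_{JJ}$ globally by $2k_0\gamma$ regardless of how thin the off-block $J\cap I_1\times J\cap I_1^c$ is.
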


\subsection{Proof of Theorem \ref{theo:chand}}

Note first that the optimization problem stated in the theorem is equivalent to
$$\min \gamma \|S\|_1+\Omega_p(L) \quad \text{\st} \quad M=S+L,$$
with $\Omega_p$ the gauge associated with the $p$-spsd-rank.

Note that we have just removed the p.s.d. constraint and replaced the trace of $L$ by its trace norm, which should be equivalent if the obtained matrix is p.s.d. 

In order to prove this theorem we need to construct a dual $q\in\text{span}\{\T_0,\T_1,...,\T_r\}$ satisfying \ref{cond:l1_a}, \ref{cond:l1_b}, \ref{cond:om_1}, \ref{cond:om_2} of Proposition~\ref{lem:uniqueopt}. Note that condition \ref{cond:om_3} is void in this context, since we are considering a unique low rank block of rank-one and with full support $\itgset{p}$, and so it it trivially satisfied. But given the assumptions of the theorem, Lemma~\ref{eq:valid_gamma_interval} applies immediately with $k=p$, which yields the result.

\section{Experiments}
\label{experiments}

We first perform experiments on relatively small synthetic graphs and then on a larger one.

\subsection{First experiment}

First, we consider three different LVGGM with $p=45$ observed variables. In each case, we chose the restriction of the graph on observed variables to be a tree (with maximal degree $\leq 5$),
and the graph structure corresponds to latent variables that are independent given all observed variables. The interactions between latent variables and observed variables are chosen as follows :
\begin{itemize}
\item \textit{model 1} has $h=3$ latent variables; we split observed variables in three groups of size $15$ and connect each group to a single latent variable.
\item \textit{model 2}: has $h=3$ latents variables; we split observed variables in three groups of different sizes ($20,15$ and $10$) and connect each group to a single latent variable.
\item \textit{model 3}: has $h=4$ latent variables; we select four overlapping groups of size $15$ with $5$ variables shared between each pair of consecutive groups (see Fig.~\ref{fig:synth}.(b)).
\end{itemize}
The scheme used to construct a sparse precision matrix $K$ for a given graph is described in Appendix~\ref{app:sparse_wishart}. For each mode, we draw $50 p$ random vectors from the corresponding $p$ dimensional multivariate normal distribution and compute the associated marginal empirical covariance matrix from these observations. 

We then estimate the original concentration matrix $K$
 by minimizing the score matching loss regularized either in $\ell_1$-norm and $\Omega$-gauge as in \eqref{opt_nc} or with the $\ell_1$-norm+trace-norm ($\ell_1+\tr$), as proposed by \citet{chandrasekaran2010}. As discussed in Section~\ref{sec:ggm}, for the $\ell_1+\tr$ regularization, the sources are a priori only identified up to a rotation matrix. However, under the assumption that the sources are conditionally independent given observed nodes, $K_{HH}$ is diagonal, and when the groups of observed variables associated with each latent variables are disjoint, the columns of $K_{OH}$ are orthogonal, and are thus proportional to the eigenvectors of $K_{OH}K_{HH}^{-1}K_{HO}$ as soon as the coefficients of the diagonal matrix $K_{HH}$ are all distinct, by uniqueness of the SVD. They are thus identifiable, and it makes sense to estimate the columns of $K_{OH}$ by the eigenvectors of the estimated matrix ${L}$. Obviously, for model 3, we cannot hope to recover $K_{OH}$ with this estimator.

Figure \ref{fig:synth} shows the different estimated concentration matrices obtained, for the choice of hyperparameters $\gamma$ and $\lambda$, that produced matrices $S$ with the correct sparsity level and $L$ with the correct rank.
 
For models 1 and 2, the size of the blocks is fixed. For model 3, we use the gauge $\Omega_w$ introduced in Section~\ref{sec:different_sparsity_levels} which estimates as well the size of the different blocks, based on prior specified via the vector of weights $w$, which penalizes differently different block sizes. We use $w_{k}=\sqrt{k}$ which we found performs reasonably well empirically. The result show clearly that even for models  1 and 3, where, in theory the different columns of $K_{OH}$ could be estimated with an SVD based on the formulation of \citet{chandrasekaran2010}, these columns are not so well estimated and their support would not be estimated correctly by thresholding the absolute value of the estimated coefficients (with perhaps the exception of the smallest component in model 3).

These results show empirically that the proposed formulation performs well beyond the regime for which we provide theoretical guarantees in Section~\ref{sec:id}: first, the experiments are in a finite data setting, so in a sense with noise; then the settings considered are of relatively low dimension with ratio $k_0/k$ and $k_0/\sqrt{k}$ larger than in the theoretical analysis; and we obtained also convincing results for the case where blocks overlap (model 3), or the size of the blocks is estimated as well (model 2).

\begin{figure}
\center
\begin{tabular}{cc}
    \includegraphics[width=.3\linewidth]{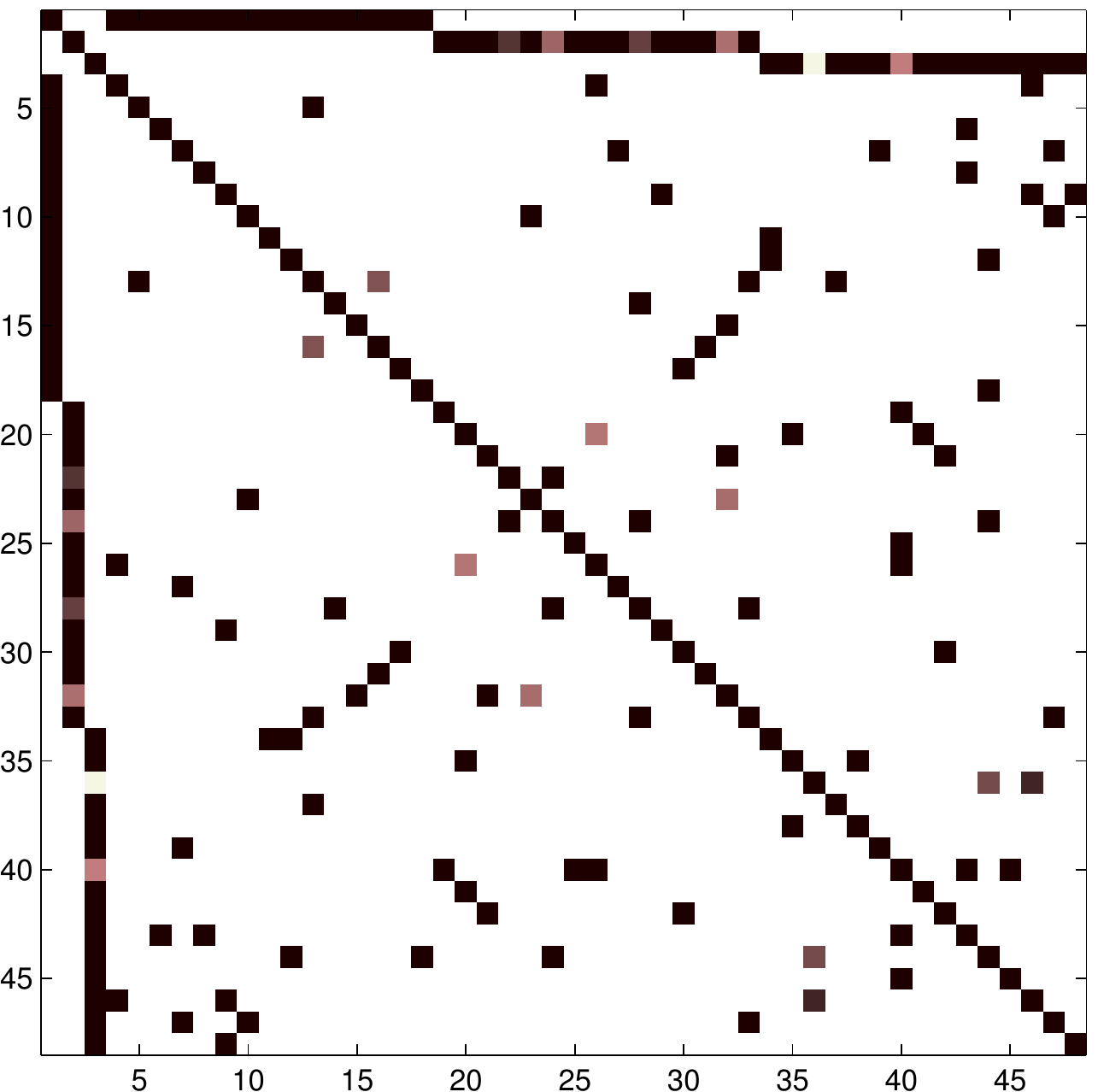} 
  & \includegraphics[width=.3\linewidth]{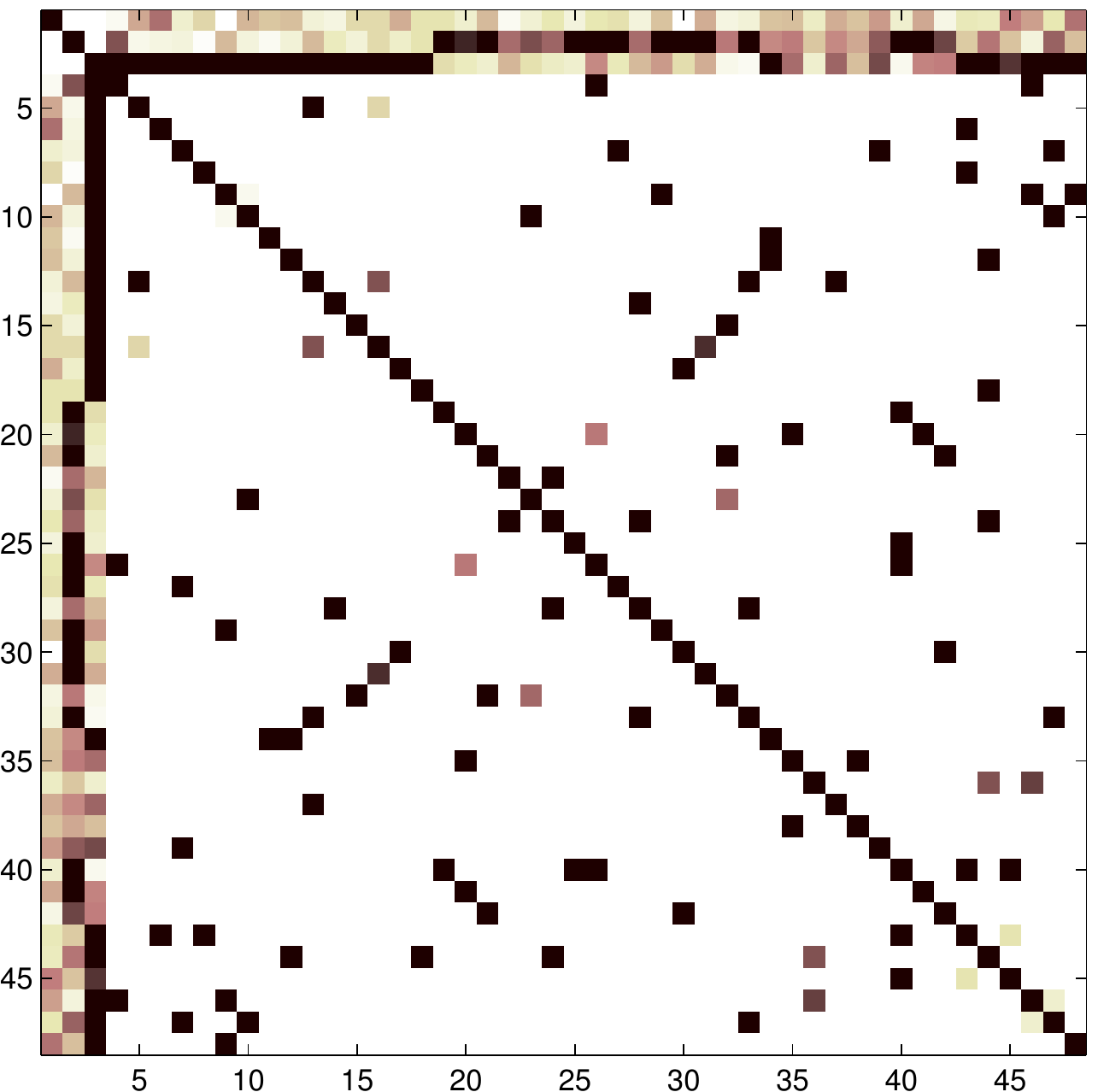} 
   \\    (a) \textit{model 1}, ours & (d)  \textit{model 1}, $\ell_1+\tr$ \\[6pt]
      \includegraphics[width=.3\linewidth]{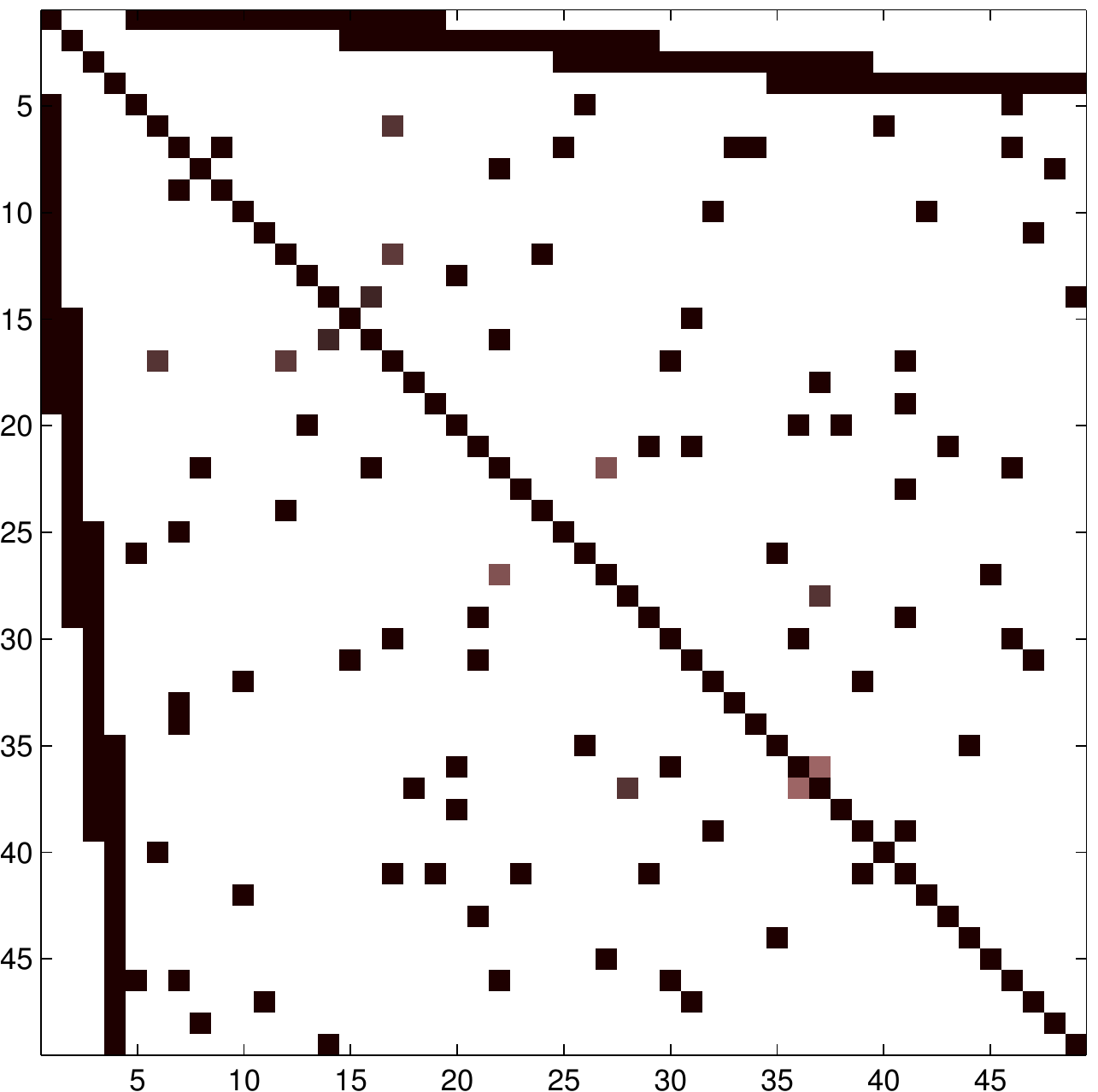}
  &   \includegraphics[width=.3\linewidth]{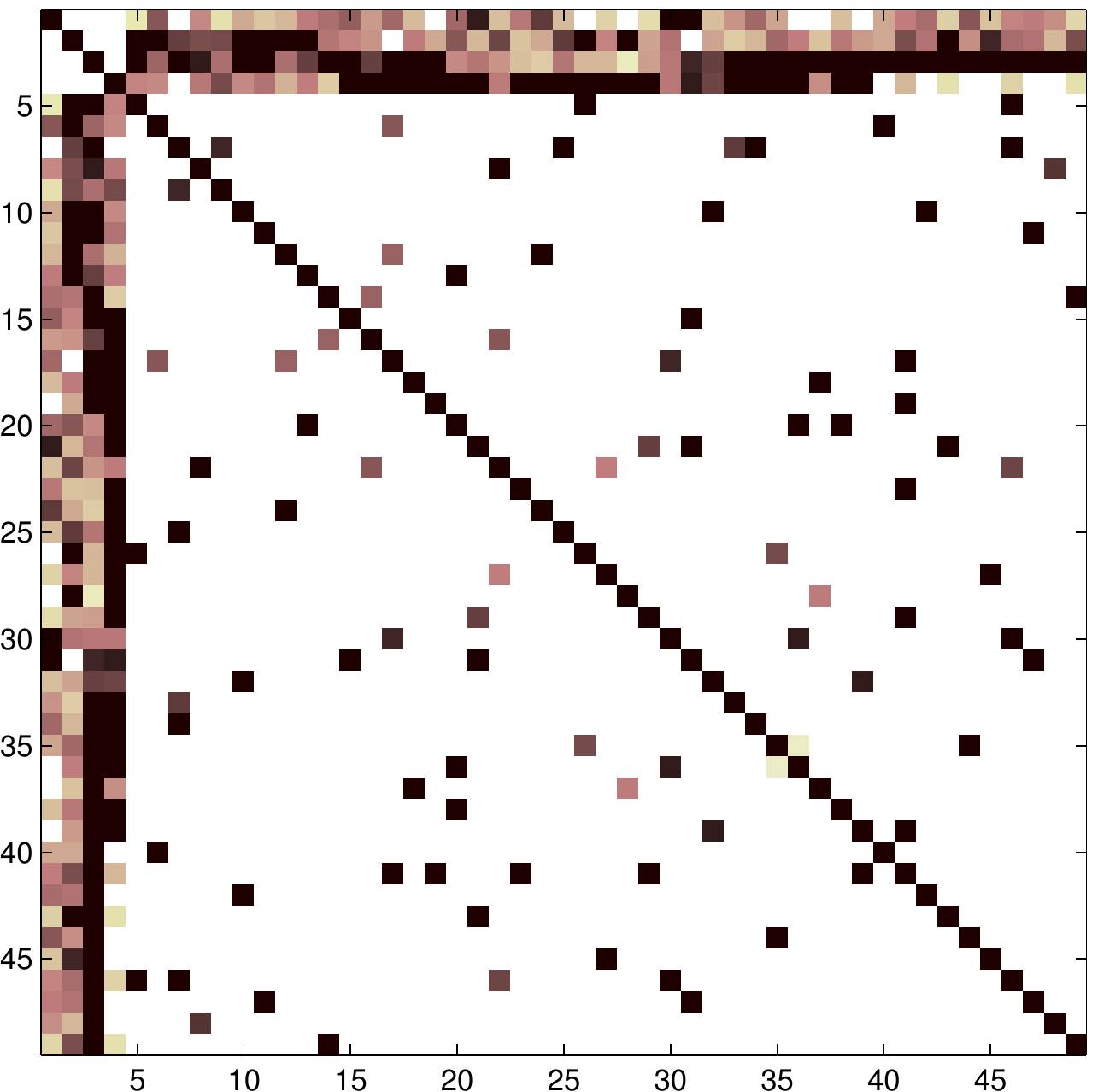}
   \\    (b)  \textit{model 2}, ours   & (e)  \textit{model 2}, $\ell_1+\tr$    \\[6pt]
      \includegraphics[width=.3\linewidth]{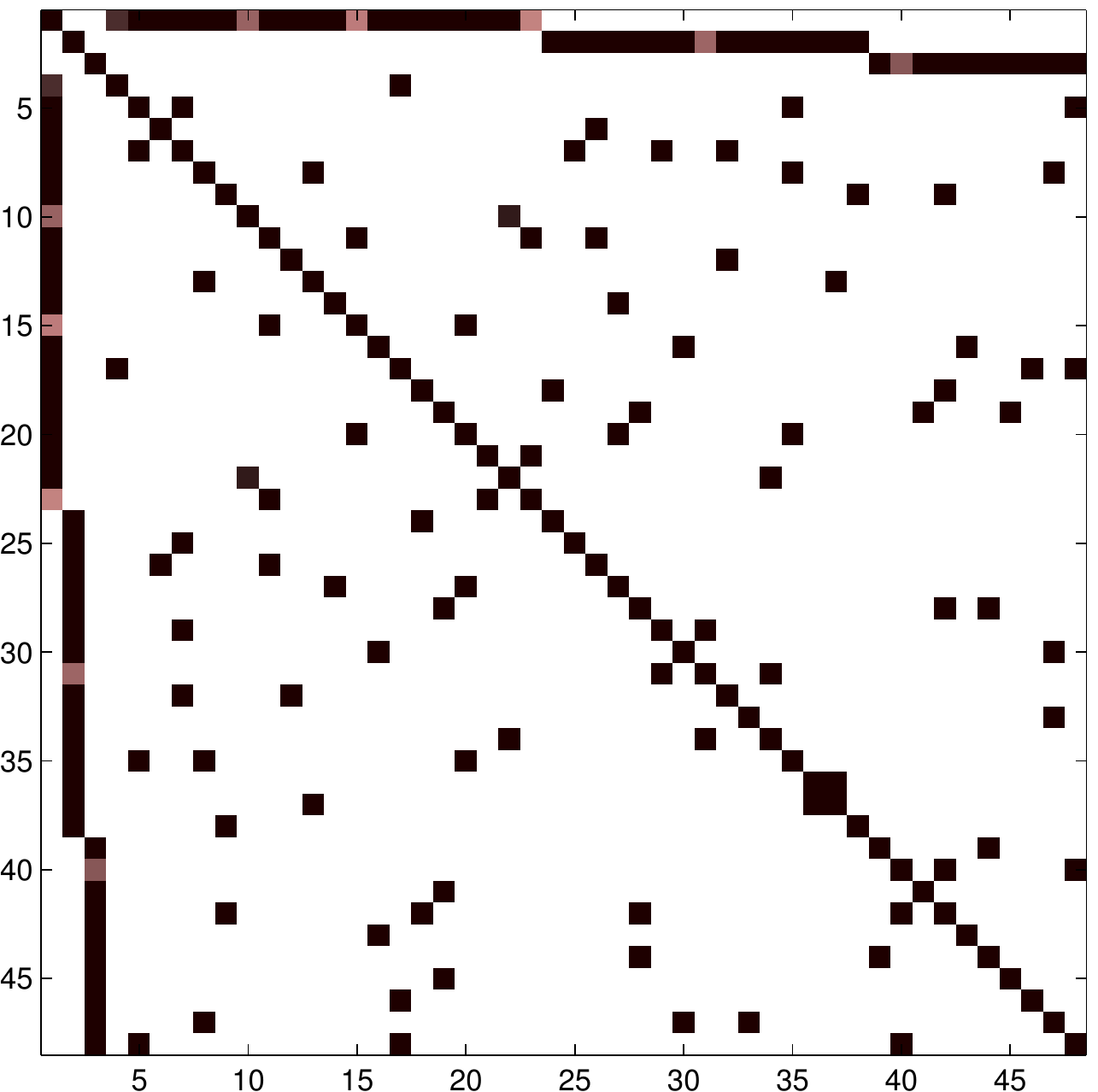}
  &   \includegraphics[width=.3\linewidth]{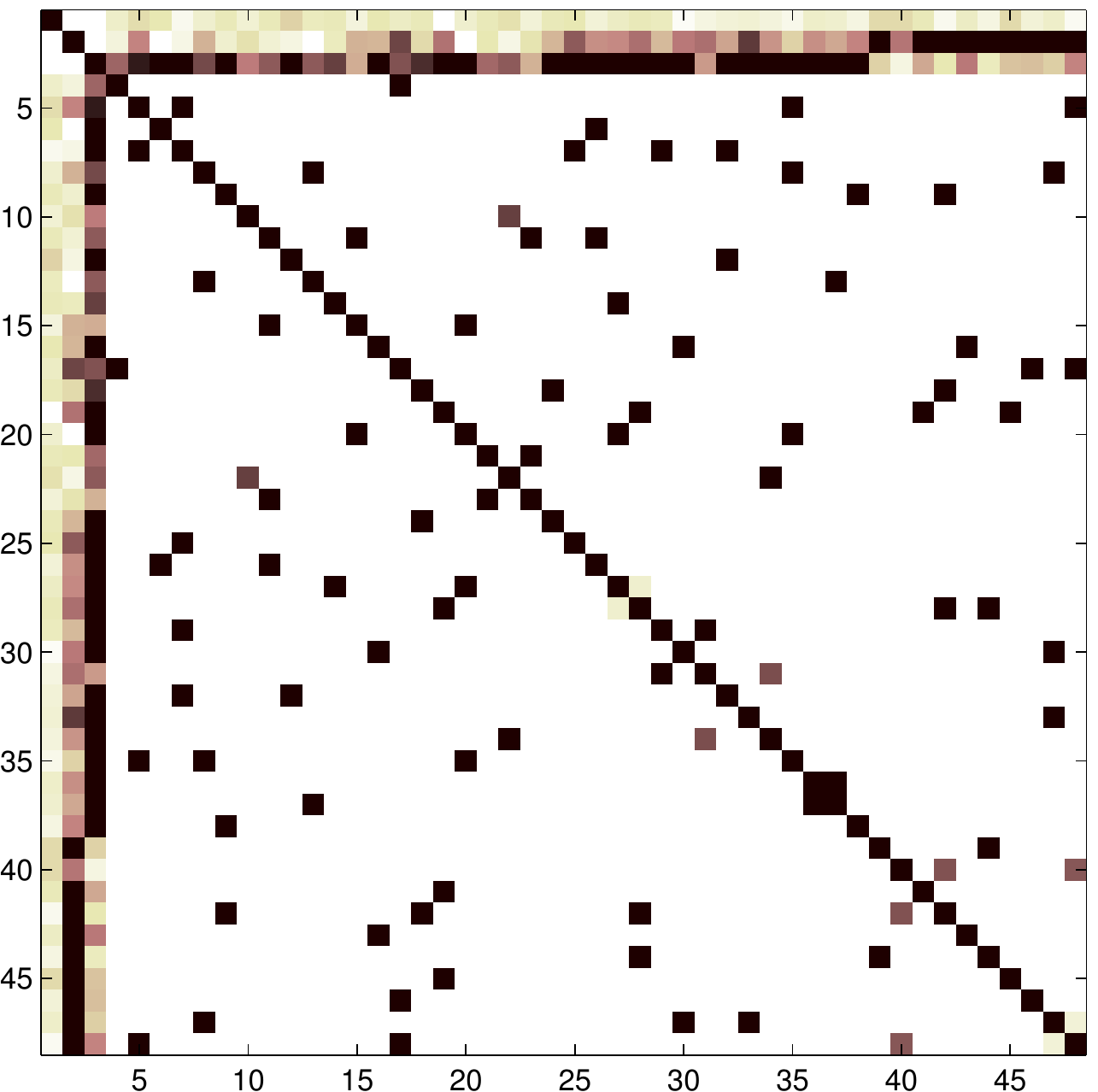}
   \\    (c)  \textit{model 3}, ours & (f)  \textit{model 3}, $\ell_1+\tr$  \\[6pt]
\end{tabular}
\vspace{-1em}
\caption{Estimated $|K_{ij}|$, for $K$ the complete concentration matrices where the three (resp. four) first rows and columns correspond to the latent variables of \textit{model 1} and \textit{model 3} (resp. \textit{model 2}) : for \textit{model 1} in (a) ours and (d) $\ell_1+\tr$ regularization; for \textit{model 2} in (b) ours and (e) $\ell_1+\tr$ regularization; for \textit{model 3} in (c) ours and (f) $\ell_1+\tr$ regularization }
\label{fig:synth}
\end{figure}

\subsection{Second experiment}

We consider a graph which is somewhat larger, with $160$ nodes, corresponding to an empirical covariance matrix which is 12 times larger than the previous ones. 
In this case, the part of the graph corresponding to the observed variables is drawn from an Erd\"os-R\'enyi model, where each edge has a fixed appearance probability $p_{s}=0.01$. We add $4$ latent variables connected to non overlapping groups of $35$ observed variables and we generate $2000$ observations from the full graph. 
We compute the marginal covariance matrix as before (see Appendix~\ref{app:sparse_wishart}) and again solve \eqref{opt_nc} with the score matching loss to compute our estimator.
Figure \ref{fig:synthlarge} shows the low rank component of the ground truth covariance and the low rank component obtained by our method. We clearly recover the latent structure of the graph, i.e., the four groups of $35$ variables.\\

\begin{figure}
\label{fig:synthlarge}
\center
\begin{tabular}{cc}
    \includegraphics[width=.4\linewidth]{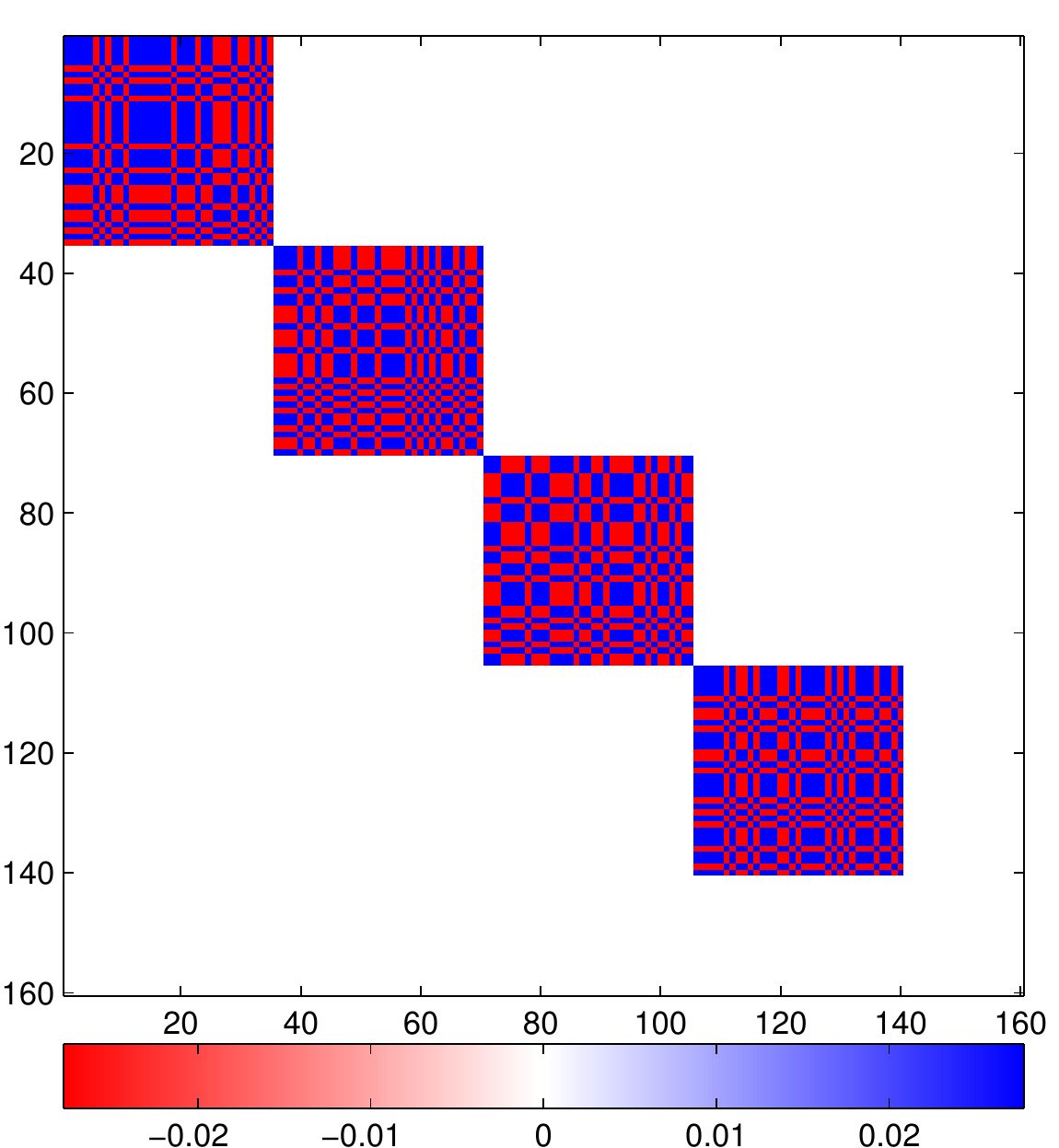} 
  & \includegraphics[width=.4\linewidth]{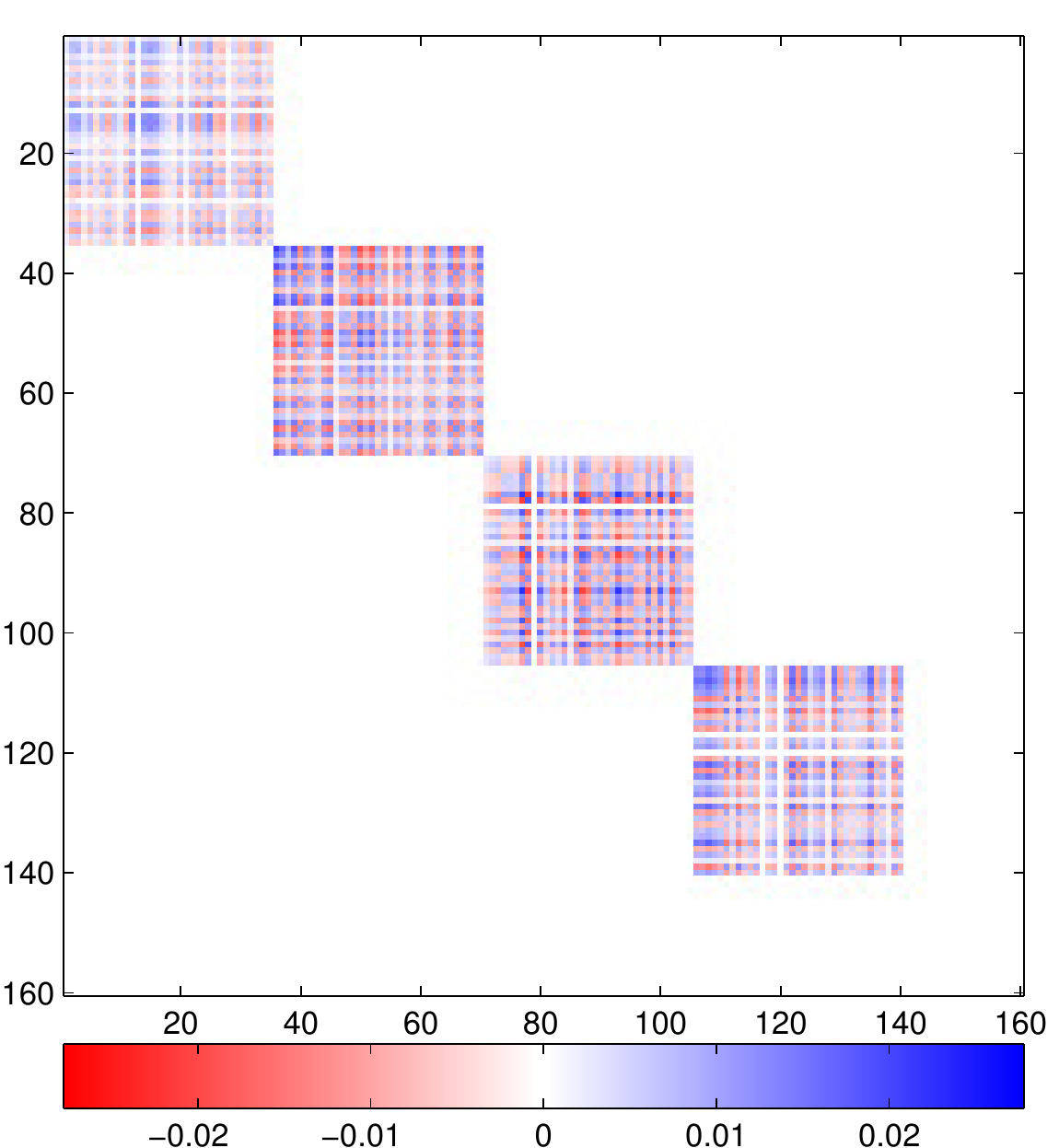} 
   \\    (a) & (b) \\[6pt]
\end{tabular}
\caption{Experiment on on model with $p = 160$ observed variables and 4 unobserved. $n= 2000$ and $k = 35$. (left) low rank component of the ground truth covariance (right) low rank component obtained by our method.}
\end{figure}

\section{Conclusion}
We considered a family of latent variable Gaussian graphical models whose marginal concentration matrix over the observed variables decomposes as a sparse matrix plus a low-rank matrix \emph{with sparse factors}. We introduced a convex regularization to specifically induce this structure on the low rank component, proposed a convex formulation to estimate both components, based on a regularized score matching loss, and proposed an efficient algorithm to solve it. We provided as well an identifiability result, that guarantees that, in the limit of an infinite amount of data, and when the blocks associated with each latent variable are disjoint, the graph structure of the whole graph, including connectivity between latent and observed variables is recovered by the proposed formulation. 

Our experiments show promising results in terms of recovery of the structure of the whole graph, including when there is overlap or when cliques associated with latent variables have different sizes. Future work could study more precisely the formulations that allows for different clique sizes, and extend identifiability/recovery results in different directions.

\bibliographystyle{plainnat}
\bibliography{latentggm_ready}

\newpage
\appendix
\section{Supplementary material}

\subsection*{Proof of Lemma~\ref{lem:LMO}}
\begin{claim}
Let $Y\in\RR^{p\times p}$ be a symmetric matrix. The polar gauge of $\Omega$ writes
\begin{align}
{\Omega^{\circ}}\!(Y)= \max_{I\in\mathcal{G}^p_k}\lambda^{+}_{\max}(Y_{II}).
\end{align}
\end{claim}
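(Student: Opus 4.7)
The plan is to recognize $\Omega$ as an atomic gauge and apply the standard duality formula. Since $\Omega$ is the atomic gauge associated with the atom set $\mathcal{A}=\{uu^\top \mid \|u\|_2 \leq 1,\,\|u\|_0 \leq k\}$, a classical result on atomic norms/gauges (see e.g.\ \citet{chandrasekaran2012convex}) gives
$$\Omega^\circ(Y)=\sup_{A\in \mathcal{A}}\langle A,Y\rangle.$$
For a symmetric $Y$ and $A=uu^\top$ one has $\langle uu^\top,Y\rangle=u^\top Y u$, so the formula reduces to
$$\Omega^\circ(Y)=\sup\{u^\top Y u \mid \|u\|_2\leq 1,\;\|u\|_0\leq k\}.$$

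Next, I would split this supremum by first fixing the support of $u$. For any index set $I\subset \itgset{p}$, restricting to $u$ supported on $I$ reduces the inner optimization to $\max_{\|v\|_2\leq 1} v^\top Y_{II} v$, where $v\in\RR^{|I|}$. Because $Y_{II}$ is symmetric, this quantity equals $\lambda_{\max}(Y_{II})$ if this eigenvalue is nonnegative and $0$ otherwise (attained by $u=0$, which is feasible since $\|0\|_0=0\leq k$), i.e.\ it is exactly $\lambda^+_{\max}(Y_{II})$. Taking the supremum over supports,
$$\Omega^\circ(Y)=\max_{|I|\leq k}\lambda^+_{\max}(Y_{II}).$$

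The last step is to argue we may restrict to $|I|=k$, matching the statement of the lemma. If $|I|<k$, pick any $I'\supset I$ with $|I'|=k$; then $Y_{II}$ is a principal submatrix of $Y_{I'I'}$, and by Cauchy's interlacing theorem $\lambda_{\max}(Y_{II})\leq \lambda_{\max}(Y_{I'I'})$, hence $\lambda^+_{\max}(Y_{II})\leq \lambda^+_{\max}(Y_{I'I'})$. Thus the maximum over $|I|\leq k$ coincides with the maximum over $I\in \mathcal{G}^p_k$, which yields the claim.

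This proof is mostly bookkeeping; the only mildly subtle point is handling the nonnegative-part convention (which is forced because $0\in\mathcal{A}$ makes the supremum automatically nonnegative) and the reduction from $|I|\leq k$ to $|I|=k$ via interlacing. Neither is a real obstacle, so I do not anticipate a genuinely hard step.
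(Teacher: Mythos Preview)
Your proof is correct and follows the same approach as the paper: both compute the polar gauge as a supremum of $\langle Y, uu^\top\rangle = u^\top Y u$ over the atom set and then identify this with the largest $k$-sparse eigenvalue. The paper's proof is a one-line chain of equalities that jumps directly to $\|u\|_0=k,\ \|u\|_2=1$ without comment, whereas you spell out the two points it elides (the nonnegative part arising because $0\in\mathcal{A}$, and the reduction from $|I|\le k$ to $|I|=k$ via interlacing); your version is simply a more careful write-up of the same argument.
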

\begin{proof}
$\displaystyle \Omega^{\circ}(Y) 
=\max_{\Omega(X)\leq 1} {\tr}{(Y^{\top}X)}
=\max_{\substack{\|u\|_0=k \\ \|u\|_ 2=1 }} u^{\top}Yu
=\max_{I\in\mathcal{G}^p_k}\lambda^{+}_{\max}(Y_{II}).$
\end{proof}

\subsection*{Lemmas charactering the subgradients}
In the following lemmas we express the subgradients of the $\ell_1$ norm and $\Omega$ as decomposed on the tangent subspaces. The result for the $\ell_1$-norm is well known.
\begin{lemma}
(Characterization of $\ell_1$ subgradient) $Q\in \gamma \partial \|.\|_1(S^*)$ if and only if
\begin{enumerate}[label=\textbf{(A.\arabic*)}, leftmargin=2\parindent]
\item \label{cond:l1_abis} $\mathcal{P}_{\T_0}(Q)=\gamma \sign(S^*)$
\item \label{cond:l1_bbis} $\|\mathcal{P}_{\T^c_0}(Q)\|_{\infty}\leq \gamma$
\end{enumerate}
\end{lemma}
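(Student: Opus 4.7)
The plan is to reduce the statement to the textbook entry-wise characterization of the $\ell_1$-subdifferential and then to rewrite it in the projection notation used by the lemma. Since $\|S\|_1 = \sum_{i,j}|S_{ij}|$ is separable across entries, the calculus of subgradients for a sum of convex functions gives directly that $Q \in \gamma\,\partial \|\cdot\|_1(S^*)$ if and only if, entry-wise, $Q_{ij} \in \gamma\,\partial|\cdot|(S^*_{ij})$, i.e.\ $Q_{ij} = \gamma\sign(S^*_{ij})$ when $S^*_{ij} \neq 0$, and $|Q_{ij}| \leq \gamma$ when $S^*_{ij} = 0$.

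The next step is to recast these pointwise conditions in terms of $\T_0$ and its orthogonal complement. By definition, $\T_0 = \{M=M^\top \mid \supp(M)\subset \supp(S^*)\}$, so $\mathcal{P}_{\T_0}(Q)$ keeps the entries of $Q$ indexed by $\supp(S^*)$ and zeroes the rest. The condition that $Q_{ij} = \gamma\sign(S^*_{ij})$ on $\supp(S^*)$ is therefore equivalent to \ref{cond:l1_abis}. Conversely, $\mathcal{P}_{\T_0^c}(Q)$ retains only entries off $\supp(S^*)$, and because $\|\cdot\|_\infty$ denotes the entrywise maximum absolute value, the bound $|Q_{ij}|\leq \gamma$ off $\supp(S^*)$ is precisely \ref{cond:l1_bbis}. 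Combining the two rewrites yields the entry-wise subdifferential characterization, giving the equivalence in both directions.

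The only subtle point worth verifying is compatibility with the symmetry constraint: since the optimization variable $S$ is symmetric, one must check that $\mathcal{P}_{\T_0}$, viewed as the orthogonal projector on the subspace of symmetric matrices with support in $\supp(S^*)$, coincides with the coordinatewise restriction to $\supp(S^*)$. This is immediate because $\supp(S^*)$ is itself symmetric (as $S^*=S^{*\top}$), so restriction to $\supp(S^*)$ commutes with symmetrization. I do not anticipate any real obstacle in this proof; it is essentially a direct reformulation of a textbook fact, and the only task is to make sure the coordinate-wise conditions translate cleanly into the block notation used throughout Section~\ref{sec:id}.
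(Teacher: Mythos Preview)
Your proposal is correct. The paper does not actually give a proof of this lemma: it merely states that ``the result for the $\ell_1$-norm is well known'' and moves on, so there is nothing to compare against. Your entry-wise argument followed by the translation into the $\T_0/\T_0^c$ projection language is precisely the standard derivation one would expect here.
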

We then characterize the subgradient of the gauge we have introduced.
\begin{lemma}
(Characterization of the subgradient of $\Omega$)\\ 
If $L^*$ is of the form $L^*=\sum_{i=1}^r s_i u^i {u^i}^\top$, with $\supp(u^i)\subset I_i$ and $I_i \cap I_j =\varnothing$ for all $i\neq j$, we have that $Q\in \partial \Omega(L^*)$ if and only if
\begin{enumerate}[label=\textbf{(B.\arabic*)}, leftmargin=2\parindent]
\item \label{cond:om_1bis} $\forall i \in \itgset{r},\:\mathcal{P}_{\T_i}(Q)=u^i{u^i}^\top$
\item \label{cond:om_2bis} $\forall i \in \itgset{r},\:\lambda_{\max}^+\big (\mathcal{P}_{\T_i^c}(Q) \big )\leq 1$
\item \label{cond:om_3bis} $\forall J \in \mathcal{G}^p_k \backslash \{I_1,\ldots, I_r\}, \: \lambda_{\max}^+(Q_{JJ})\leq 1$
\end{enumerate}
\end{lemma}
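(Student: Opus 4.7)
My plan is to reduce the lemma to the classical Fenchel characterization of the subgradient of a gauge: since $\Omega$ is a closed gauge,
$$Q \in \partial \Omega(L^*) \iff \Omega^{\circ}(Q) \leq 1 \ \text{and}\ \langle Q, L^*\rangle = \Omega(L^*),$$
and then show that the two clauses on the right are jointly equivalent to \ref{cond:om_1bis}--\ref{cond:om_3bis}. The polar bound $\Omega^{\circ}(Q)\leq 1$ is immediately handled by Lemma~\ref{lem:LMO}, which rewrites it as $\lambda^+_{\max}(Q_{JJ})\leq 1$ for every $J\in\mathcal{G}^p_k$; the cases $J\notin\{I_1,\ldots,I_r\}$ are already condition \ref{cond:om_3bis}, so the whole work is concentrated on the blocks $J=I_i$.

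Next I would compute $\Omega(L^*)$ explicitly. Since the supports $I_i$ are disjoint and $\|u^i\|_2=1$, the decomposition $L^*=\sum_i s_i u^i {u^i}^\top$ gives the upper bound $\Omega(L^*)\leq \sum_i s_i$. For the matching lower bound, I would test the dual by using $Q_0 := \sum_i u^i {u^i}^\top$: each block $(Q_0)_{JJ}=\sum_i u^i_{J}(u^i_{J})^\top$ is a sum of vectors with disjoint supports in $J$, so its nonzero eigenvalues are $\|u^i_J\|^2 \leq 1$, giving $\Omega^\circ(Q_0)\leq 1$; then $\langle Q_0, L^*\rangle = \sum_i s_i$ via orthogonality, which via Fenchel duality forces $\Omega(L^*) = \sum_i s_i$.

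With $\Omega(L^*)=\sum_i s_i$ in hand, the tangent equation $\langle Q,L^*\rangle = \Omega(L^*)$ becomes
$$\sum_i s_i\, u^{i\top} Q u^i \;=\; \sum_i s_i,$$
and since $\Omega^\circ(Q)\le 1$ already implies $u^{i\top}Q u^i \leq \lambda^+_{\max}(Q_{I_i I_i})\leq 1$ for each $i$, equality is equivalent to $u^{i\top}Q u^i = 1$ for all $i$ with $s_i>0$. By the Rayleigh quotient characterization this forces $u^i$ to be a top eigenvector of $Q_{I_i I_i}$ with eigenvalue $1$, i.e.\ $Q_{I_i I_i}u^i = u^i$. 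Plugging into the explicit formula $\mathcal{P}_{u^i}(M)=M u^i{u^i}^\top + u^i {u^i}^\top M - u^i(u^{i\top}Mu^i){u^i}^\top$ used to define $\mathcal{P}_{\T_i}$ yields $\mathcal{P}_{\T_i}(Q)=u^i{u^i}^\top$, which is \ref{cond:om_1bis}; conversely \ref{cond:om_1bis} implies $Q_{I_i I_i}u^i=u^i$ by applying the same identity to $u^i$ on the right.

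Finally, to finish the equivalence I would use the decomposition $Q_{I_i I_i}=\mathcal{P}_{\T_i}(Q)+\mathcal{P}_{\T_i^c}(Q)=u^i{u^i}^\top+\mathcal{P}_{\T_i^c}(Q)$ together with the observation that $\T_i^c=\{M\in\bar\T_i : M u^i=0\}$, so the two summands act on the orthogonal subspaces $\mathrm{span}(u^i)$ and $u^{i\bot}\cap\bar\T_i$. This shows that the spectrum of $Q_{I_i I_i}$ is the disjoint union of $\{1\}$ and $\mathrm{spec}(\mathcal{P}_{\T_i^c}(Q))$, so $\lambda^+_{\max}(Q_{I_i I_i})\le 1$ on the block $I_i$ is exactly \ref{cond:om_2bis}. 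Combined with \ref{cond:om_3bis} for $J\notin\{I_1,\ldots,I_r\}$, this is $\Omega^\circ(Q)\le 1$, closing the circle.

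The main obstacle is the back-and-forth between \ref{cond:om_1bis} and the spectral statement $Q_{I_i I_i}u^i = u^i$: this requires the somewhat careful identification $\T_i^c=\{M\in\bar\T_i : Mu^i=0\}$ and the explicit formula for $\mathcal{P}_{u^i}$. Everything else (computing $\Omega(L^*)$ and splitting the polar condition by block) is routine once Lemma~\ref{lem:LMO} and disjointness of the $I_i$ are invoked.
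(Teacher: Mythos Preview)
Your proof is correct and follows essentially the same route as the paper's: both reduce to the Fenchel characterization $\Omega^\circ(Q)\leq 1$ together with $\langle Q,L^*\rangle=\Omega(L^*)$, invoke Lemma~\ref{lem:LMO} for the polar, and then use that ${u^i}^\top Q u^i=1$ forces $u^i$ to be a top eigenvector of $Q_{I_iI_i}$, yielding the block decomposition $Q_{I_iI_i}=u^i{u^i}^\top+\mathcal{P}_{\T_i^c}(Q)$. Your write-up is in fact more complete than the paper's: you explicitly establish $\Omega(L^*)=\sum_i s_i$ via the dual witness $Q_0=\sum_i u^i{u^i}^\top$ (the paper uses this identity without justification) and you spell out the converse direction, which the paper's proof omits.
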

\begin{proof}
By the characterization of the subgradient of a gauge we have  $Q\in \partial \Omega(L^*)$ if and only if
\begin{align}
\max_{I \in \mathcal{G}^p_k}\lambda_{\max}^+(Q_{II})\leq 1\quad \text{and} \quad \left\langle Q,L^*\right\rangle=\Omega(L^*). \label{eq:subgom}
\end{align}
The inequality implies immediately \ref{cond:om_3} and that $u^{\top}\!Qu\leq1$ for any unit vector $u$ such that $\|u\|_0\leq k$. By definition of $L^*$, the equality becomes $\sum_{I_i\in\mathcal{I}}s_i ({u^i}^{\top}Qu^i-1)=0$. Since all terms of the sum are non negative we must have ${u^i}^{\top}Qu^i=1$. 
Since $1={u^i}^{\top}Qu^i={u^i}^{\top}Q_{I_iI_i}u^i$ and we have $\lambda_{\max}^+(Q_{I_iI_i})\leq 1$, $u^i$ must be an eigenvector of $Q_{I_iI_i}$ with eigenvalue $1$. Given that $Q_{I_iI_i}$ as a real symmetric matrix, admits an orthonormal basis of eigenvectors, we can thus write $Q_{I_iI_i}=u^i{u^i}^{\top} + W_i$ with $W_i \in \T_i^c$ and $\lambda_{\max}^+(W_i)\leq 1$. Since the previous decomposition shows that $W_i=\P_{\T_i^c}(Q)$ and $\P_{\T_i}(Q)=u^i{u^i}^{\top}$ we have shown \ref{cond:om_1} and \ref{cond:om_2}.
\end{proof}

\subsection*{Proof of Proposition~\ref{lem:uniqueopt}}
\begin{claim}
The pair $(S^*,L^*)$  is the unique optimum of \eqref{pb:main} if
\begin{enumerate}[label=\textbf{(T)}, leftmargin=4\parindent]
\item \label{cond:transversality} $\forall i \in \itgset{r},\quad \T_0\cap \T_i = \{0\}$, 
\end{enumerate}
and there exists a dual matrix $Q\in\RR^{p\times p}$ such that:
\begin{enumerate}[label=\textbf{(S.\arabic*)}, leftmargin=4\parindent]
\item \label{cond:l1_a} $\mathcal{P}_{\T_0}(Q)=\gamma \, \sign(S^*)$
\item \label{cond:l1_b} $\|\mathcal{P}_{\T^c_0}(Q)\|_{\infty}<\gamma$
\end{enumerate}
\vspace{-1.7em}
\begin{enumerate}[label=\textbf{(L.\arabic*)}, leftmargin=4\parindent]
\item \label{cond:om_1} $\forall i \in \itgset{r},\quad\mathcal{P}_{\T_i}(Q)=u^i{u^i}^\top$
\item \label{cond:om_2} $\forall i \in \itgset{r},\quad \lambda_{\max}^+ \big ( \mathcal{P}_{\T_i^c}(Q) \big )<1$
\item \label{cond:om_3} $\forall J \in \mathcal{G}^p_k \backslash \{I_1,\ldots, I_r\}, \quad \lambda_{\max}^+ (Q_{JJ})< 1.$
\end{enumerate}
\end{claim}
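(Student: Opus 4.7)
The plan is the standard dual-certificate argument for uniqueness in sparse-plus-structured decompositions. First I would establish optimality: by the two subgradient characterization lemmas stated just above, condition \ref{cond:l1_a} together with the non-strict version of \ref{cond:l1_b} amount exactly to $Q \in \gamma\,\partial \|\cdot\|_1(S^*)$, while \ref{cond:om_1}--\ref{cond:om_3} with non-strict inequalities amount to $Q \in \partial \Omega(L^*)$. Since \eqref{pb:main} is convex with a single linear equality constraint $S+L=M$, its KKT conditions are precisely the existence of such a common dual $Q$, and the hypotheses already yield that $(S^*, L^*)$ is optimal; the strict inequalities will be used only to force uniqueness.

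Let $(\hat S, \hat L)$ be another optimum and set $\Delta := \hat S - S^* = -(\hat L - L^*)$ by feasibility. Combining the duality inequalities $\langle Q,\hat S\rangle \leq \gamma\|\hat S\|_1$ (valid since $\|Q\|_\infty \leq \gamma$ by \ref{cond:l1_a} and non-strict \ref{cond:l1_b}) and $\langle Q, \hat L\rangle \leq \Omega(\hat L)$ (valid since $\Omega^\circ(Q)\leq 1$, which follows from \ref{cond:om_1}, \ref{cond:om_2}, \ref{cond:om_3} via Lemma~\ref{lem:LMO}), I would write
\[\gamma\|\hat S\|_1 + \Omega(\hat L) \;\geq\; \langle Q, \hat S + \hat L\rangle \;=\; \langle Q, M\rangle \;=\; \gamma\|S^*\|_1 + \Omega(L^*),\]
so at the optimum both pairings are tight. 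Tightness of the first pairing together with the strict bound $|Q_{ij}| < \gamma$ off $\supp(S^*)$ given by \ref{cond:l1_b} forces $\supp(\hat S) \subset \supp(S^*)$, i.e.\ $\hat S \in \T_0$, whence $\Delta \in \T_0$.

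For the $\Omega$-side, I would fix any optimal atomic decomposition $\hat L = \sum_j c_j v_j v_j^\top$ with $c_j>0$, $\|v_j\|_2=1$, $\|v_j\|_0 \leq k$. Tightness $\langle Q, \hat L\rangle = \Omega(\hat L) = \sum_j c_j$ forces $v_j^{\!\top}Q v_j = 1$ for every such $j$. Extending $\supp(v_j)$ to a set $J$ of size $k$ and using monotonicity of $\lambda_{\max}^+$ under principal submatrices, \ref{cond:om_3} rules out every $J \notin \{I_1,\ldots,I_r\}$; on $J = I_i$, \ref{cond:om_1} gives $Q_{I_iI_i} = u^i{u^i}^\top + \mathcal{P}_{\T_i^c}(Q)$ and the strict bound \ref{cond:om_2} together with $\|u^i\|_2=1$ makes $1$ a simple top eigenvalue of $Q_{I_iI_i}$ realized only by $\pm u^i$, so $v_j = \pm u^i$. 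Therefore $\hat L = \sum_i c_i u^i{u^i}^\top$ and $\Delta \in \mathrm{span}\{u^i{u^i}^\top\}_i \subset \bigoplus_i \T_i$. Finally, since the sets $I_i$ are disjoint, any matrix in $\T_0 \cap \bigoplus_i \T_i$ restricts on each block $I_i\times I_i$ to an element of $\T_0 \cap \T_i = \{0\}$ by transversality \ref{cond:transversality} and vanishes outside $\bigcup_i I_i \times I_i$, so $\Delta = 0$ and $(\hat S, \hat L) = (S^*, L^*)$. The main obstacle is the tightness analysis on the $\Omega$-side: one must carefully argue that only the atoms $\pm u^i$ can carry positive weight in any $\Omega$-optimal decomposition of $\hat L$, relying crucially on both strict inequalities, \ref{cond:om_3} to eliminate all supports other than the $I_i$ and \ref{cond:om_2} to ensure simplicity of the top eigenvalue within each block.
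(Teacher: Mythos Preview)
Your proof is correct and takes a cleaner, more direct route than the paper's. The paper lifts problem~\eqref{pb:main} to the equivalent ``decomposed'' formulation $\min \gamma\|S\|_1 + \sum_{I\in\mathcal{G}^p_k} \gamma_I(L^{\scriptscriptstyle(I)})$ over all block components, where $\gamma_I$ is the trace-plus-support-indicator gauge, and then runs a chain of Fenchel--Young inequalities with carefully chosen subgradients $Q^{\scriptscriptstyle(I)} \in \partial\gamma_I(L^{\scriptscriptstyle(I)*})$ to force each perturbation $N^{\scriptscriptstyle(I)}$ into the appropriate tangent space before invoking transversality. You instead stay with the original problem and exploit tightness of the global pairing $\langle Q, \hat L\rangle = \Omega(\hat L)$ directly at the level of an optimal atomic decomposition of $\hat L$: the strict condition~\ref{cond:om_3} pins the support of every active atom inside some $I_i$, and~\ref{cond:om_1} together with the strict~\ref{cond:om_2} then identifies $Q_{I_iI_i}=u^i{u^i}^\top+\mathcal{P}_{\T_i^c}(Q)$ as having $1$ as a simple top eigenvalue, forcing the atom to be $\pm u^i$. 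This avoids the infimal-convolution reformulation entirely and reaches the same conclusion with less bookkeeping. The paper's approach has the mild advantage of being written at the level of the gauges $\gamma_I$ and their decomposability, so it would adapt verbatim to higher-rank blocks $L_i^* = U^i D^i {U^i}^\top$; your atom-by-atom argument is tied to the rank-one structure, which is, however, exactly what the stated claim assumes.
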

\begin{proof}

The \ref{cond:l1_a}, \ref{cond:l1_b}, \ref{cond:om_1}, \ref{cond:om_2} and \ref{cond:om_3} clearly imply that there exist a dual matrix $Q$ such that
$Q \in \big (\gamma \partial\|\cdot\|_1(S^* )\big ) \cap \partial\Omega(L^* )$, which is the first order subgradient condition that characterizes the optima of~\eqref{pb:main}.

To show that the solution is \emph{unique} we show that $(S^*,L^*)$ must be obtained as the unique solution of an equivalent minimization problem.
Indeed, consider the gauge $\gamma_I(M)=\tr(M)+\iota_{\{M \succeq 0\}}+\iota_{\{\supp(M) \subset I \times I\}}.$ It is immediate to verify that the polar gauge is $\gamma_I^{\circ}$ such that $\gamma_I^{\circ}(Q)=\lambda_{\max}^+(Q_{II}).$ Thus $\Omega^{\circ}(Q)=\max_{I \in \mathcal{G}^p_k} \gamma_I^{\circ}(Q)$ and, taking polars, we get that
\begin{equation}
\label{eq:omega_conv_inf}
\Omega(M)=\inf \Big \{\sum_{I \in \mathcal{G}^p_k} \gamma_I(\Mi) \mid M=\sum_{I \in \mathcal{G}^p_k}\Mi\Big \}.
\end{equation}

As a consequence, problem~\eqref{pb:main} is equivalent to 
\begin{equation}
\label{pb:decomposed}
\min_{S,(\Li)_{I \in \mathcal{G}^p_k}} \gamma\|S\|_1+\sum_{I \in \mathcal{G}^p_k} \gamma_I(\Li) \quad \st \quad M=S+\sum_{I \in \mathcal{G}^p_k} \Li.
\end{equation}
In particular, if $\big (S^*,(\Lis)_{I \in \mathcal{G}^p_k}\big )$ is an optimal solution of \eqref{pb:decomposed}, and if $L^*=\sum_{I \in \mathcal{G}^p_k} \Lis,$ then $(S^*,L^*)$ is an optimal solution of~\eqref{pb:main}. Conversely, $(S^*,L^*)$ is an optimal solution of~\eqref{pb:main}, then any optimal decomposition of $L^*$ obtained from \eqref{eq:omega_conv_inf} yields an optimal solution of \eqref{pb:decomposed}.

So clearly, if the solution to \eqref{pb:decomposed} is unique, then so must be that of~\eqref{pb:main}. 

Let's then assume that $\big (S^*+N_0,(\Lis+\Ni)_{I \in \mathcal{G}^p_k}\big )$ is another optimal solution to \eqref{pb:decomposed}. Since matrices in both solutions sum to $M$, we must necessarily have 
\begin{equation}
\label{eq:null_sum}
N_0+\sum_{I \in \mathcal{G}^p_k} \Ni=0.
\end{equation}
Let $\Qi \in \partial \gamma_{I}(\Lis)$ and $Q_0 \in \partial\|\cdot\|_1(S^*)$. Then, by convexity, we have
\begin{align}
\label{eq:neg}
\gamma \|S^*\|_1+\sum_{I \in \mathcal{G}^p_k} \gamma_I(\Lis) & =\gamma \|S^*+N_0\|_1+\sum_{I \in \mathcal{G}^p_k} \gamma_I(\Lis+\Ni)\\
&\geq \gamma \|S^*\|_1+\sum_{I \in \mathcal{G}^p_k} \gamma_I(\Lis)+\dotp{Q_0}{N_0}+\sum_{I\in \mathcal{G}^p_k} \dotp{\Qi}{\Ni}. \notag
\end{align}
Consistently with previous notations, we denote by $\I=\{I_i,\ldots,I_r\}$ the set of blocks such that $\Lis\neq 0$, and $Q_i:=Q_{I_i}$, $N_i:=N_{I_i}$.

Now, $\gamma_I$ is a decomposable gauge in the sense of \citet{negahban2012unified}:  in particular if $L^{\scriptscriptstyle (I_i)\, {\scriptstyle*}}=L_i^*:=U^i D^i {U^i}^\top$, with $U^i$ an orthonormal matrix and $D^i$ a diagonal matrix, then $\partial \gamma_{I_i}(L_i^*)=\big \{Q_i^*+ Q_i^c \mid Q_i^c \in \T_i^c,\: \gamma_{I_i}^{\circ}(Q_i^c)\leq 1\big\},$ with $Q_i^*=U^i {U^i}^\top\!.$ Note that, since $\T_i$ and $\T_i^c$ are orthogonal, for all $i \in \itgset{r},$ any  $Q_i \in \gamma_{I_i}(L_i^*)$ is such that $\P_{\T_i}(Q_i)=Q_i^*$. In the rest, of the proof, we choose $Q_i=Q^*_i+Q_i^c$ with $Q_i^c \in \T_i^c$ such that 
\begin{equation}
\label{eq:Qi_polar_to_Ni}
\gamma_{I_i}\big (\P_{\T_i^c}(N_i) \big )=\dotp{\P_{\T_i^c}(N_i)}{Q_i^c}=\dotp{\P_{\T_i^c}(N_i)}{Q_i}
\end{equation}
 (this is clearly possible because for $M \in \T_i^c$, we have precisely that $\gamma_{I_i}(M)=\max\{\dotp{M}{Z} \mid {Z\in \T_i^c,\: \gamma^{\circ}_{I_i}(Z)\leq 1}\}$).

Given that there exists, by assumption of the theorem, $Q$ such that conditions \ref{cond:l1_a},\ref{cond:l1_b},\ref{cond:om_1},\ref{cond:om_2},\ref{cond:om_3} are satisfied, we have in particular that $\P_{\T_i}(Q)=Q_i^*,\: \forall i \in \{0\}\cup \itgset{r},$ with $Q_0^*=\gamma \sign(S^*)$.
 
 So, we have
\begin{align*}
0 &\overset{\eqref{eq:neg}}{\geq} \dotp{Q_0}{N_0}+\sum_{I\in \mathcal{G}^p_k} \dotp{\Qi}{\Ni} \\
&= \sum_{i=0}^r \big (\dotp{Q_i^*}{N_i}+\dotp{\P_{\T_i^c}(Q_i)}{N_i} \big )+\sum_{I \in \mathcal{G}^p_k \backslash{\I}}\dotp{\Qi}{\Ni}\\
&= \sum_{i=0}^r \big (\dotp{Q}{N_i}+\dotp{\P_{\T_i^c}(Q_i-Q)}{N_i} \big )+\sum_{I \in \mathcal{G}^p_k \backslash{\I}}\dotp{\Qi}{\Ni}\\
&\overset{\eqref{eq:null_sum}}{=} \sum_{i=0}^r \dotp{Q_i-Q}{\P_{\T_i^c}(N_i)}+\sum_{I \in \mathcal{G}^p_k \backslash{\I}}\dotp{\Qi-Q}{\Ni}\\
& \overset{\eqref{eq:Qi_polar_to_Ni}}\geq \gamma \|\P_{\T_0^c}(N_0)\|_1 \big (1-{\textstyle \frac{1}{\gamma}} \|\P_{\T_0^c}(Q)\|_{\infty} \big )+\sum_{i=1}^r \gamma_{I_i}\big (\P_{\T_i^c}(N_i) \big )\big (1-\gamma_{I_i}^{\circ}\big (\P_{\T_i^c}(Q) \big )\big )\\
& \hspace{4.85cm}+\sum_{I \in \mathcal{G}^p_k\backslash \I}  \gamma_{I}(\Ni) \big (1-\gamma_{I}^{\circ}(Q) \big ),
\end{align*}
where the last inequality is an instance of the Fenchel-Young inequality.
But this last expression is non negative and, as a consequence of conditions \ref{cond:l1_b},\ref{cond:om_2} and \ref{cond:om_3}, can only be equal to zero if, 
$$
\begin{cases}
 &\|\P_{\T_0^c}(N_0)\|_1=0,\\
\forall i \in \itgset{r},\quad &\gamma_{I_i}\big (\P_{\T_i^c}(N_i) \big )=0,\\
\forall I\in \mathcal{G}^p_k\backslash \I, \quad &\gamma_{I}(\Ni)=0.
\end{cases}
$$ 

So $\forall I \notin \I,\: \Ni=0$, and for all $0\leq i \leq r, \: N_i \in \T_i.$ Finally by \eqref{eq:null_sum}, we have $\sum_{i=0}^r N_i=0$, and by projecting this equality on $\bar{\T}_{i}$ we get 
$N_{0,i}+N_i=0$ with $N_{0,i}:=\P_{\bar{\T}_{i}}(N_0) \in \T_0$ and $N_i \in \T_i$. But, by \ref{cond:transversality}, $\T_0 \cap \T_i=\{0\}$, i.e. the two spaces are in direct sum, in which case the fact that $N_{0,i}+N_i=0$ implies $N_{0,i}=0$ and $N_i=0$. We clearly have $N_0=\P_{\bar{\T}_{00}}(N_0)+\sum_{i=1}^r N_{i,0}=0$, since $\P_{\bar{\T}_{00}}(N_0)=0$ by projection of $\sum_{i=0}^r N_i=0$ on $\bar{\T}_{00}$.
And so finally, for all $0\leq i \leq r,\: N_i=0$, which shows that the solution is necessarily unique.
\end{proof}

\subsection*{Proof of Lemma~\ref{lem:bounds_xxi}}
\begin{claim}
(Bounds on $\xxi$)
Let us consider the elements of Definition~\ref{def:xxi}. Given the definitions of $k_0$ and $\tauu$, we have
\begin{enumerate}[label={(\arabic*}), leftmargin=2\parindent]
\item \label{cond:xxi_0i} $\xxi_{i\rightarrow 0} \leq  \sqrt{\frac{2\bar{\tau}}{k}}$
\item \label{cond:xxi_i0} $\xxi_{0\rightarrow i} \leq k_0 $
\item \label{cond:xxi_0ip} $\xxi_{i\rightarrow 0}' \leq \sqrt{\frac{2\bar{\tau}}{k}}$
\item \label{cond:xxi_i0p} $\xxi_{0\rightarrow i}' \leq 2 k_0 \sqrt{\frac{k_0 \bar{\tau}}{k}}$ 
\end{enumerate}
\end{claim}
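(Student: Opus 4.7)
The plan is to treat each of the four inequalities separately, exploiting complementary aspects of the structure of $\T_i$ and $\T_0$. For (1), any $M \in \T_i$ can be written as $M = u^i v^\top + v(u^i)^\top$ for some $v$. Decomposing $v = \alpha u^i + w$ with $w \perp u^i$, a direct diagonalization of $M$ restricted to the two-dimensional subspace $\text{span}(u^i, w)$ gives the identity $\|M\|_{\op} = |\alpha| + \sqrt{\alpha^2 + \|w\|_2^2}$, from which in particular $\|v\|_2 \leq \|M\|_{\op}$. Combined with the entrywise bound $|M_{jl}|^2 \leq 2\|u^i\|_\infty^2 (v_j^2 + v_l^2)$ and the assumptions $\|u^i\|_\infty^2 \leq \tauu/k$ and $\|v\|_2 \leq 1$, this yields the claimed $\sqrt{\tauu/k}$-order control of $\|M\|_\infty$. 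Bound (3) is then immediate, since the projection $\P_{\T_0}$ acts by restriction to entries in $\supp(S^*)$, so $\|\P_{\T_0}(M)\|_\infty \leq \|M\|_\infty$.

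Bound (2) is a pure sparsity argument. For $Z \in \T_0$ with $\|Z\|_\infty \leq 1$, the support of $Z$ lies in $\supp(S^*)$, so each row of $Z$ has at most $k_0$ nonzero entries, each of magnitude at most $1$. The classical inequality $\|Z\|_{\op} \leq \max_j \|Z_{j\cdot}\|_1$, valid for symmetric matrices, then yields $\|Z\|_{\op} \leq k_0$.

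The most delicate case is (4). The plan is to use the explicit formula $\P_{\T_i}(Z) = u v^\top + v u^\top$ with $u = u^i$ and $v = Z_{I_iI_i}u - \tfrac{1}{2}(u^\top Z_{I_iI_i} u) u$, which comes from the general projection formula $\P_u(A) = A u u^\top + u u^\top A - (u^\top A u) u u^\top$ onto the tangent space of symmetric rank-one matrices at $uu^\top$. Applying the same rank-two identity as in (1), the operator norm of $\P_{\T_i}(Z)$ is controlled by $\|Z_{I_iI_i} u\|_2$, so the task reduces to bounding this quantity using the joint sparsity of $Z$ and flatness of $u^i$. I would apply Cauchy--Schwarz row-wise to get $|(Zu)_j|^2 \leq k_0 \sum_l Z_{jl}^2 u_l^2$ (each row has at most $k_0$ active indices), then sum over $j$, interchange, and use the column sparsity $\sum_j Z_{jl}^2 \leq k_0$ together with the flatness bound $u_l^2 \leq \tauu/k$. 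The main obstacle here is that these bounds must be used \emph{simultaneously} rather than sequentially: a single application of Cauchy--Schwarz with $\sum_l u_l^2 = 1$ yields only $\|Zu\|_2 \leq k_0$, and recovering the tighter $k_0 \sqrt{k_0 \tauu/k}$ scaling requires carefully interleaving the sparsity of $\supp(S^*)\cap (I_i\times I_i)$ with the flatness bound on $u^i$ inside the outer summation over $l$.
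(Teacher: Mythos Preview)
Your arguments for (1)--(3) match the paper's. For (2) you take a cleaner route, using $\|Z\|_{\op}\le \max_j\|Z_{j\cdot}\|_1\le k_0$ in place of the paper's Cauchy--Schwarz computation; this is valid and more direct. Your explicit diagonalisation in (1) is a slightly sharper way than the paper's bilinear-form trick to extract $\|v\|_2\le \|M\|_{\op}$, but the subsequent entrywise bound is identical.

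For (4) there is a real gap. The reduction $\|\P_{\T_i}(Z)\|_{\op}\le 2\|Z_{I_iI_i}u^i\|_2$ is fine and is exactly what the paper does. But your proposed chain --- row-wise Cauchy--Schwarz, sum over $j$, interchange, then column sparsity plus flatness --- does not reach the claimed scaling. From
\[
\|Zu\|_2^2 \;\le\; k_0\sum_{l\in I_i}\Big(\sum_j Z_{jl}^2\Big)\,u_l^2,
\]
applying $\sum_j Z_{jl}^2\le k_0$ gives $k_0^2$; applying $u_l^2\le \tauu/k$ gives $k_0\,(\tauu/k)\sum_{l,j}Z_{jl}^2\le k_0^2\tauu$; applying both at once still gives $k_0\sum_{l\in I_i}k_0\,\tauu/k=k_0^2\tauu$, since you are summing over $|I_i|=k$ indices. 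In every combination you land at $\|Zu\|_2\le k_0\sqrt{\tauu}$, not $k_0\sqrt{k_0\tauu/k}$: you are short by a factor $\sqrt{k_0/k}$, and the ``careful interleaving'' you allude to is never actually carried out (nor can it be, within this decomposition).

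The paper's argument for this step is different in kind: it writes $\|Zu^i\|_2=\max_{\|w\|_2\le 1}|w^\top Zu^i|$ and applies a \emph{single} Cauchy--Schwarz over the double index, obtaining
\[
\sqrt{\textstyle\sum_{j,j'}\delta_{jj'}}\;\cdot\;\sqrt{\textstyle\sum_{j,j'}(u^i_j)^2\,w_{j'}^2\,\delta_{jj'}},
\]
with $\delta$ the support indicator of $S^*$. The second factor is handled by pulling out $\|u^i\|_\infty$ and using the column bound $\sum_j\delta_{jj'}\le k_0$; the first factor is bounded by $k_0$ via the assertion $\sum_{j,j'}\delta_{jj'}=\|Z\|_0\le k_0^2$, and this is precisely where the missing $\sqrt{k_0/k}$ appears. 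You should be aware, though, that this last inequality is not a consequence of $k_0=\max_i\|S^*_{i\cdot}\|_0$ alone (already a diagonal $S^*$ has $\|S^*\|_0=p$), so the paper's own argument for (4) is incomplete at exactly the point your sketch fails to cover.
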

\begin{proof}
\ref{cond:xxi_0i} Let $M$ be any matrix in $\T_i$ such that  $\|M\|_{\op}\leq 1$. We know that $\exists v$ with $\supp(v) \subset I_i$ such that $M=u^iv^{\top}+v{u^i}^{\top}$.  The condition  $\|M\|_{\op}\leq 1$ imposes in particular $|{u^i}^{\top}Mv/\|v\||\leq 1$ which becomes $\|u^i\|^2\|v\| \leq 1 - ({u^i}^{\top}v)^2/\|v\|$. Hence $\|v\|\leq 1$, and
\begin{align*}
\|M\|_{\infty} &= \|u^iv^{\top}+v{u^i}^{\top}\|_{\infty} \\
&\leq \max_{k,l} \rule{0mm}{1.5pc} \big [ |u^i_k||v_l| + |u^i_l||v_k| \big ] \\
&\leq \|u^i\|_{\infty}\max_{k,l} \big [ |v_l| + |v_k| \big ] 
\leq \|u^i\|_{\infty}\sqrt{2} \sqrt{ v_l^2 + v_k^2} 
\leq \sqrt{\frac{2\bar{\tau}}{k}},
\end{align*}
since $\|u^i\|_{\infty}^2\leq \frac{\bar{\tau}}{k}$.

\ref{cond:xxi_0ip} Since $\|\mathcal{P}_{\T_0}(M)\|_{\infty} \leq\|M\|_{\infty}$, we have $\xxi'_{i\rightarrow 0}\leq \xxi_{i\rightarrow 0}$ .\\ 
For the other two inequalities, let $Z$ be any matrix in $\T_0$ such that  $\|Z\|_{\infty}\leq 1$. Then we know that $\supp(Z)\subset \supp(S^*)$. Let us introduce variables $\delta$ such that $\delta_{ij}= 1$ if $S^*_{ij}\neq 0$ and $\delta_{ij}= 0$ otherwise. We notice that, for any $v\in \RR^{p}$,
\begin{align}
\hspace{-5mm}\|Zv\|_2 &= \max_{w: \|w\|_2\leq 1} |w^{\top}Zv|  \nonumber \\
 & =  \max_{w: \|w\|_2\leq 1} \sum_{i,j} |v_i| |w_j| |Z_{ij}| \nonumber \\
 & \leq  \|Z\|_{\infty} \max_{w: \|w\|_2\leq 1} \sum_{i,j} |v_i| |w_j|\delta_{ij} \nonumber \\
&\leq  \|Z\|_{\infty}   \max_{w: \|w\|_2\leq 1} \sqrt{\sum_{i,j} \delta_{ij}}  \sqrt{\sum_{i,j}v_i^2 w_j^2 \delta_{ij}}
\leq  \|Z\|_{\infty} \sqrt{\|Z\|_0}  \|v\|_2, \label{eq:bound_op}
\end{align}
where the second inequality uses Cauchy-Schwarz and the last inequality uses the fact that $\sum_{i,j} \delta_{ij}= \|Z\|_0\leq k_0^2$ and the fact that $|\delta_{ij}|\leq 1$.\\
It follows immediately from \eqref{eq:bound_op} that
\begin{equation}
\label{eq:cauchy_schwarz_op}
\|Z\|_{\op} \leq \|Z\|_{\infty} \sqrt{\|Z\|_0}.
\end{equation}

Inequality \ref{cond:xxi_i0} follows from \eqref{eq:cauchy_schwarz_op} and the fact that $\|Z\|_0^2.$

To prove \ref{cond:xxi_i0p}, note that  since $\mathcal{P}_{\T_i}(Z)=u^i{u^i}^{\top}Z - u^i{u^i}^{\top}Zu^i{u^i}^{\top}+Zu^i{u^i}^{\top}$,
\begin{align*}
\|\mathcal{P}_{\T_i}(Z)\|_{\op} &=  \|u^i{u^i}^{\top}Z(I- u^i{u^i}^{\top}\!)\|_{\op}+\|Zu^i{u^i}^{\top}\!\|_{\op} 
\leq 2 \|Zu^i\|_2.
\end{align*}
But then using the same derivation as the one leading to \eqref{eq:bound_op}, we have
\begin{align*}
\|Zu^i\|_2 &\leq  \|Z\|_{\infty}   \max_{w: \|w\|_2\leq 1} \sqrt{\sum_{j,j'} \delta_{jj'}}  \sqrt{\sum_{j,j'}{u^i_j}^2 w_{j'}^2 \delta_{jj'}} \\
&\leq \|Z\|_{\infty} \, k_0 \, \|u^i\|_{\infty} \sqrt{\sum_{j'} w_{j'}^2 \big (\sum_{j}\delta_{jj'}\big )}\leq 2 \|Z\|_{\infty} \, k_0 \sqrt{\frac{k_0 \bar{\tau}}{k}}.
\end{align*}
\end{proof}

\subsection*{Proof of Lemma~\ref{lem:transversality}}
\begin{claim}[Transversality condition]
Let $\alpha:=k_0\sqrt{\frac{2\tauu}{k}}$. If $\alpha<1,$ then, for all $i \in\itgset{r}$, $\T_0\cap \T_i = \{0\}$.
\end{claim}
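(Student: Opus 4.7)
The plan is a short and standard incoherence argument: we take an arbitrary $M \in \T_0 \cap \T_i$ and show via the incoherence bounds of Lemma~\ref{lem:bounds_xxi} that $M$ must vanish whenever $\alpha < 1$. The product $\xxi_{0 \to i}\,\xxi_{i \to 0}$ will act as a contraction factor relating $\|M\|_{\op}$ to itself, and the assumption $\alpha < 1$ will force this factor to be strictly less than one.

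More concretely, I would proceed as follows. Let $M \in \T_0 \cap \T_i$. Since $M \in \T_i$, by the definition of $\xxi_{i \to 0}$ applied to $M/\|M\|_{\op}$ (assuming $M \neq 0$), we obtain
\begin{equation*}
\|M\|_{\infty} \leq \xxi_{i \to 0}\, \|M\|_{\op}.
\end{equation*}
Similarly, since $M \in \T_0$, the definition of $\xxi_{0 \to i}$ yields
\begin{equation*}
\|M\|_{\op} \leq \xxi_{0 \to i}\, \|M\|_{\infty}.
\end{equation*}
Chaining these two inequalities gives $\|M\|_{\op} \leq \xxi_{0 \to i}\,\xxi_{i \to 0}\, \|M\|_{\op}$. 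By Lemma~\ref{lem:bounds_xxi}, we have the bounds $\xxi_{i \to 0} \leq \sqrt{2\tauu/k}$ and $\xxi_{0 \to i} \leq k_0$, so that
\begin{equation*}
\xxi_{0 \to i}\,\xxi_{i \to 0} \leq k_0 \sqrt{\tfrac{2\tauu}{k}} = \alpha.
\end{equation*}
Under the hypothesis $\alpha < 1$, the chained inequality becomes $\|M\|_{\op} \leq \alpha\, \|M\|_{\op}$ with $\alpha < 1$, which forces $\|M\|_{\op} = 0$, and hence $M = 0$.

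There is essentially no obstacle here: once the two directional incoherence bounds are in place (which is the content of Lemma~\ref{lem:bounds_xxi}), the transversality statement is a one-line contraction argument. The only point to handle cleanly is the trivial case $M = 0$, which is subsumed in the argument above by treating it separately or by normalizing only if $\|M\|_{\op} > 0$. Applying the argument for each $i \in \itgset{r}$ finishes the proof.
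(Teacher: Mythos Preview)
Your proof is correct and follows essentially the same contraction argument as the paper: the paper closes the loop on $\|M\|_\infty$ (writing $\|M\|_\infty=\|\mathcal{P}_{\T_0}\circ\mathcal{P}_{\T_i}(M)\|_\infty\leq \xxi_{i\to 0}\,\xxi_{0\to i}\,\|M\|_\infty$), whereas you close it on $\|M\|_{\op}$, but the two are interchangeable variants of the same idea.
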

\begin{proof}
Let $M\in\T_0\cap \T_i$, then by definition of $\xxi_{0\rightarrow i}$ and $\xxi_{i\rightarrow 0}$ we have 
$$
\|M\|_\infty=\|\mathcal{P}_{\T_0}\circ \mathcal{P}_{\T_i}(M)\|_\infty \leq \xxi_{i\rightarrow 0}\,\xxi_{0\rightarrow i} \|M\|_\infty.
$$
Hence, if $\xxi_{i\rightarrow 0}\,\xxi_{0\rightarrow i}<1$ the only possible solution is $M=0$. But given the upper bounds on $\xxi_{i\rightarrow 0}$ and $\xxi_{0\rightarrow i}$ established in Lemma~\ref{lem:bounds_xxi} we get the result as soon as $\sqrt{\frac{2\bar{\tau}}{k}}k_0<1$.
\end{proof}

\section{Technical lemmas from the proof of Theorem~\ref{theo:two}}

\subsection*{Proof of Lemma~\ref{lem:inverse}}
\begin{claim} 
Let $A:=\begin{bmatrix}
I & P_{\T_0}\\
P_{\T_i} & I\\
\end{bmatrix}$. Then, with Definition~\ref{def:xxi}, if $(1- \xxi_{0\rightarrow i}\,\xxi_{i\rightarrow 0})>0$, then $A$ is invertible and its inverse is 
$$
B:=
\begin{bmatrix}
I & -P_{\T_0}\\
-P_{\T_i} & I\\
\end{bmatrix}
\begin{bmatrix}
(I-P_{\T_0}P_{\T_i})^{-1} & 0\\
0 & (I-P_{\T_i}P_{\T_0})^{-1}\\
\end{bmatrix}.
$$
\end{claim}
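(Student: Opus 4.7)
The plan is in three pieces: first establish invertibility of $I-P_{\T_0}P_{\T_i}$ and $I-P_{\T_i}P_{\T_0}$ via a Neumann series, then verify the formula for $A^{-1}$ by direct block-matrix multiplication, and finally extract the two norm estimates from the Neumann expansion.

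For the invertibility/norm step I would work on $\T_i$ first. Take any $v \in \T_i$ and iterate: applying $P_{\T_0}$ produces an element of $\T_0$ to which the incoherence measures apply in the order
\[
\|P_{\T_0} v\|_{\op} \leq \xxi_{0\rightarrow i}\, \|P_{\T_0} v\|_{\infty} \leq \xxi_{0\rightarrow i}\, \|v\|_{\infty} \leq \xxi_{0\rightarrow i}\,\xxi_{i\rightarrow 0}\, \|v\|_{\op},
\]
where the middle inequality uses that the entrywise $\infty$-norm is non-expansive under the coordinate-wise projection $\mathcal{P}_{\T_0}$, and the two outer inequalities are just the definitions of $\xxi_{0\rightarrow i}$ (applied to $P_{\T_0}v \in \T_0$) and $\xxi_{i\rightarrow 0}$ (applied to $v \in \T_i$). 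Since $P_{\T_i}$ is an orthogonal projector it is non-expansive in $\|\cdot\|_{\op}$, so $\|P_{\T_i} P_{\T_0} v\|_{\op} \leq \alpha \|v\|_{\op}$ with $\alpha:=\xxi_{0\rightarrow i}\xxi_{i\rightarrow 0}<1$. Hence $(P_{\T_i}P_{\T_0})$ is a strict contraction on $\T_i$ in the operator norm, the Neumann series $\sum_{k\geq 0}(P_{\T_i}P_{\T_0})^k v$ converges, and telescoping gives
\[
\|(I - P_{\T_i}P_{\T_0})^{-1} v\|_{\op} \leq \sum_{k \geq 0} \alpha^k \|v\|_{\op} = \tfrac{1}{1-\alpha}\|v\|_{\op}.
\]
The dual bound on $\T_0$ in $\|\cdot\|_{\infty}$ is proved by the symmetric chain, swapping the roles of $\T_0$ and $\T_i$ (and of $\op$ and $\infty$).

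For the verification of the inverse, I would simply expand
\[
\begin{bmatrix} I & P_{\T_0}\\ P_{\T_i} & I \end{bmatrix} \begin{bmatrix} I & -P_{\T_0}\\ -P_{\T_i} & I \end{bmatrix} = \begin{bmatrix} I-P_{\T_0}P_{\T_i} & -P_{\T_0}+P_{\T_0} \\ P_{\T_i}-P_{\T_i} & I - P_{\T_i}P_{\T_0} \end{bmatrix} = \begin{bmatrix} I-P_{\T_0}P_{\T_i} & 0 \\ 0 & I-P_{\T_i}P_{\T_0} \end{bmatrix},
\]
using only that projections are idempotent (so e.g.\ $P_{\T_0}\cdot I - I\cdot P_{\T_0} = 0$). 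Multiplying on the right by the block-diagonal inverse then yields the identity.

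The only delicate point is the first step, because we are mixing two different norms on two different subspaces and need to chain them through the correct alternation of projections; once the operator-norm contractivity of $P_{\T_i}P_{\T_0}$ on $\T_i$ (and of $P_{\T_0}P_{\T_i}$ on $\T_0$ in $\|\cdot\|_\infty$) is established, both invertibility and the claimed bounds follow immediately from the geometric series, and the block computation is purely formal.
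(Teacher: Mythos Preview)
Your block-matrix check that $AB=I$ is fine and matches the paper (the parenthetical about idempotence is a non-sequitur, though: the off-diagonal cancellation is just $I\cdot P_{\T_0}-P_{\T_0}\cdot I=0$, which needs nothing about $P_{\T_0}^2$). The genuine gap is in the contractivity step. The assertion ``since $P_{\T_i}$ is an orthogonal projector it is non-expansive in $\|\cdot\|_{\op}$'' is false: $P_{\T_i}$ is orthogonal for the \emph{Frobenius} inner product and is therefore non-expansive in the Frobenius norm, but not in the spectral norm $\|\cdot\|_{\op}$. For instance, with $u=e_1\in\RR^2$ and $M=\bigl(\begin{smallmatrix}-1&1\\1&1\end{smallmatrix}\bigr)$ one has $\|M\|_{\op}=\sqrt{2}$, while $P_{\T_i}(M)=M-(I-uu^\top)M(I-uu^\top)=\bigl(\begin{smallmatrix}-1&1\\1&0\end{smallmatrix}\bigr)$ has $\|P_{\T_i}(M)\|_{\op}=(1+\sqrt{5})/2>\sqrt{2}$. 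In general only $\|P_{\T_i}(M)\|_{\op}\le 2\|M\|_{\op}$ is available (the paper invokes exactly this factor $2$ elsewhere), so your Neumann argument would at best yield contraction constant $2\alpha$, which does not suffice under the stated hypothesis $\zeta_{0\rightarrow i}\zeta_{i\rightarrow 0}<1$.

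Your three-step chain $\|P_{\T_0}v\|_{\op}\le\zeta_{0\rightarrow i}\|P_{\T_0}v\|_\infty\le\zeta_{0\rightarrow i}\|v\|_\infty\le\zeta_{0\rightarrow i}\zeta_{i\rightarrow 0}\|v\|_{\op}$ for $v\in\T_i$ is itself valid; only the final application of $P_{\T_i}$ fails. The paper argues in the same spirit but via a direct lower bound rather than a Neumann series: it writes $\|(I-P_{\T_0}P_{\T_i})x\|_\infty\ge\|x\|_\infty-\|P_{\T_0}P_{\T_i}x\|_\infty$ and chains the incoherence constants to bound the subtracted term by $\alpha\|x\|_\infty$, obtaining injectivity (hence invertibility in finite dimension) and then the norm estimate by substituting $x=(I-P_{\T_0}P_{\T_i})^{-1}v$. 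If you want a route that cleanly sidesteps the spectral-norm expansiveness of $P_{\T_i}$, the simplest fix is to note that the hypothesis already implies $\T_0\cap\T_i=\{0\}$ (the Transversality Lemma), and for two subspaces with trivial intersection the Frobenius-operator norm of $P_{\T_0}P_{\T_i}$ restricted to $\T_0$ is strictly below $1$; the Neumann series then converges in Frobenius norm and both diagonal blocks are invertible, which is all the Claim asserts.
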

\begin{proof}
Clearly, $AB=I$. We need to show that $(I-P_{\T_0}P_{\T_i})$ and $(I-P_{\T_i}P_{\T_0})$ are invertible. Let $x$ be any matrix in $\RR^{p\times p}$. From Definition~\ref{def:xxi}, we have
\begin{align*}
\|(I-P_{\T_0}P_{\T_i})x\|_{\infty} &\geq \|x\|_{\infty}-\|P_{\T_0}P_{\T_i}x\|_{\infty}\\ 
&\geq \|x\|_{\infty}-\xxi_{i\rightarrow 0}\|P_{\T_i}x\|_{\op} 
\geq \|x\|_{\infty}-\xxi_{i\rightarrow 0}\,\xxi_{0\rightarrow i}\|x\|_{\infty}.
\end{align*}
Hence, if $x\neq 0$, $\|(I-P_{\T_0}P_{\T_i})x\|_{\infty}\geq (1-\alpha)\|x\|_{\infty}>0$ which shows that $(I-P_{\T_0}P_{\T_i})$ is invertible. Moreover if we let $x=(I-P_{\T_0}P_{\T_i})^{-1}v$ in this inequality, we get the last inequality at the end of the theorem. The case of $I-P_{\T_i}P_{\T_0}$ is exactly symmetric.
\end{proof}

\subsection*{Proof of Lemma~\ref{lem:bounds_projq_help}}

\begin{claim}(Bounds on $\|P_{\T_0^c}q\|_{\infty}$ and $\|P_{\T_i^c}q\|_{\op}$)
\begin{align*}
\|P_{\T_0^c}q\|_{\infty} 
&\leq \max_{i\in\itgset{r}}\|q_i^*\|_\infty + \xxi_{i\rightarrow 0}\|\varepsilon_i \|_{\op}, \\
\|P_{\T_i^c}q\|_{\op} 
&\leq  \|q_0^*\|_{\op} + \xxi_{0\rightarrow i}\|\varepsilon_0 \|_{\infty}
\end{align*}
\end{claim}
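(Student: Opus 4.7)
The plan is to decompose $q$ according to~\eqref{eq:q_decomp}, apply the relevant projection, and exploit the disjointness of the blocks $I_i\times I_i$ provided by Assumption~\ref{as:disjoint} together with the two inequalities built into Definition~\ref{def:xxi}.

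For the first inequality, I would first observe that $q_0^*+\varepsilon_0\in\T_0$, so this contribution vanishes under $P_{\T_0^c}$. For $i\ge 1$ each $q_i^*+\varepsilon_i$ lies in $\T_i\subset \bar{\T}_i$, hence its support is contained in $I_i\times I_i$. By Assumption~\ref{as:disjoint} these blocks are pairwise disjoint, and $P_{\T_0^c}$ only zeroes entries on $\supp(S^*)$, so the summands $P_{\T_0^c}(q_i^*+\varepsilon_i)$ still have pairwise disjoint supports. Therefore
\begin{equation*}
\|P_{\T_0^c}q\|_\infty \;=\; \max_{i\in\itgset{r}} \|P_{\T_0^c}(q_i^*+\varepsilon_i)\|_\infty \;\leq\; \max_{i\in\itgset{r}} \|q_i^*+\varepsilon_i\|_\infty,
\end{equation*}
the second step using that $P_{\T_0^c}$ can only zero coefficients. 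Triangle inequality and the bound $\|\varepsilon_i\|_\infty\leq \xxi_{i\rightarrow 0}\|\varepsilon_i\|_{\op}$, which follows directly from $\varepsilon_i\in\T_i$ and Definition~\ref{def:xxi}, then close the first estimate.

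For the second inequality I would apply $P_{\T_i^c}$ to the same decomposition of $q$. The term $q_i^*+\varepsilon_i\in\T_i$ is killed, and for each $j\in\itgset{r}\setminus\{i\}$ the matrix $q_j^*+\varepsilon_j$ is supported on $I_j\times I_j$, disjoint from $I_i\times I_i$, so it is annihilated already by $P_{\bar{\T}_i}$ and hence by $P_{\T_i^c}$. Only the $j=0$ contribution remains, giving $P_{\T_i^c}q=P_{\T_i^c}(q_0^*+\varepsilon_0)$. Writing the explicit form $P_{\T_i^c}M=(I-u^i{u^i}^\top)M_{I_iI_i}(I-u^i{u^i}^\top)$, I would then observe that restriction to a principal submatrix does not increase the operator norm and that sandwiching by the orthogonal projector $I-u^i{u^i}^\top$ is contractive, so $\|P_{\T_i^c}M\|_{\op}\leq \|M\|_{\op}$. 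Triangle inequality and the bound $\|\varepsilon_0\|_{\op}\leq \xxi_{0\rightarrow i}\|\varepsilon_0\|_\infty$, which follows from $\varepsilon_0\in\T_0$ and Definition~\ref{def:xxi}, then yield the claim.

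The only mildly delicate point is verifying that $\|P_{\T_i^c}M\|_{\op}\leq \|M\|_{\op}$: $P_{\T_i^c}$ is defined as a Frobenius projection, and Frobenius projections are not in general contractions for the operator norm, so the specific projector-sandwich form above is what makes this inequality work. Everything else is routine disjoint-support bookkeeping combined with the defining inequalities of $\xxi_{i\rightarrow 0}$ and $\xxi_{0\rightarrow i}$.
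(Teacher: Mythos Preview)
Your proof is correct and follows essentially the same route as the paper: disjoint-support bookkeeping plus $\|P_{\T_0^c}M\|_\infty\le\|M\|_\infty$ for the first bound, and the projector-sandwich contractivity $\|P_{\T_i^c}M\|_{\op}\le\|M\|_{\op}$ for the second. Your second argument is in fact written more carefully than the paper's chain, which passes through an intermediate $\|q\|_{\op}$ that only makes sense once one has already observed, as you do, that $P_{\T_i^c}$ annihilates every $q_j^*+\varepsilon_j$ with $j\neq 0$.
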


\begin{proof}
By Equation~\eqref{eq:q_decomp},
\begin{align*}
\|P_{\T_0^c}q\|_{\infty} &= \big\|\sum_{i=1}^r P_{\T_0^c}q_i^* + P_{\T_0^c}\varepsilon_i \big\|_{\infty} \\
&\leq \max_{i\in\itgset{r}}\|P_{\T_0^c}q_i^* + P_{\T_0^c}\varepsilon_i \|_{\infty}\\
&\leq \max_{i\in\itgset{r}}\|q_i^* + \varepsilon_i \|_{\infty}
\leq \max_{i\in\itgset{r}}\big (\|q_i^*\|_\infty + \|\varepsilon_i \|_{\infty} \big )
\leq \max_{i\in\itgset{r}}\big ( \|q_i^*\|_\infty + \xxi_{i\rightarrow 0}\|\varepsilon_i \|_{\op}\big ),
\end{align*}
where the first inequality is due to the fact that for each $i\in\{1,\ldots, r\}$, $P_{\T_0^c}q_i^* + P_{\T_0^c}\varepsilon_i$ has its support in $I_i\times I_i$ and $I_i$ are disjoint. The second inequality comes from the fact that for any matrix $A$, $\|P_{\T_0^c}A\|_{\infty} = \max_{i,j\notin \text{supp}(S^*)} |A_{ij}| \leq  \|A\|_{\infty}$. \\
For $\|P_{\T_i^c}q\|_{\op}$, we have
\begin{align*}
\|P_{\T_i^c}q\|_{\op}
&\leq  \|q \|_{\op}
\leq  \|q_0^* + \varepsilon_0 \|_{\op}
\leq  \|q_0^*\|_{\op} + \|\varepsilon_0 \|_{\op}
\leq  \|q_0^*\|_{\op} + \xxi_{0\rightarrow i}\|\varepsilon_0 \|_{\infty},
\end{align*}
where the first inequality is due to the fact that for any matrix $Z$, $$\|P_{\T_i^c}Z\|_{\op} = \|(I-{u^i u^i}^{\top})Z(I-{u^i u^i}^{\top})\|_{\op}\leq \|Z\|_{\op}.$$ 
\end{proof}

\subsection*{Proof of Lemma~\ref{lem:bounds_eps}}
\begin{claim}(Bounds on $\varepsilon_i$)
If $\xxi_{0\rightarrow i}\,\xxi_{i\rightarrow 0} \leq \alpha<1,$ and $(\varepsilon_i)_{i\in\itgset{r}}$ be defined as in the previous lemma, then
\begin{align*}
&\|\varepsilon_0\|_{\infty} \leq {\textstyle \frac{1}{1-\alpha}}\big(\frac{\bar{\tau}}{k} + \xxi_{i\rightarrow 0}'2\gamma k_0\big) \qquad \text{and}\qquad 
\|\varepsilon_i\|_{\op} \leq  {\textstyle \frac{1}{1-\alpha}}\big(2\gamma k_0 + \xxi_{0\rightarrow i}'\frac{\bar{\tau}}{k}\big).
\end{align*}
\end{claim}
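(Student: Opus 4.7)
The plan is to write $(\varepsilon_{0,i},\varepsilon_i)$ in closed form by applying the inverse formula of Lemma~\ref{lem:inverse} to the system~\eqref{eq:sys_eqs}. A direct computation yields
\begin{align*}
\varepsilon_i &= P_{\T_i}(I-P_{\T_0}P_{\T_i})^{-1}P_{\T_0}q_i^{*} - (I-P_{\T_i}P_{\T_0})^{-1}P_{\T_i}q_0^{*},\\
\varepsilon_{0,i} &= -(I-P_{\T_0}P_{\T_i})^{-1}P_{\T_0}q_i^{*} + P_{\T_0}(I-P_{\T_i}P_{\T_0})^{-1}P_{\T_i}q_0^{*}.
\end{align*}
Since the blocks $I_i\times I_i$ are disjoint by Assumption~\ref{as:disjoint}, the $\varepsilon_{0,i}$ have pairwise disjoint supports, so $\|\varepsilon_0\|_\infty=\max_i\|\varepsilon_{0,i}\|_\infty$ and it suffices to bound a single block.

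Next I would bound each of the two summands of each closed-form expression by combining the $\ell_\infty$ and operator-norm contractions provided by Lemma~\ref{lem:inverse} with the incoherence quantities of Definition~\ref{def:xxi}. For the $q_i^{*}$ terms I use $\|P_{\T_0}q_i^{*}\|_\infty\leq\|q_i^{*}\|_\infty=\|u^i\|_\infty^2\leq\bar\tau/k$; passing through $(I-P_{\T_0}P_{\T_i})^{-1}$ gives a factor $1/(1-\alpha)$ in $\ell_\infty$, and the outer projection $P_{\T_i}$ in the formula for $\varepsilon_i$ translates $\ell_\infty$ to operator norm via $\xxi'_{0\rightarrow i}$. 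For the $q_0^{*}$ terms I first bound $P_{\T_i}q_0^{*}$ in operator norm, contract through $(I-P_{\T_i}P_{\T_0})^{-1}$ for another $1/(1-\alpha)$ factor, and then, in the formula for $\varepsilon_{0,i}$, use $\xxi'_{i\rightarrow 0}$ to pass from operator back to $\ell_\infty$ through the outer $P_{\T_0}$.

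The main obstacle I expect is bounding $\|P_{\T_i}q_0^{*}\|_{\op}$ \emph{tightly}: the naive bound $\gamma\,\xxi'_{0\rightarrow i}$ from Definition~\ref{def:xxi} is too weak and would spoil the constant. Instead, the identity $P_{\T_i}(Z)=u^i{u^i}^{\top}Z+Zu^i{u^i}^{\top}-u^i{u^i}^{\top}Z u^i{u^i}^{\top}$ gives $\|P_{\T_i}q_0^{*}\|_{\op}\leq 2\|q_0^{*}u^i\|_2$, and then one exploits the row-sparsity of $q_0^{*}=\gamma\sign(S^*)$ directly: since each row of $q_0^{*}$ has at most $k_0$ nonzeros and $u^i$ is unit-norm supported on $I_i$, a double Cauchy--Schwarz argument (of the same flavour as in the proof of Lemma~\ref{lem:bounds_xxi}) yields $\|q_0^{*}u^i\|_2\leq\gamma k_0$, hence $\|P_{\T_i}q_0^{*}\|_{\op}\leq 2\gamma k_0$. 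This is precisely the source of the $2\gamma k_0$ factor appearing in the target bounds.

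Assembling the four resulting inequalities via the triangle inequality then delivers $\|\varepsilon_i\|_{\op}\leq\frac{1}{1-\alpha}\bigl(2\gamma k_0+\xxi'_{0\rightarrow i}\bar\tau/k\bigr)$ and $\|\varepsilon_{0,i}\|_\infty\leq\frac{1}{1-\alpha}\bigl(\bar\tau/k+2\xxi'_{i\rightarrow 0}\gamma k_0\bigr)$, and the identity $\|\varepsilon_0\|_\infty=\max_i\|\varepsilon_{0,i}\|_\infty$ turns the second estimate into the claimed bound on $\|\varepsilon_0\|_\infty$.
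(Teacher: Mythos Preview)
Your proposal is correct and follows essentially the same route as the paper: both invert the system~\eqref{eq:sys_eqs} via the explicit formula of Lemma~\ref{lem:inverse}, introduce the intermediate quantities $\tilde{\eta}_{0,i}=(I-P_{\T_0}P_{\T_i})^{-1}\eta_0$ and $\tilde{\eta}_i=(I-P_{\T_i}P_{\T_0})^{-1}\eta_i$, apply the $1/(1-\alpha)$ contractions, and combine with $\|\eta_0\|_\infty\leq\bar\tau/k$ and the key estimate $\|\eta_i\|_{\op}=\|P_{\T_i}q_0^*\|_{\op}\leq 2\|q_0^*u^i\|_2\leq 2\gamma k_0$. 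One small remark: your claim that the ``naive'' bound $\gamma\,\xxi'_{0\to i}$ on $\|P_{\T_i}q_0^*\|_{\op}$ would be \emph{too weak} is backwards---by Lemma~\ref{lem:bounds_xxi} it is actually tighter than $2\gamma k_0$; the only reason to establish $2\gamma k_0$ directly is that this is the explicit constant appearing in the statement.
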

\begin{proof}
By Lemma~\ref{lem:inverse}, we have

\begin{equation}
\begin{bmatrix}
\varepsilon_{0,i} \\
\varepsilon_i \\
\end{bmatrix} 
=
\begin{bmatrix}
I & -P_{\T_0}\\
-P_{\T_i} & I\\
\end{bmatrix}
\begin{bmatrix}
(I-P_{\T_0}P_{\T_i})^{-1} & 0\\
0 & (I-P_{\T_i}P_{\T_0})^{-1}\\
\end{bmatrix}
\begin{bmatrix}
\eta_0\\
\eta_i\\
\end{bmatrix}
\end{equation}

So, if, for $i\in\itgset{r}$, we let $\tilde{\eta}_{0,i}:=(I-P_{\T_0}P_{\T_i})^{-1}\eta_0$ and $\tilde{\eta}_{i}:=(I-P_{\T_i}P_{\T_0})^{-1}\eta_i$, then $\varepsilon_0,\ldots, \varepsilon_r$ are uniquely defined by
\begin{align*}
\left\{
    \begin{array}{ll}
    	\varepsilon_{0} = \sum_{i=1}^r \varepsilon_{0,i} \quad \text{where} \enskip \varepsilon_{0,i} = \tilde{\eta}_{0,i} -  P_{\T_i}\tilde{\eta}_{i}, 
    	\vspace{1em}\\
        \varepsilon_i = \tilde{\eta}_{i} -  P_{\T_0}\tilde{\eta}_{0,i} \quad \text{for} \enskip i\in\itgset{r}.	
    \end{array}
\right.
\end{align*}
In the rest of the proof, we use the fact that $\xxi_{0\rightarrow i}\,\xxi_{i\rightarrow 0}\leq \alpha$. First, using the inequalities proved in Lemma~\ref{lem:inverse}, we have, for $i\geq 1$, 
$$\|\tilde{\eta}_{0,i}\|_{\infty} \leq {\textstyle \frac{1}{1-\alpha}} \|\eta_0\|_{\infty} \quad  \text{and} \quad  \|\tilde{\eta}_{0,i}\|_{\op}\leq {\textstyle \frac{1}{1-\alpha}} \|\eta_i\|_{\op}.$$ 

Then, we can bound $\|\varepsilon_{0,i}\|_{\infty}$ as follows
\begin{align*}
\|\varepsilon_{0,i}\|_{\infty} &= \|\tilde{\eta}_{0,i} - P_{\T_0}\tilde{\eta}_i\|_{\infty}\\
&\leq \|\tilde{\eta}_{0,i}\|_{\infty} +\| P_{\T_0}\tilde{\eta}_i\|_{\infty}
\leq \|\tilde{\eta}_{0,i}\|_{\infty} + \xxi_{i\rightarrow 0}'\|\tilde{\eta}_i\|_{\op}
\leq {\textstyle \frac{1}{1-\alpha}}\big(\|{\eta}_{0}\|_{\infty} + \xxi_{i\rightarrow 0}'\|{\eta}_i\|_{\op}\big),
\end{align*}
and since all $\varepsilon_{0,i}$ have disjoint supports,   $\displaystyle \|\varepsilon_{0}\|_{\infty} \leq {\textstyle \frac{1}{1-\alpha}}\max_{i \in \itgset{r}}\big(\|{\eta}_{0}\|_{\infty} + \xxi_{i\rightarrow 0}'\|{\eta}_i\|_{\op}\big)$. 

On the other hand, 
\begin{align*}
\|\varepsilon_{i}\|_{\op} &= \|\tilde{\eta}_{i} - P_{\T_i}\tilde{\eta}_{0,i}\|_{\op} \\
&\leq \|\tilde{\eta}_{i}\|_{\op} +\| P_{\T_i}\tilde{\eta}_{0,i}\|_{\op}
\leq \|\tilde{\eta}_{i}\|_{\op} + \xxi_{0\rightarrow i}'\|\tilde{\eta}_{0}\|_{\infty}
\leq {\textstyle \frac{1}{1-\alpha}}\big(\|\eta_{i}\|_{\op} + \xxi_{0\rightarrow i}'\|\eta_{0}\|_{\infty}\big).
\end{align*}

Finally,
\begin{align*}
\|\eta_0\|_{\infty} &= \|\P_{\T_0}(u^i{u^i}^{\top}\!)\|_{\infty} \leq \|u^i{u^i}^{\top}\!\|_{\infty} \leq \|u^i\|_{\infty}^2 \leq \frac{\bar{\tau}}{k},\\
\|\eta_i\|_{\op} &= \gamma\|\P_{\T_i}\big (\sign(S^*)\big)\|_{\op} \leq 2\gamma\|\sign(S^*)u^i\|_2 \leq 2 \gamma k_0,
\end{align*}
where we used the fact that $\|\P_{\T_i} (M)\|_{\op}\leq \|M\|_{\op}+\|\P_{\T_i^c} (M)\|_{\op}\leq 2\|M\|_{\op}$ (see the end of the proof of Lemma~\ref{lem:bounds_projq_help}). This concludes the proof.
\end{proof}

\subsection*{Proof of Lemma~\ref{lem:bounds_projq}}
\begin{claim}[Simplified bounds on $\|P_{\T_0^c}q\|_{\infty}$ and $\|P_{\T_i^c}q\|_{\op}$] Let $\alpha:=k_0\sqrt{\frac{2\tauu}{k}}$. If $\alpha<1$, for $q$ as in Lemma~\ref{lem:bounds_projq_help}, we have
\label{lem:bounds_projq}
\begin{align*}
\|P_{\T_0^c}q\|_{\infty} &\leq 
 \frac{\bar{\tau}}{k}\frac{ 1- \alpha + \alpha^2\sqrt{2/k_0}}{1-\alpha}   + \gamma \frac{2 \alpha}{1-\alpha}, \quad
\|P_{\T_i^c}q\|_{\op} &\!\!\!\!\!\leq\gamma k_0 \frac{1+\alpha}{1-\alpha}
+ \frac{\bar{\tau}}{k} \frac{k_0}{1-\alpha}.
\end{align*}

\end{claim}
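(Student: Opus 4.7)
The plan is to view this as a direct consequence of the previously established lemmas, combined with explicit bounds on $\|q_i^*\|_\infty$ and $\|q_0^*\|_{\op}$, and a careful repackaging of all terms in terms of $\alpha$. Specifically, I would start from Lemma~\ref{lem:bounds_projq_help}, which reduces $\|P_{\T_0^c}q\|_\infty$ and $\|P_{\T_i^c}q\|_{\op}$ to expressions involving the known targets $q_i^* = u^i{u^i}^\top$ and $q_0^* = \gamma\,\sign(S^*)$, and the perturbations $\varepsilon_0, \varepsilon_i$. I would then feed in Lemma~\ref{lem:bounds_eps} to bound the perturbations and Lemma~\ref{lem:bounds_xxi} to bound the four incoherence quantities.

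First I would establish the two elementary bounds $\|q_i^*\|_\infty = \|u^i{u^i}^\top\|_\infty \leq \|u^i\|_\infty^2 \leq \bar\tau/k$ (from the definition of $\bar\tau$) and $\|q_0^*\|_{\op} = \gamma\|\sign(S^*)\|_{\op} \leq \gamma k_0$ (using $\|\sign(S^*)\|_{\op} \leq \sqrt{\|\sign(S^*)\|_1 \|\sign(S^*)\|_\infty}\leq k_0$ since each row and column has at most $k_0$ nonzeros of magnitude one, or equivalently the $\sqrt{\cdot}$ of the bound~\eqref{eq:cauchy_schwarz_op}). Then for the first target bound I would plug into Lemma~\ref{lem:bounds_projq_help} the estimate $\xxi_{i\rightarrow 0} \leq \sqrt{2\bar\tau/k}$ and substitute Lemma~\ref{lem:bounds_eps} for $\|\varepsilon_i\|_{\op}$, further using $\xxi'_{0\rightarrow i} \leq 2k_0\sqrt{k_0\bar\tau/k}$.

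The key algebraic step is then to recognize that $\alpha = k_0\sqrt{2\bar\tau/k}$ so that $\sqrt{2\bar\tau/k}\cdot 2\gamma k_0 = 2\gamma\alpha$, and that the product $\sqrt{2\bar\tau/k}\cdot 2k_0\sqrt{k_0\bar\tau/k}\cdot (\bar\tau/k)$ collapses to $(\bar\tau/k)\cdot\alpha^2\sqrt{2/k_0}$. Grouping the $\bar\tau/k$ terms and factoring $1/(1-\alpha)$ yields the first bound. Symmetrically, for the second bound I would use $\xxi_{0\rightarrow i} \leq k_0$ and $\xxi'_{i\rightarrow 0} \leq \sqrt{2\bar\tau/k}$, so that the cross term $k_0 \cdot \sqrt{2\bar\tau/k}\cdot 2\gamma k_0$ again collapses to $2\gamma k_0\alpha$, and combining with the leading $\gamma k_0$ produces the factor $(1+\alpha)/(1-\alpha)$.

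I do not expect any serious obstacle in this proof. The only care required is bookkeeping: the four quantities $\xxi_{i\to 0}, \xxi_{0\to i}, \xxi'_{i\to 0}, \xxi'_{0\to i}$ must be paired correctly with the norms in Lemmas~\ref{lem:bounds_projq_help} and~\ref{lem:bounds_eps}, namely the unprimed versions appear once (from Lemma~\ref{lem:bounds_projq_help}) and the primed (tighter) versions appear inside the perturbation bounds (from Lemma~\ref{lem:bounds_eps}). Noting that $\xxi'_{i\to 0}\leq \xxi_{i\to 0}\leq \sqrt{2\bar\tau/k}$, the first pair can be bounded by the same quantity without loss for the statement, whereas the primed bound $\xxi'_{0\to i}\leq 2k_0\sqrt{k_0\bar\tau/k}$ is critical to produce the $\alpha^2\sqrt{2/k_0}$ term (one cannot afford the cruder $\xxi'_{0\to i}\leq k_0$ there). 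After this identification, the result follows by a mechanical rewrite in terms of $\alpha$.
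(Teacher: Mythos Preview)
Your proposal is correct and follows essentially the same route as the paper: start from Lemma~\ref{lem:bounds_projq_help}, bound $\|q_i^*\|_\infty\le\bar\tau/k$ and $\|q_0^*\|_{\op}\le\gamma k_0$, substitute the perturbation bounds from Lemma~\ref{lem:bounds_eps}, and then plug in the $\xxi$-bounds from Lemma~\ref{lem:bounds_xxi} (unprimed for the outer factors, primed for the inner ones) before rewriting everything in terms of $\alpha$. Your observation that the tighter bound $\xxi'_{0\to i}\le 2k_0\sqrt{k_0\bar\tau/k}$ is essential to obtain the $\alpha^2\sqrt{2/k_0}$ term is exactly the point of distinguishing primed from unprimed incoherence measures in the paper.
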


\begin{proof}
First note that, by Lemma~\ref{lem:bounds_xxi}, we have $\xxi_{0\rightarrow i}\,\xxi_{i\rightarrow 0}\leq \alpha$, so that the results of previous lemmas apply.

We thus start from results of Lemma~\ref{lem:bounds_projq_help}. From definitions, we have
\begin{align*}
\|q_i^*\|_\infty = \|u^i{u^i}^{\top}\|_\infty \leq \max_{k,l} |u^i_k||u^i_l| \leq \frac{\bar{\tau}}{k}.
\end{align*}
and from Lemma~\ref{lem:bounds_xxi}, we have
$\|q_0^*\|_{\op} = \|\gamma \sign(S^*)\|_{\op} \leq \gamma \xxi_{0\rightarrow i} \leq \gamma k_0.$\vspace{2mm}

Then, applying results from Lemma~\ref{lem:bounds_eps}, we get
\begin{align*}
\|P_{\T_0^c}q\|_{\infty} &\leq 
\frac{\bar{\tau}}{k} +  \frac{\xxi_{i\rightarrow 0}}{1-\alpha}\big(2\gamma k_0 + \xxi_{0\rightarrow i}'\frac{\bar{\tau}}{k}\big) \\
&=  \frac{\bar{\tau}}{k}\left(1 + \frac{ \xxi_{0\rightarrow i}'\xxi_{i\rightarrow 0}}{1-\alpha}\right)   + \gamma \frac{2 k_0\xxi_{i\rightarrow 0}}{1-\alpha},\\
\|P_{\T_i^c}q\|_{\op} &\leq  
\gamma k_0 +  \frac{\xxi_{0\rightarrow i}}{1-\alpha}\big(\frac{\bar{\tau}}{k} + \xxi_{i\rightarrow 0}'2\gamma k_0 \big) \\
&= \gamma k_0 \left( 1 + \frac{2\xxi_{0\rightarrow i}\,\xxi_{i\rightarrow 0}'}{1-\alpha}\right)
+ \frac{\bar{\tau}}{k} \frac{\xxi_{0\rightarrow i}}{1-\alpha},
\end{align*}
and then, using agin bounds on $\xxi$ from Lemma~\ref{lem:bounds_xxi}, 
\begin{align*}
\|P_{\T_0^c}q\|_{\infty} &\leq 
 \frac{\bar{\tau}}{k}\left(1 + \frac{ \alpha^2\sqrt{2/k_0}}{1-\alpha}\right)   + \gamma \frac{2 \alpha}{1-\alpha},\\
\|P_{\T_i^c}q\|_{\op} &\leq \gamma k_0 \left( 1 + \frac{2\alpha}{1-\alpha}\right)
+ \frac{\bar{\tau}}{k} \frac{k_0}{1-\alpha}.
\end{align*}
\end{proof}

\subsection*{Proof of Lemma~\ref{eq:valid_gamma_interval}}
\begin{claim}
Let $\alpha:=k_0\sqrt{\frac{2\tauu}{k}}.$ If $\alpha+\frac{\alpha^2}{2k_0} <\frac{1}{3}$, then the interval 
$\Gamma:=\big [ \frac{\bar{\tau}}{k}\frac{ 1}{1-3\alpha}, \frac{1}{k_0} \frac{1-k_0 \tauu/k}{1 + \alpha} \big )$ is not empty,
 and for any $\gamma \in \Gamma$, the dual matrix $q$ defined in Lemma~\ref{lem:bounds_projq_help} satisfies conditions \ref{cond:l1_b} and \ref{cond:om_2}.
\end{claim}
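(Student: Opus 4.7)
The plan is to take Lemma~\ref{lem:bounds_projq} as the starting point and convert each of the two dual-feasibility requirements \ref{cond:l1_b} and \ref{cond:om_2} into an explicit bound on $\gamma$. I then verify that both bounds are simultaneously satisfied for every $\gamma \in \Gamma$, and that $\Gamma$ is non-empty under the hypothesis $\alpha + \alpha^2/(2k_0) < 1/3$. Throughout, the hypothesis implies $\alpha < 1/3$, hence $1-3\alpha > 0$, so every denominator that appears is positive.

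First I would handle \ref{cond:l1_b}. Substituting the bound $\|P_{\T_0^c}q\|_\infty \leq \frac{\bar{\tau}}{k}\frac{1-\alpha+\alpha^2\sqrt{2/k_0}}{1-\alpha} + \gamma\,\frac{2\alpha}{1-\alpha}$ into $\|P_{\T_0^c}q\|_\infty < \gamma$ and isolating $\gamma$ yields the sufficient condition $\gamma > \frac{\bar{\tau}}{k}\cdot\frac{1-\alpha+\alpha^2\sqrt{2/k_0}}{1-3\alpha}$. Since $\alpha < 1/3$ and $k_0 \geq 1$ give $\alpha\sqrt{2/k_0} < 1$, we have $1-\alpha+\alpha^2\sqrt{2/k_0} = 1 - \alpha(1-\alpha\sqrt{2/k_0}) < 1$; the displayed bound is therefore strictly smaller than the left endpoint $\frac{\bar{\tau}}{k}\frac{1}{1-3\alpha}$ of $\Gamma$, so every $\gamma \in \Gamma$ satisfies it strictly and \ref{cond:l1_b} holds.

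For \ref{cond:om_2}, I would use $\lambda_{\max}^+\!\big(\mathcal{P}_{\T_i^c}(Q)\big) \leq \|\mathcal{P}_{\T_i^c}(Q)\|_{\op}$ together with the bound $\|P_{\T_i^c}q\|_{\op} \leq \gamma k_0\frac{1+\alpha}{1-\alpha} + \frac{\bar{\tau}}{k}\frac{k_0}{1-\alpha}$ from Lemma~\ref{lem:bounds_projq}; imposing $<1$ rearranges to an upper bound on $\gamma$ of exactly the form that appears as the right endpoint of $\Gamma$. The non-emptiness of $\Gamma$ is then a purely algebraic check: setting $\beta := k_0\bar{\tau}/k$ and noting the identity $\beta = \alpha^2/(2k_0)$ coming directly from the definition of $\alpha$, the inequality $\frac{\bar{\tau}}{k}\frac{1}{1-3\alpha} < \frac{1}{k_0}\frac{1-\beta}{1+\alpha}$ cross-multiplies to $2\beta(1-\alpha) < 1-3\alpha$, which rearranges to $3\alpha + \alpha^2(1-\alpha)/k_0 < 1$. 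Since $1-\alpha \leq 3/2$, this is implied by $3\alpha + 3\alpha^2/(2k_0) < 1$, i.e.\ by the hypothesis.

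The main obstacle is the careful tracking of the small $\alpha$-dependent correction terms in the bounds of Lemma~\ref{lem:bounds_projq}: one must verify that the $\alpha^2\sqrt{2/k_0}$ contribution in the lower bound for \ref{cond:l1_b} is strictly less than $\alpha$ under the hypothesis, so that the strict inequality in \ref{cond:l1_b} is preserved despite $\Gamma$ being closed on the left. The algebraic key is the identity $k_0\bar{\tau}/k = \alpha^2/(2k_0)$, which reduces the non-emptiness question to a polynomial inequality in the single quantity $\alpha$ (with $k_0$ as a parameter) that is cleanly implied by the hypothesis $\alpha + \alpha^2/(2k_0) < 1/3$.
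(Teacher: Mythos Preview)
Your proof follows exactly the paper's route: start from the two bounds of Lemma~\ref{lem:bounds_projq}, turn \ref{cond:l1_b} into a lower bound on $\gamma$ (using $\alpha\sqrt{2/k_0}<1$ to simplify the numerator) and \ref{cond:om_2} into an upper bound, then verify non-emptiness via the identity $k_0\bar\tau/k=\alpha^2/(2k_0)$. Your explicit discussion of why the \emph{closed} left endpoint still yields the strict inequality \ref{cond:l1_b} is in fact a touch more careful than the paper, which derives the strict inequality $\gamma>\frac{\bar\tau}{k}\frac{1}{1-3\alpha}$ and then tacitly allows equality.

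One step deserves scrutiny, though. You assert that imposing $\gamma k_0\frac{1+\alpha}{1-\alpha}+\frac{\bar\tau}{k}\frac{k_0}{1-\alpha}<1$ ``rearranges to an upper bound on $\gamma$ of exactly the form that appears as the right endpoint of $\Gamma$.'' If you actually carry out the algebra---multiply by $(1-\alpha)$ and then divide by $k_0(1+\alpha)$---you obtain
\[
\gamma<\frac{1}{k_0}\,\frac{(1-\alpha)-k_0\bar\tau/k}{1+\alpha},
\]
which is \emph{strictly smaller} than the stated right endpoint $\frac{1}{k_0}\frac{1-k_0\bar\tau/k}{1+\alpha}$ of $\Gamma$. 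The paper's own proof makes the identical slip at the same spot. Thus the argument as written only certifies \ref{cond:om_2} on the subinterval with $(1-\alpha)$ in the numerator, not on all of $\Gamma$. This smaller interval is still non-empty under the hypothesis (the condition becomes $2\beta<1-3\alpha$, i.e.\ $\alpha+\alpha^2/(3k_0)<1/3$, which is weaker than $\alpha+\alpha^2/(2k_0)<1/3$), so the overall result survives with a corrected upper endpoint; but your rearrangement claim, taken literally, does not hold.
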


\begin{proof}
Given the inequalities of the previous lemma, a sufficient condition for the inequality $\|P_{\T_0^c}q\|_{\infty} < \gamma$ to hold is if
\begin{align*}
\frac{\bar{\tau}}{k}\frac{ 1- \alpha + \alpha^2\sqrt{2/k_0}}{1-\alpha}   < \gamma \Big(1- \frac{2 \alpha}{1-\alpha}\Big).
\end{align*}
Note that $\alpha\sqrt{\frac{2}{k_0}}\leq \frac{\sqrt{2}}{3}<1.$ As a consequence the previous inequality is implied by the simpler 
\begin{align*}
\frac{\bar{\tau}}{k}\frac{ 1}{1-\alpha}   < \gamma \Big (1- \frac{2 \alpha}{1-\alpha}\Big).
\end{align*}
Clearly, we have $1-3\alpha>0$, so that multiplying the last inequality by $\frac{1-\alpha}{1-3\alpha}$, the last inequality is equivalent to
\begin{align}
\label{eq:gammalb}
\gamma > \frac{\bar{\tau}}{k}\frac{ 1}{1-3\alpha}.   
\end{align}
Similarly, the condition $\big \|P_{\T_i^c}q\big\|_{\op} <1$ is satisfied if
\begin{align}
\gamma k_0 \frac{1+\alpha}{1-\alpha}
 < 1 - \frac{\bar{\tau}}{k} \frac{k_0}{1-\alpha},
\quad \text{or equivalently} \quad
\label{eq:gammaub}
\gamma < \frac{1}{k_0} \frac{1-k_0 \tauu/k}{1 + \alpha}.
\end{align}
Finally combining \eqref{eq:gammaub} and \eqref{eq:gammalb}, we obtain the sufficient condition
$$
\frac{\bar{\tau}}{k}\frac{ 1}{1-3\alpha} \leq \gamma < \frac{1}{k_0} \frac{1-k_0 \tauu/k}{1 + \alpha}.
$$
For $k_0\geq 1,$ this interval is non empty if and only if $2\tauu x_0 (1-\alpha)/(1-3\alpha)<1$, with $x_0:=k_0/k$. But $2\tauu x_0=\frac{\alpha^2}{k_0},$ and $3\alpha+\frac{3\alpha^2}{2k_0}<1$ implies that $(1-3\alpha)^{-1}<\frac{2k_0}{3 \alpha^2}$. So that $2\tauu x_0 (1-\alpha)/(1-3\alpha)\leq \frac{2}{3}(1-\alpha)<1,$ which shows the desired result.
\end{proof}


The final step of the proof of Theorem~\ref{theo:two} is to prove Proposition \ref{prop:end_proof}, which is more involved. The next appendix is devoted to its proof.

\section{Proof of Proposition \ref{prop:end_proof}}
\label{app:proof_prop_two}

Let $m:=|\{i \mid I_i \cap J \neq \varnothing\}|$ denote the number of blocks of the support that are intersecting $J$. Let $k_i:=|I_i \cap J|$. We assume here w.l.o.g.\, that, for the set $J$ we consider, $\{i \mid I_i \cap J \neq \varnothing\}=\itgset{m}$ and that $k_1 \geq k_2\geq \ldots \geq k_m$. In the rest of the proof we will let $x_0:=\frac{k_0}{k}$ and $x_i:=\frac{k_i}{k}$. We will also write $\tilde{I}_i:=(I_1 \cup \ldots \cup I_{i-1})^c.$

\subsection{A recursive decomposition of each submatrix $Q_{\!J\!J}$}
\label{sec:rec_decomp}
We consider a recursive decomposition of this matrix in four blocks
 
$$
Q_{\!J\!J}=\begin{bmatrix}
Q_{J \cap I_1,J \cap I_1} & Q_{J \cap I_1,J \cap I_1^c}\\
Q_{J \cap I_1^c,J \cap I_1} & Q_{J \cap I_1^c,J \cap I_1^c}\\
\end{bmatrix},
$$
then, we redecompose the lower right block as follows 
$$
Q_{J \cap I_1^c,J \cap I_1^c}=\begin{bmatrix}
Q_{J \cap I_2,J \cap I_2} & Q_{J \cap I_2,J \cap (I_1 \cup I_2)^c}\\
Q_{J \cap (I_1 \cup I_2)^c,J \cap I_2} & Q_{J \cap (I_1 \cup I_2)^c,J \cap (I_1 \cup I_2)^c}\\
\end{bmatrix}.
$$
etc, see Figure~\ref{fig:blocks}.

In particular, we will construct upper bounds $\lambdasort{i}$ and $\lambdatildesort{i}$ such that $$\lambda_{\max} (Q_{J \cap I_i,J \cap I_i}) \leq \lambdasort{i} \quad \text{and} \quad\lambda_{\max} (Q_{J \cap \tilde{I}_i,J \cap \tilde{I}_i}) \leq \lambdatildesort{i}.$$

To construct an upper bound of $\lambda_{\max} (Q_{J \cap I_i,J \cap I_i})$ it is necessary to take into account the structure of $Q_{J \cap I_i,J \cap I_i}$ and in particular the fact that, for the operator norm, the component of $Q_{I_i,I_i}$ on $\T_i$ will contribute most strongly to the largest eigenvalue of $Q_{J \cap I_i,J \cap I_i}$, especially when the overlap $J \cap I_i$ is large.

Let $P_{u^i}^{\scriptscriptstyle\bot}:=I-u^i (u^i)^\top$ for short. Note that since $\mathcal{P}_{\mathcal{T}_i^c}(Q)=P_{u^i}^\sbot\, Q_{I_iI_i}\,  P_{u^i}^\sbot$ and $P_{u^i}^{\scriptscriptstyle\bot}$ is idempotent, we have $\mathcal{P}_{\mathcal{T}_i^c}(Q)=P_{u^i}^\sbot \mathcal{P}_{\mathcal{T}_i^c}(Q) P_{u^i}^\sbot.$ 


With these notations and remarks, we have 
\begin{equation}
\label{eq:Q_decomp}
Q_{I_i \cap J, I_i \cap J}=u^i_J{u^i_J}^\top +[P_{u^i}^\sbot \mathcal{P}_{\mathcal{T}_i^c}(Q) P_{u^i}^\sbot]_{\!J\!J}.
\end{equation}
Let $\check{u}^i_J=\frac{u^{i}_J}{\|u^{i}_J\|}$ and $[\check{u}_{J}^i,U_{J}^i]$ be an orthormal basis matrix, obtained from $\check{u}^i_J$ by Gram-Schmidt orthonormalization. Since the matrix $[\check{u}_{J}^i,U_{J}^i]^\top Q_{I_i \cap J, I_i \cap J} [\check{u}_{J}^i,U_{J}^i]^\top,$ has the same largest eigenvalue as $Q_{I_i \cap J, I_i \cap J}$, we consider the four blocks of the former matrix, bound separately the operator norms of each of the blocks and then construct the upper bound $\lambdasort{i}$ from these.

Indeed, using \eqref{eq:Q_decomp} and the fact that $\supp(\check{u}_{J}^i) \subset J,$ we have
\begin{align}
\label{eq:a}
&|(\check{u}_{J}^i)^\top Q_{I_i \cap J, I_i \cap J} \, {\check{u}_{J}^i}| &&\leq \|u^i_J\|_2^2+\|P_{u^i}^\sbot\check{u}_{J}^i\|_2 \|\mathcal{P}_{\mathcal{T}_i^c}(Q)\|_{\op} \|P_{u^i}^\sbot\check{u}_{J}^i\|_2,\\
\label{eq:b}
&\|(U_{J}^i)^\top Q_{I_i \cap J, I_i \cap J} \, {\check{u}_{J}^i}\|_2 &&\leq \|\mathcal{P}_{\mathcal{T}_i^c}(Q)\|_{\op} \|P_{u^i}^\sbot\check{u}_{J}^i\|_2,\\
\label{eq:d}
&\|(U_{J}^i)^\top Q_{I_i \cap J, I_i \cap J} \,U_{J}^i\|_{\op} &&\leq \|\mathcal{P}_{\mathcal{T}_i^c}(Q)\|_{\op}.
\end{align}

We will discuss in the next section how we can leverage these bounds to obtain a bound $\lambdasort{i}$. We first discuss how the various terms appearing in the right hand sides can be bounded based on the assumption and previous results.

As a consequence of the assumed inequalities \eqref{eq:tau_constraints} on $u^i$, we have $x_i \,\taul \leq \|u^i_J\|_2^2 \leq x_i \, \tauu,$
and, using the same formula for $u^i_{J \backslash I_i}$ and combining,
\begin{equation}
\label{eq:ui}
\|u^i_J\|^2 \leq \min (x_i \tauu, 1-\taul+x_i \taul).
\end{equation} 
Note that we have $x_i \tauu < 1-\taul+x_i \taul$ if and only if $2k_i<k$.\vspace{1mm}

We have $\|P_{u^i}^\sbot\check{u}_{J}^i\|_2^2=\|\check{u}_{J}^i-u^i{u^i}^\top\check{u}_{J}^i\|_2^2=1-({u^i}^\top\check{u}_{J}^i)^2=1-\|u_{J}^i\|_2^2,$ so that
\begin{equation}
\label{eq:ui_comp}
\|P_{u^i}^\sbot\check{u}_{J}^i\|_2^2 \leq \min \big (1-\taul x_i,\tauu(1-x_i) \big ).
\end{equation}

Again, which of the two elements in the upper bound is smaller depends on whether $x_i\leq \frac{1}{2}.$

As in the statement of the theorem, we set $\gamma:=\frac{\mu \tauu}{k}$ with $\mu:=(1-3\alpha)^{-1}$ and, as before, $\alpha=k_0\sqrt{\frac{2 \tauu}{k}}$.

Using this value of $\gamma$ in the upper bound obtained in Lemma~\ref{lem:bounds_projq}, we have
\begin{equation}
\label{eq:r}
\|\mathcal{P}_{\mathcal{T}_i^c}(Q)\|_{\op}\leq r:=\frac{\mu \tauu}{k} k_0 \frac{1+\alpha}{1-\alpha}
+ \frac{\bar{\tau}}{k} \frac{k_0}{1-\alpha}=2 \mu \tauu x_0.
\end{equation}

We need to upper bound also the off-diagonal blocks. For this, note that all off-diagonal blocks are in $\mathcal{T}_0$ and that, given that$\|S^*_{i\cdot}\|_0\leq k_0$, for any sets $J',J''$ with $|J'|=k'$ and $|J''|=k''$, it follows from \eqref{eq:cauchy_schwarz_op} that
\begin{equation}
\label{eq:gen_gamma_k0}
\forall Z \in \T_0, \quad  \|Z_{J'\! J''}\|_{\op} \leq \|Z\|_{\infty} \sqrt{\|Z\|_{0}}\leq \|Z\|_{\infty} \sqrt{\min(k',k_0) \min(k'',k_0)}.
\end{equation}

In particular, we have 
\begin{equation}
\label{eq:gamma_k0}
\|Q_{J \cap I_i, J \cap \tilde{I}_{i+1}}\|_{\op} \leq \gamma k\sqrt{\min(x_i,x_0)\min(\tilde{x}_i,x_0)}.
\end{equation}
with $\tilde{x}_i=1-\sum_{j=1}^{i-1}x_j.$
Letting $\check{z}:=\min(x_0,1-x_1)$, this entails
\begin{eqnarray}
\label{eq:corner_bounds}
\|Q_{J \cap I_i, J \cap \tilde{I}_{i+1}}\|_{\op}\leq \gamma k_0
\qquad \text{and} \qquad \|Q_{J \cap I_1, J \cap I^c_{1}}\|_{\op} \leq \gamma \sqrt{k_0k \check{z}}.
\end{eqnarray}

\begin{figure}
\begin{center}
\begin{tikzpicture}[scale=0.50]
\draw (0,0) -- (-8,0) -- (-8,8) -- (-0,8) -- (0,0);
\draw (-4,0) -- (-4,8);
\draw (0,4) -- (-8,4);
\draw (-2,0) -- (-2,4);
\draw (0,2) -- (-4,2);
\draw (-1,0) -- (-1,2);
\draw (0,1) -- (-2,1);
\node at (-3,3) {$\scriptscriptstyle Q_{\!J\!\cap\! I_2\!,\!J\!\cap\! I_2}$};
\node at (-2,6) {$Q_{J\cap I_1^c,J\cap I_1}$};
\node at (-6,2) {$Q_{J\cap I_1,J\cap I_1^c}$};
\node at (-6,6) {$Q_{J\cap I_1,J\cap I_1}$};
\end{tikzpicture} \hspace{1cm}
\begin{tikzpicture}[scale=0.50]
\draw (0,0) -- (-8,0) -- (-8,8) -- (-0,8) -- (0,0);
\draw (-4,0) -- (-4,8);
\draw (0,4) -- (-8,4);
\draw (-2,0) -- (-2,4);
\draw (0,2) -- (-4,2);
\draw (-1,0) -- (-1,2);
\draw (0,1) -- (-2,1);
\node at (-.5,.5) {$\scriptstyle \tilde{\lambda}^4$};
\node at (-.5,1.5) {$\scriptstyle \gamma k_0$};
\node at (-1.5,.5) {$\scriptstyle  \gamma k_0$};
\node at (-1.5,1.5) {$\scriptstyle  \lambda^3$};
\node at (-1,3) {$\gamma k_0$};
\node at (-3,1) {$\gamma k_0$};
\node at (-3,3) {$\lambdasort{2}$};
\node at (-2,6) {$\gamma \sqrt{k_0k\check{z}}$};
\node at (-6,2) {$\gamma \sqrt{k_0k\check{z}}$};
\node at (-6,6) {$\lambdasort{1}$};
\end{tikzpicture}
\vspace{1cm}

\begin{tikzpicture}[scale=0.50]
\draw (0,0) -- (-8,0) -- (-8,8) -- (-0,8) -- (0,0);
\draw (-6,8) -- (-6,4);
\draw (-8,6) -- (-4,6);
\draw (-4,3) -- (-2,3);
\draw (-3,4) -- (-3,2);
\draw (-2,1.5) -- (-1,1.5);
\draw (-1.5,2) -- (-1.5,1);
\draw (-4,0) -- (-4,8);
\draw (0,4) -- (-8,4);
\draw (-2,0) -- (-2,4);
\draw (0,2) -- (-4,2);
\draw (-1,0) -- (-1,2);
\draw (0,1) -- (-2,1);
\node at (-.5,.5) {$\ldots$};
\node at (-.5,1.5) {$\scriptstyle \gamma k_0$};
\node at (-1.5,.5) {$\scriptstyle \gamma k_0$};
\node at (-1.25,1.25) {$\scriptscriptstyle d^3$};
\node at (-1.75,1.25) {$\scriptscriptstyle b^3$};
\node at (-1.25,1.75) {$\scriptscriptstyle c^3$};
\node at (-1.75,1.75) {$\scriptscriptstyle  a^3$};
\node at (-1,3) {$\gamma k_0$};
\node at (-3,1) {$\gamma k_0$};
\node at (-2.5,2.5) {$\scriptstyle d^2$};
\node at (-3.5,2.5) {$\scriptstyle b^2$};
\node at (-2.5,3.5) {$\scriptstyle c^2$};
\node at (-3.5,3.5) {$\scriptstyle a^2$};
\node at (-2,6) {$\gamma \sqrt{k_0k\check{z}}$};
\node at (-6,2) {$\gamma \sqrt{k_0k\check{z}}$};
\node at (-5,5) {$d^1$};
\node at (-7,5) {$b^1$};
\node at (-5,7) {$c^1$};
\node at (-7,7) {$a^1$};
\end{tikzpicture}
\end{center}
\caption{Matrix blocks and corresponding upper bounds on largest singular values:
(top left) Recursive partitioning of blocks of $Q_{JJ}$ introduced in Section~\ref{sec:rec_decomp}
(top right and bottom) Upper bounds on the operator norms of (sub)blocks introduced in inequalities~\eqref{eq:corner_bounds}, \eqref{eq:aibicidi} and in Proposition~\ref{prof:lambdai}.}
\label{fig:blocks}
\end{figure}

\subsection{Bounding eigenvalues of different blocks within $Q_{\!J\!J}$}

To write concisely various bounds we introduce several notations.
First, given a two-by-two matrix $M$ of the form
$$M=\begin{bmatrix}
a & b\\
c & d \\
\end{bmatrix} \qquad \text{with} \quad  a,b,c,d \geq 0,
$$
we denote its largest eigenvalue $\lambda_{\max}(a,b,c,d).$

For $0\leq x\leq 1/2$, with $\etau:=\tauu-r\taul$, and using $r$ defined in \eqref{eq:r},  we denote
$$\begin{cases}
a_l(x) = \tauu x+(1-\taul x) r=\etau x+r\\
b_l(x) = r=c_l(x)\\
d_l(x)= r
\end{cases}
$$
and we write $\lambda_l(x)=\lambda_{\max}(a_l(x),b_l(x),c_l(x),d_l(x)).$ Note that $x \mapsto \lambda_l(x)$ is clearly an increasing function.

Combining inequalities 
\eqref{eq:a} to~\eqref{eq:r}
we get that, for all $i \in \itgset{m}$, if $x_i \in (0,\frac{1}{2}]$, we let
$a^i:=a_l(x_i),b^i:=b_l(x_i),c^i:=c_l(x_i),d^i:=d_l(x_i),$ we have 

\begin{equation}
\label{eq:aibicidi}
\begin{cases}
|(\check{u}_{J}^i)^\top Q_{I_i \cap J, I_i \cap J} \, {\check{u}_{J}^i}|&\leq a^i,\\
\|(U_{J}^i)^\top Q_{I_i \cap J, I_i \cap J} \, {\check{u}_{J}^i}\|_2 &\leq b^i=c^i,\\
\|(U_{J}^i)^\top Q_{I_i \cap J, I_i \cap J} \,U_{J}^i\|_{\op} &\leq d^i.
\end{cases}
\end{equation}
(We could get a smaller value for $b^i=c^i$ based on \eqref{eq:ui_comp}, but this us not useful for our proof)

Symmetrically, for $1>x>1/2$, then if $z=1-x$, and with $\etal:=\taul-r\tauu$, we define 
$$\begin{cases}
a_u(z) = 1-\taul z+\tauu z r=1-\etal z \\
b_u(z) = r \sqrt{\tauu z}=c_u(z)\\
d_u(z)= r.
\end{cases}
$$
We will denote again $\lambda_u(z)=\lambda_{\max}(a_u(z),b_u(z),c_u(z),d_u(z)).$

Combining again inequalities \eqref{eq:a} to~\eqref{eq:r}, we get that, for $x_1 \in [\frac{1}{2},1)$, if
$a^1:=a_l(1-x_1),b^1:=b_l(1-x_1),c^1:=c_l(1-x_1),d^1:=d_l(1-x_1),$ then the set of inequalities~\eqref{eq:aibicidi} holds again. Note that, since by definition $\sum_{i=1}^m x_i \leq 1$, only $x_1$ can possibly be larger than $\frac{1}{2}$.

\begin{proposition}
\label{prof:lambdai}
If for all $i \in \itgset{m}$, we let $\lambdasort{i}:=\lambda_{\max}^+(a^i,b^i,c^i,d^i),$ then 
$$\|Q_{I_i \cap J, I_i \cap J}\|_{\op} \leq \lambdasort{i}.$$
\end{proposition}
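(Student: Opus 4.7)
The plan is to reduce the bound on $\|Q_{I_i\cap J, I_i\cap J}\|_{\op}$ to a two-dimensional eigenvalue computation. I would start by exploiting the orthogonal change of basis $[\check u_J^i, U_J^i]$ introduced earlier, which preserves the operator norm of a symmetric matrix. Since $Q_{I_i\cap J, I_i\cap J}$ is a principal submatrix of the symmetric dual $Q$, it is itself symmetric, and
\begin{equation*}
\|Q_{I_i\cap J, I_i\cap J}\|_{\op}=\max_{\|v\|_2=1}\bigl|v^\top Q_{I_i\cap J, I_i\cap J}\,v\bigr|.
\end{equation*}
It thus suffices to maximize this absolute quadratic form over unit vectors expressed in the transformed basis, in which $Q_{I_i\cap J, I_i\cap J}$ has a two-by-two block structure with blocks controlled by \eqref{eq:aibicidi}.

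Next, write a unit vector in the transformed basis as $v=(v_1,v_2)$ with $v_1\in\RR$, $v_2\in\RR^{|I_i\cap J|-1}$, and $v_1^2+\|v_2\|_2^2=1$. Applying the triangle inequality to split the quadratic form into the three block contributions, then Cauchy--Schwarz on the cross term (using the bound $b^i$ on the norm of the off-diagonal vector block) and the operator-norm bound $d^i$ on the symmetric bottom-right block, the inequalities in \eqref{eq:aibicidi} yield
\begin{equation*}
\bigl|v^\top Q_{I_i\cap J, I_i\cap J}\,v\bigr|\;\leq\; a^i\, v_1^2 \;+\; 2 b^i\, |v_1|\,\|v_2\|_2 \;+\; d^i\,\|v_2\|_2^2.
\end{equation*}
The right-hand side is the quadratic form associated with the symmetric two-by-two matrix $\bigl[\begin{smallmatrix} a^i & b^i\\ b^i & d^i\end{smallmatrix}\bigr]$ evaluated at the unit vector $(|v_1|,\|v_2\|_2)\in\RR_+^2$, so it is bounded by the largest eigenvalue of that matrix. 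Since $b^i=c^i$ by construction and $a^i,b^i,c^i,d^i\geq 0$, this largest eigenvalue coincides with $\lambdasort{i}=\lambda_{\max}^+(a^i,b^i,c^i,d^i)$, which proves the claim.

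The argument is essentially mechanical and I do not anticipate any real obstacle. The only points requiring care are \textbf{(i)} verifying that the bottom-right block $(U_J^i)^\top Q_{I_i\cap J, I_i\cap J}\,U_J^i$ is symmetric, so that its operator-norm bound genuinely controls the bilinear form $|v_2^\top(\cdot)v_2|$, and \textbf{(ii)} using that $b^i=c^i$, which holds by construction of both $(a_l,b_l,c_l,d_l)$ and $(a_u,b_u,c_u,d_u)$; the latter is what makes the two-by-two bound matrix symmetric and allows the scalar reduction above to deliver precisely $\lambda_{\max}^+$ as the upper envelope.
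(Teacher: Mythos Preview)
Your proof is correct and takes essentially the same approach as the paper's: both reduce to a two-by-two eigenvalue problem via the orthonormal basis $[\check u_J^i, U_J^i]$ and the block estimates~\eqref{eq:aibicidi}. The paper simply packages the block-to-$2\times2$ reduction as a separate lemma (Lemma~\ref{lem:lmax_twobywto}) and then invokes monotonicity of $\lambda_{\max}^+$ in each entry, whereas you carry out the same computation inline and, by taking absolute values, handle the operator-norm (rather than just largest-eigenvalue) statement slightly more explicitly.
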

\begin{proof}
The result follows from Lemma~\ref{lem:lmax_twobywto} and the fact that, for $a,b,c,d\geq 0$, if we have $a'\!\geq\! a, b'\!\geq\! b, c'\!\geq\! c,d'\!\geq\! d,$ then $\lambda^+_{\max}(a,b,c,d) \leq \lambda^+_{\max}(a',b',c',d')$.
\end{proof}

\begin{proposition}
\label{prop:lambda_tilde}
For $i\geq 2$, 
$\quad\|Q_{\tilde{I}_i \cap J, \tilde{I}_i \cap J}\|_{\op} \leq \lambdatildesort{i}:=\lambdasort{i}+\gamma k_0.$
\end{proposition}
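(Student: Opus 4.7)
The plan is to prove the proposition by reverse induction on $i$, treating the base case $i=m$ directly and then stepping from $i+1$ down to $i$. For the base case, partition $\tilde I_m\cap J=(I_m\cap J)\sqcup J_c$ where $J_c:=J\cap (I_1\cup\cdots\cup I_m)^c$ contains the indices of $J$ that lie outside every $I_j$. The resulting $2\times 2$ block decomposition of $Q_{\tilde I_m\cap J,\tilde I_m\cap J}$ has top-left $Q_{I_m\cap J,I_m\cap J}$ with operator norm $\leq\lambdasort m$ by Proposition~\ref{prof:lambdai}; bottom-right $Q_{J_c,J_c}$, which only sees $Q_0\in\T_0$ (since every $Q_j$ has support in $I_j\times I_j$), hence has operator norm $\leq\gamma k_0$ via~\eqref{eq:gen_gamma_k0}; and off-diagonal $Q_{I_m\cap J,J_c}$, which also lies in $Q_0$ and is controlled by $\gamma k_0$ via~\eqref{eq:gamma_k0} using $\min(\cdot,x_0)\leq x_0$. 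Splitting the block matrix as block-diagonal plus off-block-diagonal and applying Weyl's inequality then gives $\|Q_{\tilde I_m\cap J,\tilde I_m\cap J}\|_{\op}\leq\max(\lambdasort m,\gamma k_0)+\gamma k_0=\lambdasort m+\gamma k_0$, where the equality uses $\lambdasort m\geq r=2\gamma k_0$ (from $r=2\mu\tauu x_0$ and $\gamma k_0=\mu\tauu x_0$).

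For the inductive step with $2\leq i<m$, I would partition $\tilde I_i\cap J=(I_i\cap J)\sqcup(\tilde I_{i+1}\cap J)$ and write $Q_{\tilde I_i\cap J,\tilde I_i\cap J}=\bigl[\begin{smallmatrix}A&C\\ C^\top&B\end{smallmatrix}\bigr]$ with $A=Q_{I_i\cap J,I_i\cap J}$, $B=Q_{\tilde I_{i+1}\cap J,\tilde I_{i+1}\cap J}$, and $C=Q_{I_i\cap J,\tilde I_{i+1}\cap J}$. Applying Weyl's inequality exactly as in the base case,
$$\|Q_{\tilde I_i\cap J,\tilde I_i\cap J}\|_{\op}\leq\max(\|A\|_{\op},\|B\|_{\op})+\|C\|_{\op}.$$
Proposition~\ref{prof:lambdai} bounds $\|A\|_{\op}\leq\lambdasort i$; the off-diagonal $C$ lies entirely outside the blocks $I_j\times I_j$, so it is a submatrix of $Q_0$ and~\eqref{eq:gamma_k0} gives $\|C\|_{\op}\leq\gamma k_0$; and the inductive hypothesis yields $\|B\|_{\op}\leq\lambdatildesort{i+1}=\lambdasort{i+1}+\gamma k_0$.

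The main obstacle is then to conclude $\max(\lambdasort i,\lambdatildesort{i+1})=\lambdasort i$, equivalently the gap inequality $\lambdasort{i+1}+\gamma k_0\leq\lambdasort i$. The function $x\mapsto\lambda_{\max}(a_l(x),b_l(x),c_l(x),d_l(x))$ is non-decreasing in $x$ because only $a_l(x)=\etau x+r$ depends on $x$ (monotonically), so the ordering $x_1\geq\cdots\geq x_m$ already gives $\lambdasort{i+1}\leq\lambdasort i$, but absorbing the $\gamma k_0$ inductive slack requires a finer argument. I would close this gap either by explicit computation from the $2\times 2$ eigenvalue formula for $\lambdasort i$ under the hypotheses $k_0\leq\tfrac1 7\sqrt k$ and $\tauu\leq 2$ of Theorem~\ref{theo:two}, or equivalently by replacing the crude bound $\|C\|_{\op}\leq\gamma k_0$ with the sharper $\gamma k\sqrt{\min(x_i,x_0)\min(\tilde x_{i+1},x_0)}$ from~\eqref{eq:gamma_k0}, which tightens precisely in the regime where $x_i$ (or $\tilde x_{i+1}$) is smaller than $x_0$ and the gap between consecutive $\lambdasort{}$ would otherwise be tightest.
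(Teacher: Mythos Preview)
Your reverse induction accumulates a $\gamma k_0$ at every step, so the inductive hypothesis gives $\|B\|_{\op}\leq\lambdasort{i+1}+\gamma k_0$, and to close the induction you need the ``gap inequality'' $\lambdasort{i+1}+\gamma k_0\leq\lambdasort{i}$. This inequality is simply false in general: if $x_i=x_{i+1}$ (nothing in the assumptions forbids this), then $\lambdasort{i}=\lambdasort{i+1}$ and the inequality would force $\gamma k_0\leq 0$. Your proposed fixes do not rescue this: the explicit eigenvalue formula gives equality of $\lambdasort{i}$ and $\lambdasort{i+1}$ when $x_i=x_{i+1}$, and the sharper off-diagonal bound $\gamma k\sqrt{\min(x_i,x_0)\min(\tilde x_{i+1},x_0)}$ reduces to $\gamma k_0$ whenever $x_i\geq x_0$ and $\tilde x_{i+1}\geq x_0$, which is the generic situation since $x_0=k_0/k$ is tiny under the theorem's hypotheses.

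The paper avoids this entirely by \emph{not} proceeding recursively. For each fixed $i$, it splits $Q_{\tilde I_i\cap J,\tilde I_i\cap J}$ in one shot as the block-diagonal matrix with \emph{all} blocks $Q_{I_j\cap J,I_j\cap J}$ for $j=i,\ldots,m$, plus the complementary matrix supported on $(\tilde I_i\cap J)\times(\tilde I_i\cap J)\setminus\bigcup_{j\geq i}(I_j\cap J)\times(I_j\cap J)$. The block-diagonal part has operator norm $\max_{j\geq i}\lambdasort{j}=\lambdasort{i}$ by monotonicity of $x\mapsto\lambda_l(x)$; the complementary part lies entirely outside every $I_j\times I_j$, hence only sees $q_0^\ast=\gamma\,\sign(S^*)$ (note $\varepsilon_0$ vanishes there), and is bounded by $\gamma k_0$ via~\eqref{eq:gen_gamma_k0}. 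The single application of the triangle inequality then gives $\lambdasort{i}+\gamma k_0$ with no accumulation. Your base case $i=m$ is essentially this argument; the point is to apply it directly for every $i$, not to chain it.
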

\begin{proof}
To keep notations as simple as possible we prove the result for $\lambdatildesort{2}$. The proof is the same for larger values of $i$.
\begin{eqnarray*}
\|Q_{J \cap I_1^c,J \cap I_1^c}\|_{\op}&\leq& \max_{2 \leq i \leq m}\|Q_{J \cap I_i,J \cap I_i}\|_{\op}+\|Q_{(J \cap I_1^c \times J \cap I_1^c)\backslash\bigcup_{2 \leq i \leq m} I_{i} \times I_{i}}\|_{\op}\\
&\leq &\max_{2 \leq i \leq m} \lambdasort{i} +\gamma k_0
\leq  \lambdasort{2} +\gamma k_0,
\end{eqnarray*}
where the second inequality is a variant of \eqref{eq:corner_bounds} due to \eqref{eq:gen_gamma_k0}, and because we have $\lambdasort{2}\geq \ldots \geq \lambdasort{m}$ given that $z \mapsto \lambda_l(z)$ is non-decreasing.
\end{proof}

Note that by Lemma~\ref{lem:eig_ub}, we have $\lambda_l(x) \leq a_l(x)+\sqrt{b_l(x)c_l(x)}\leq \tauu x+ 2r$ and
 $\lambda_u(z) \leq 1-\etal z+ r,$
since $r<1-\etal z$ for $z \leq \frac{1}{2}.$

\subsection{Some technical lemmas to quantify eigenvalue bounds}

We first derive a bound applicable to $\lambdasort{1}$ if $x_1 \leq \frac{1}{2}$ and to all $(\lambdasort{i})_{2 \leq i \leq m}$ since, for $i \geq 2$, $0<x_i\leq \frac{1}{2}.$
First note that, $k_0\leq \frac{1}{7} \sqrt{k}$ entails that $\mu\leq 7$, for $C\geq182$, we have $x_0 \leq \frac{1}{182}$, and so $r \leq 2 \mu \tauu x_0 \leq \frac{1}{4}$ and $\gamma k_0 \leq \mu \tauu x_0 \leq \frac{1}{8}$.

\begin{lemma} For $0<x\leq \frac{1}{2}$, we have $\displaystyle \lambda_l(x)< 1-2\gamma k_0.$
\end{lemma}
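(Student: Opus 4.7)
The plan is to invoke the bound $\lambda_l(x) \leq a_l(x) + \sqrt{b_l(x)c_l(x)} = \etau x + 2r$ that the excerpt has already recorded from Lemma~\ref{lem:eig_ub}, and then to verify $\etau x + 2r < 1 - r$ for every $x \in (0, 1/2]$, remembering that $r = 2\gamma k_0$ by the identifications $\gamma = \mu\tauu/k$ and $r = 2\mu\tauu x_0$. Since the upper bound is affine in $x$ with nonnegative slope $\etau = \tauu - r\taul$ (nonnegativity follows from $r\taul \leq 1 \leq \tauu$, which holds because $r$ is small in the regime of the theorem: the hypotheses $k_0 \leq \tfrac{1}{7}\sqrt{k}$ and $k > C k_0$ give $\mu \leq 7$ and $r \leq 28/C$, which is at most $1/4$ once $C$ is large enough), it suffices to verify the inequality at the right endpoint $x = 1/2$.

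\textbf{Main calculation.} Substituting $x = 1/2$ and then $\tauu = 2 - \taul$ (from $\taul + \tauu = 2$), the target $\etau/2 + 2r < 1 - r$ would become
$$\tauu - r\taul + 6r < 2, \qquad \text{i.e.,} \qquad r(6 - \taul) < \taul.$$
Since $\taul \leq 1$ and hence $6 - \taul \geq 5$, this is implied by the stronger and cleaner sufficient condition $6r < \taul$, i.e., $r/\taul < 1/6$.

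\textbf{Closing and main obstacle.} The final step would be to combine the theorem's hypothesis $\taul > \kappa \tauu^2 x_0$ with $\kappa > 16\mu$ and $r = 2\mu\tauu x_0$ to bound
$$\frac{r}{\taul} < \frac{2\mu\tauu x_0}{\kappa \tauu^2 x_0} = \frac{2\mu}{\kappa \tauu} < \frac{1}{8\tauu} \leq \frac{1}{8},$$
where the last inequality uses $\tauu \geq 1$, which holds because $\taul \leq \tauu$ together with $\taul + \tauu = 2$. This gives $r < \taul/8 < \taul/6$ strictly, so the previous paragraph yields $\lambda_l(x) < 1 - r = 1 - 2\gamma k_0$. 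The main delicacy, though minor, is to retain the refinement $\etau x$ rather than the coarser $\tauu x$ in the eigenvalue upper bound: using the latter would give $\tauu/2 + 2r$ at $x = 1/2$, which can approach $1 + 2r$ as $\tauu \uparrow 2$ and destroys the argument. It is precisely the $-r\taul$ contribution inside $\etau$, combined with the $16\mu$ cushion in the hypothesis on $\kappa$, that produces the required strict inequality.
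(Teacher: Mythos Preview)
Your proof is correct and takes a shorter route than the paper's. The paper establishes the lemma by invoking the eigenvalue criterion of Lemma~\ref{lem:cns_lmaxleqone} directly: it expands the product $(1-2\gamma k_0 - a_l(x))(1-2\gamma k_0 - d_l(x)) - b_l(x)c_l(x)$ and, through a longer chain of algebraic reductions (using $\tauu+\taul=2$, $r\leq \tfrac14$, $\gamma k_0\leq \tfrac18$, and finally $\taul\geq \kappa\tauu^2 x_0$ with $\kappa>16\mu$), shows this quantity is strictly positive. You instead start from the scalar bound $\lambda_l(x)\leq a_l(x)+\sqrt{b_l(x)c_l(x)}=\etau x+2r$ (valid since $a_l(x)>d_l(x)$ for $x>0$), note that $r=2\gamma k_0$, and reduce everything to the single clean condition $6r<\taul$, which drops out immediately from $\taul>\kappa\tauu^2 x_0$ and $\kappa>16\mu$. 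This is genuinely more economical, and it is a little surprising the paper did not take this route given that it records the relevant upper bound just before stating the lemma.

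One small inaccuracy in your closing commentary: the refinement $\etau x$ versus the coarser $\tauu x$ is not actually needed. If you repeat your calculation with $\tauu x+2r$, the condition at $x=\tfrac12$ becomes $\tauu+6r<2$, i.e.\ $6r<2-\tauu=\taul$, which is \emph{exactly} the same sufficient condition you derive and then verify. Your worry that $\tauu/2+2r$ can approach $1+2r$ as $\tauu\uparrow 2$ overlooks that the hypothesis $\taul>\kappa\tauu^2 x_0$ forces $x_0\downarrow 0$ (hence $r\downarrow 0$) in that limit, so the ratio $r/\taul<1/8$ is preserved throughout. This does not affect the validity of your argument, only the explanation of where the slack originates.
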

\begin{proof}
We show that $(1-2\gamma k_0-a_l(x))(1-2\gamma k_0-d_l(x)) \geq b_l(x)c_l(x)$.
In the calculation, we will write $\etau=\tauu-\taul r$ for short.

We have $r \leq \frac{1}{4}$ and $\gamma k_0 \leq \frac{1}{8}$, which, given that $\tauu \geq 1$, entails that 
$\tauu -2r-2\etau \gamma k_0+r^2\taul \geq \tauu(1-2\gamma k_0)-2r \geq \frac{1}{4}>0.$

As a consequence,
\begin{eqnarray*}
& & (1-2\gamma k_0-a_l(x))(1-2\gamma k_0-d_l(x))- b_l(x)c_l(x)\\
&=& (1-2\gamma k_0-\etau x -r)(1-2\gamma k_0-r)- r^2\\
&=& (1-r-2\gamma k_0)^2-\etau x(1-r-2\gamma k_0)- r^2 \\
&=& (1-r-2\gamma k_0)^2-(\tauu-\taul r) x(1-r)- r^2+2\etau \gamma k_0 x\\
&=& (1-r-2\gamma k_0)^2-(\tauu-\taul r-\tauu r+\taul r^2)x- r^2+2\etau \gamma k_0 x\\
&=& (1-2r)-4(1-r)\gamma k_0+4\gamma^2 k_0^2-(\tauu -2r-2\etau \gamma k_0+r^2\taul) x\\
&\geq& (1-2r)-4(1-r)\gamma k_0-{\textstyle \frac{1}{2}}\tauu +r+\etau \gamma k_0-{\textstyle \frac{1}{2}} r^2\taul\\
&\geq & {\textstyle \frac{1}{2}}\taul (1-4\gamma k_0-r^2)-r
\geq  {\textstyle \frac{1}{2}}\tauu x_0 ( \kappa\tauu({\textstyle 1-\frac{1}{2}-\frac{1}{16}}) -4 \mu) > 0,
\end{eqnarray*}
The last equality and the second inequality use $\tauu+\taul=2$, the first inequality uses that the expression is a decreasing function of $x$ on $(0,\frac{1}{2}]$, the penultimate inequality uses that, by assumption, the inequalities~\eqref{eq:tau_constraints} hold, and in particular $\taul \geq \kappa \tauu^2 x_0$
 and again that $r \leq \frac{1}{4}$ and $\gamma k_0 \leq \frac{1}{8}$, and, the final positivity stems from the assumption, made in the statement of the theorem, that $\kappa> 16 \mu$. 
\end{proof}

\begin{corollary}
\label{cor:lower_block_ub}
We have, for all $i \geq 2$,
$\displaystyle \lambdatildesort{i}\leq 1-\gamma k_0.$
\end{corollary}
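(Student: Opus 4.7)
The plan is short because this corollary is really just the arithmetic combination of two results already in place. First I would recall that, by Proposition~\ref{prop:lambda_tilde}, for every $i\geq 2$ we have $\lambdatildesort{i}=\lambdasort{i}+\gamma k_0$, so the whole task reduces to upper bounding $\lambdasort{i}$ by $1-2\gamma k_0$.

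Next I would observe that for $i\geq 2$ the block sizes satisfy $0<x_i\leq \tfrac12$ (since $x_1\geq x_2\geq \ldots\geq x_m$ and $\sum_{j\geq 1} x_j\leq 1$, only $x_1$ can exceed $\tfrac12$). This is precisely the regime in which the bounds $(a^i,b^i,c^i,d^i)=(a_l(x_i),b_l(x_i),c_l(x_i),d_l(x_i))$ apply, so by Proposition~\ref{prof:lambdai} together with the definition of $\lambda_l$,
\begin{equation*}
\lambdasort{i}\;=\;\lambda^+_{\max}(a^i,b^i,c^i,d^i)\;=\;\lambda_l(x_i).
\end{equation*}

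Then I would invoke the preceding lemma, which states exactly that $\lambda_l(x)<1-2\gamma k_0$ for all $0<x\leq \tfrac12$. Applying it at $x=x_i$ gives $\lambdasort{i}<1-2\gamma k_0$, and combining with the identity from Proposition~\ref{prop:lambda_tilde} yields
\begin{equation*}
\lambdatildesort{i}\;=\;\lambdasort{i}+\gamma k_0\;<\;(1-2\gamma k_0)+\gamma k_0\;=\;1-\gamma k_0,
\end{equation*}
which is the claim. There is no real obstacle here: all the analytic work has been done in the preceding lemma (which carefully uses $\tauu+\taul=2$, the lower bound $\taul\geq \kappa\tauu^2 x_0$, the assumption $\kappa>16\mu$, and the smallness $r\leq \tfrac14$, $\gamma k_0\leq \tfrac18$ that follow from $k_0\leq \tfrac17\sqrt{k}$ and $C\geq 182$). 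The corollary is only the cosmetic restatement suited to the recursive bounding scheme set up in Section~\ref{sec:rec_decomp}, and will be used in the remaining induction on $i$ to finally control $\lambda_{\max}^+(Q_{\!J\!J})$ and establish condition~\ref{cond:om_3}.
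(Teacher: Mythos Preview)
Your proposal is correct and follows exactly the same approach as the paper: the paper's proof is the one-liner ``Immediate from the previous result since $\lambdatildesort{i}\leq \lambdasort{i} +\gamma k_0$,'' and you have simply spelled out the implicit steps (the restriction $x_i\leq\tfrac12$ for $i\geq 2$, the identification $\lambdasort{i}=\lambda_l(x_i)$, and the invocation of the preceding lemma). Nothing more is needed.
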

\begin{proof}
Immediate from the previous result since $\lambdatildesort{i}\leq \lambdasort{i} +\gamma k_0$ 
\end{proof}
We now upper bound $\lambdasort{1}$ in the case where $x_1>\frac{1}{2}$. Indeed, in that case we have $\lambdasort{1}=\lambda_u(1-x_1)$ and the bound is provided by the following result.
\begin{lemma}
\label{lem:etaxi}
For $0<z<\frac{1}{2}$, $$\lambda_u(z)<1-(\etal-\xi)z \qquad \text{with} \quad \etal:=\taul-\tauu r \quad  \text{and} \quad \xi:=2 \tauu r^2/(\tauu-2r).$$
\end{lemma}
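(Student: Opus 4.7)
The plan is to apply the standard characterization of the largest eigenvalue of a $2\times 2$ matrix with nonnegative entries: for the matrix $\bigl[\begin{smallmatrix}a_u&b_u\\c_u&d_u\end{smallmatrix}\bigr]$, one has $\lambda_u(z)<\mu$ as soon as $\mu>a_u(z)$ and $(\mu-a_u(z))(\mu-d_u(z))>b_u(z)c_u(z)$. Indeed, the polynomial $p(\lambda)=(\lambda-a_u)(\lambda-d_u)-b_uc_u$ is the characteristic polynomial of the symmetrized version (up to sign), and if $p(\mu)>0$ while $\mu-a_u>0$, then necessarily $\mu-d_u>0$ (as $b_uc_u\geq 0$), so $\mu$ lies to the right of both roots, the larger of which is $\lambda_u(z)$. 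I will take $\mu:=1-(\etal-\xi)z$.

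First I would compute directly $\mu-a_u(z)=\xi z$, which is positive since $\xi>0$; this positivity follows from $\tauu>2r$, which is implied by the smallness of $r=2\mu\tauu x_0$ with respect to $\tauu$ under the hypotheses in force (same regime as used just before the statement, giving e.g.\ $r\leq \tauu/13$). The required strict inequality thus reduces, after dividing through by $z>0$, to
\begin{equation*}
\xi(1-r)-\xi(\etal-\xi)z \;>\; \tauu r^2.
\end{equation*}

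The key step is to rewrite the left-hand side by exploiting the very definition of $\xi$. From $\xi(\tauu-2r)=2\tauu r^2$ one gets $\xi(\tauu/2-r)=\tauu r^2$, and consequently
\begin{equation*}
\xi(1-r)\;=\;\xi(\tauu/2-r)+\xi(1-\tauu/2)\;=\;\tauu r^2+\xi\,\taul/2,
\end{equation*}
using $\taul+\tauu=2$. Substituting cancels the $\tauu r^2$ terms on each side, and, since $\xi>0$, the condition reduces to the single inequality $\taul/2>(\etal-\xi)z$.

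Finally, this last inequality is immediate: if $\etal-\xi\leq 0$ it holds trivially because $\taul>0$; otherwise $\etal=\taul-\tauu r<\taul$ together with $\xi>0$ gives $\etal-\xi<\taul$, and combining with $z<1/2$ yields $(\etal-\xi)z<\taul/2$, strictly. The only real obstacle is the bookkeeping needed to confirm $\tauu>2r$ so that $\xi$ is positive and finite; everything else is the algebraic reduction above and an application of the eigenvalue characterization.
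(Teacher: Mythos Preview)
Your proof is correct and follows essentially the same route as the paper: apply the two-by-two eigenvalue criterion (Lemma~\ref{lem:cns_lmaxleqone}) with $\nu=1-(\etal-\xi)z$, compute $\nu-a_u(z)=\xi z$, and reduce the determinant condition to the inequality $\taul/2>(\etal-\xi)z$ using $\taul+\tauu=2$ together with the defining relation $\xi(\tauu/2-r)=\tauu r^2$. The only cosmetic difference is that the paper first establishes $\etal-\xi>0$ explicitly (a fact needed later in the proof of Proposition~\ref{prop:end_proof}) and then bounds the expression at $z=\tfrac12$, whereas you substitute the identity for $\xi(1-r)$ directly and dispose of the sign of $\etal-\xi$ by a case split; both arrive at the same final inequality.
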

\begin{proof}
First note that 
\begin{eqnarray*}
\etal-\xi=\taul-\tauu r-\frac{2 \tauu r^2}{\tauu-2 r}=\frac{\taul \tauu-\tauu^2 r-2\taul r}{\tauu-2 r}
&\geq& \frac{\tauu}{\tauu-2 r}(\taul-\tauu r-2 r) \\
&\geq&(\kappa-6 \mu)\tauu x_0>0.
\end{eqnarray*}

We clearly have $(1-(\etal-\xi)z \big )-a_u(z)=\xi z>0$ and $(1-(\etal-\xi)z \big )-d_u(z)\geq \frac{1}{2}-r>0$.
Moreover, we have
\begin{eqnarray*}
& & \big ((1-(\etal-\xi)z \big )-a_u(z)) \big  ((1-(\etal-\xi)z)-d_u(z) \big  )-b_u(z)c_u(z)\\
 &=&\xi z\big (1-(\etal-\xi)z-r \big )-\tauu r^2 z=\xi z(1-r)-\xi z^2(\etal-\xi)-\tauu r^2 z>0
\end{eqnarray*}
because
$$\xi (1-r)-\xi z(\etal-\xi)-\tauu r^2>\xi (1-r)-\xi {\textstyle \frac{\taul}{2}}-\tauu r^2\geq \xi (1-r)-\xi {\textstyle \frac{\taul}{2}}-({\textstyle \frac{\tauu}{2}}-r)\xi=0,$$
where the first strict inequality is obtained using $\etal-\xi>\taul$ and the fact that, given that $\etal-\xi>0$, $z=\frac{1}{2}$ must minimize the expression.
This proves the result by application of Lemma~\ref{lem:cns_lmaxleqone}.
\end{proof}

\subsection{Combining bounds on eigenvalues of sublocks of $Q_{\!J\!J}$}
We can finally prove the claim of Proposition~\ref{prop:end_proof}:
\begin{claim}
Under the assumptions of Theorem~~\ref{theo:two}, and setting $C$ in the statement of the theorem to $C=182$, then for all $J \in \mathcal{G}^p_k \backslash\{I_i\}_{1\leq i \leq m},$
we have $\lambda_{\max}^+(Q_{\!J\!J})<1.$
\end{claim}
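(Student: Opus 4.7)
The strategy is to apply the standard operator-norm two-by-two block bound (Lemma~\ref{lem:lmax_twobywto}) to the decomposition of $Q_{JJ}$ into the sub-blocks $Q_{J\cap I_1, J\cap I_1}$, $Q_{J \cap I_1^c, J \cap I_1^c}$ and $Q_{J \cap I_1, J \cap I_1^c}$, whose operator norms are bounded by $\lambdasort{1}$, $\lambdatildesort{2}$, and $\gamma\sqrt{k_0 k \check{z}}$ respectively (combining Proposition~\ref{prof:lambdai}, Proposition~\ref{prop:lambda_tilde} and inequality~\eqref{eq:corner_bounds}). By Lemma~\ref{lem:cns_lmaxleqone}, the claim $\lambda_{\max}^+(Q_{JJ})<1$ reduces to the scalar inequality
\begin{equation*}
(1-\lambdasort{1})(1-\lambdatildesort{2}) \;>\; \gamma^2 k_0 k \check{z}.
\end{equation*}

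I then split the verification according to the size of $x_1$. When $x_1 \leq 1/2$, the lemma establishing $\lambda_l(x) < 1 - 2\gamma k_0$ gives $1-\lambdasort{1} > 2\gamma k_0$, Corollary~\ref{cor:lower_block_ub} gives $1-\lambdatildesort{2} \geq \gamma k_0$, and the inequality $1-x_1 \geq 1/2 > x_0$ forces $\check{z}=x_0$, so that the right-hand side equals $\gamma^2 k_0^2$. The desired inequality is then the trivial $2\gamma^2 k_0^2 > \gamma^2 k_0^2$, and the case follows.

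In the harder case $x_1 > 1/2$, Lemma~\ref{lem:etaxi} provides the crucial sharper bound $1 - \lambdasort{1} > (\etal - \xi)(1-x_1)$, together with the estimate $\etal - \xi \geq (\kappa-6\mu)\tauu x_0$ worked out in the proof of that lemma from $\taul \geq \kappa\tauu^2 x_0$ and $r = 2\mu\tauu x_0$. For $\lambdatildesort{2}$ one uses Corollary~\ref{cor:lower_block_ub} when $1-x_1$ is not too small, but when $1-x_1$ is small one must exploit that $|J \cap I_1^c|=k(1-x_1)$ is small: since $\|u^l_J\|^2 \leq \tauu x_l \leq \tauu(1-x_1)$ for every $l \geq 2$, each diagonal block of $Q_{J\cap I_1^c, J\cap I_1^c}$ has operator norm at most $\tauu(1-x_1) + r$, and the remaining $\T_0$-contribution is bounded by $\gamma k_0$ via the row-sparsity estimate~\eqref{eq:cauchy_schwarz_op}, giving $\lambdatildesort{2} \leq \tauu(1-x_1) + 3\mu\tauu x_0$. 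Sub-dividing according to whether $\check{z}=x_0$ (when $1-x_1 > x_0$) or $\check{z}=1-x_1$ (when $1-x_1 \leq x_0$), the scalar inequality reduces, after cancelling the common factor $(1-x_1)$, to a constant-factor inequality of the form $(\kappa - 6\mu)\bigl(1 - O(\mu\tauu x_0) - O(\tauu(1-x_1))\bigr) > \mu^2 \tauu$, which one enforces by the hypotheses $\kappa > 16\mu$, $\tauu \leq 2$, and $x_0 \leq 1/C$ with $C=182$ (so that $\alpha \leq 2/7$, hence $\mu \leq 7$).

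The main obstacle is the sub-regime where $1-x_1$ is comparable to or smaller than $x_0$: both $(1-\lambdasort{1})$ and the right-hand side $\gamma^2 k_0 k \check{z}$ scale linearly in $1-x_1$, so the verification cannot afford any loss in the leading constants. It is precisely here that Lemma~\ref{lem:etaxi} (rather than the cruder bound from the $\lambda_l$ lemma) and the refined estimate on $\lambdatildesort{2}$ (rather than the generic $1-\gamma k_0$) are both required simultaneously; only with these two refinements does the residual factor $\mu^2 \tauu$ on the right get absorbed, under the standing assumption $\kappa > 16\mu$ and the choice $C=182$ calibrated in the statement.
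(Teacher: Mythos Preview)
Your overall strategy matches the paper's: partition $Q_{JJ}$ into the $I_1$-block and its complement, apply the two-by-two eigenvalue lemma, and split according to the size of $x_1$. However, your treatment of the regime $x_1>\tfrac12$ is too schematic, and the final constant check you assert actually \emph{fails}.

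Concretely: you bound $1-\lambdasort{1}$ via Lemma~\ref{lem:etaxi}, citing $(\etal-\xi)\geq(\kappa-6\mu)\tauu x_0$. Combined with your refined estimate $1-\lambdatildesort{2}\geq 1-\tauu z-3\mu\tauu x_0$ and with $\gamma^2 k_0 k\check z=\mu^2\tauu^2 x_0 z$ in the sub-case $\check z=z\leq x_0$, the inequality you need, after cancelling $z$ and dividing by $\tauu x_0$, is indeed of the form $(\kappa-6\mu)\big(1-O(\mu\tauu x_0)\big)>\mu^2\tauu$. But this does \emph{not} follow from $\kappa>16\mu$, $\tauu\leq 2$, $\mu\leq 7$: taking $\kappa\downarrow 16\mu$ the left side tends to $10\mu$, whereas $\mu\tauu$ can exceed $10$. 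For instance $\tauu=1.9$ with $k_0=\sqrt{k}/7$ gives $\alpha=\sqrt{3.8}/7$ and $\mu\approx 6.06$, hence $\mu\tauu\approx 11.5$; the constraint $\taul>\kappa\tauu^2 x_0$ is still satisfiable (it merely forces $x_0$ smaller than $1/182$), so this is an admissible configuration of the theorem under which your reduced inequality is violated.

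The paper avoids this loss by \emph{not} relying on Lemma~\ref{lem:etaxi} in the small-$z$ range. It splits $z=1-x_1\in(0,\tfrac12)$ into three further sub-ranges, and for $z\leq\tfrac14$ it bounds $1-\lambda_u(z)$ via the sharper inequality $\lambda_u(z)\leq a_u(z)+\tfrac{b_u(z)c_u(z)}{a_u(z)-d_u(z)}$ from Lemma~\ref{lem:eig_ub}, leading to the explicit function
\[
f(z)=\Big(\etal z-\frac{r^2\tauu z}{1-\etal z-r}\Big)\big(1-\tauu z-2r-\gamma k_0\big),
\]
which it shows is increasing on $[x_0,\tfrac14]$, so that everything reduces to the single check $f(x_0)>\gamma^2 k_0^2$. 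Crucially, in that check the paper bounds $\taul(1-x_0)-2\tauu r\geq \tauu^2 x_0(\kappa(1-x_0)-4\mu)$ directly from $\taul\geq\kappa\tauu^2 x_0$, retaining a factor $\tauu^2$ on the left rather than the $\tauu$ in your quoted bound $(\kappa-6\mu)\tauu x_0$. The resulting scalar inequality is essentially $(\kappa-4\mu)\cdot c>\mu^2$ (no $\tauu$ on the right), which with $\kappa>16\mu$ reads $12\mu c>\mu^2$, i.e.\ $\mu<12c$, and holds for $\mu<7$. Your argument loses exactly this factor of $\tauu$ by passing through the simplified bound of Lemma~\ref{lem:etaxi}, and that is what prevents the constants from closing.
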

\begin{proof}
Note first that, as discussed at the beginning of the previous section, under these assumptions, we have  $r \leq \frac{1}{4}$ and $\gamma k_0 \leq \frac{1}{8}$.

To prove the result, we distinguish four cases:

\textbf{1st case: $0 \leq x_1 \leq \frac{1}{2}$.} If $0\leq x_1\leq \frac{1}{2}$, then by the same argument as in Corollary~\ref{cor:lower_block_ub}, we have
$$\lambda_{\max}^+(Q_{\!J\!J})\leq \lambdatildesort{1}\leq (1-\gamma k_0)< 1.$$

\textbf{2nd case: $\frac{1}{4} \leq z:=1-x_1 \leq \frac{1}{2}$.}
 
If $x_1 > \frac{1}{2}$, then we let $z=1-x_1$, and we can upper bound the largest eigenvalue of the upper left block in Figure~\ref{fig:blocks} by $\lambdasort{1}=\lambda_u(z)$ and the lower right block by $\lambdatildesort{2}=\tilde{\lambda}_l(z):=\lambda_l(z)+\gamma k_0$, given Proposition~\ref{prop:lambda_tilde}.

First, we consider the case $z:=1-x_1$ with $\frac{1}{4} \leq z \leq \frac{1}{2}$.
In that case, we have 
$\lambda_{\max}^+(Q_{\!J\!J})\leq \lambda_{\max}^+(\lambdasort{1},\gamma k_0,\gamma 
k_0,\lambdatildesort{2}).$
But by Lemma~\ref{lem:etaxi} and Corollary~\ref{cor:lower_block_ub}, we have
\begin{eqnarray*}
\big (1-\lambda_u(z) \big  )\big  (1-\tilde{\lambda}_l(z) \big  )-\gamma^2k_0^2 & \geq & (\etal-\xi) \frac{1}{4} \gamma k_0-\gamma^2k_0^2,
\end{eqnarray*}
and $\etal-\xi-4\gamma k_0>(\kappa-6 \mu-4\mu)\tauu x_0>0,$ using the same lower bound for 
$\etal-\xi$ as the one established in Lemma~\ref{lem:etaxi}.

\textbf{3rd case: $x_0 \leq z=1-x_1 \leq \frac{1}{4}$.}
We have $$\lambda_u(z)\leq a_u(z)+\frac{b_u(z)c_u(z)}{a_u(z)-d_u(z)}=1-\etal z+\frac{r^2\tauu z}{1-\etal z-r}\quad  \text{and}  \quad \tilde{\lambda}_l(z) \leq \tauu z+2r+\gamma k_0.$$ 
As a consequence the function $f$ defined by
$$f(z):=\Big (\etal z-\frac{r^2\tauu z}{1-\etal z-r}\Big )\big (1-\tauu z-2r-\gamma k_0\big  )$$
provides the lower bound $f(z)\leq \big (1-\lambda_u(z) \big  )\big  (1-\tilde{\lambda}_l(z) \big  ).$

We first show that this function is increasing on the interval $[x_0,\frac{1}{4}].$
Indeed, given that, for $z\leq \frac{1}{4}$, we have $\etal z +r \leq \frac{1}{2}$, we have
\begin{eqnarray*}
f'(z)&=&\Big ( \etal-\frac{r^2\tauu}{1-\etal z-r} -\frac{\etal r^2 \tauu z}{(1-\etal z-r)^2} \Big ) (1-\tauu z-2r-\gamma k_0)-\etal z \tauu + \frac{r^2 \tauu^2 z}{1-\etal z -r}\\
&=&\Big ( \etal-\frac{r^2\tauu}{1-\etal z-r} -\frac{\etal r^2 \tauu z}{(1-\etal z-r)^2} \Big ) (1-2\tauu z-2r-\gamma k_0)  -\frac{\etal r^2 \tauu^2 z^2}{(1-\etal z -r)^2}\\
&\geq & \big ( \etal -2 r^2 \tauu -\etal r^2 \tauu\big ) \big ( \frac{\taul}{2}-2r-\gamma k_0 \big )-\frac{1}{4}\etal r^2 \tauu^2\\ 
&\geq& \big( (\kappa-2 \mu)- \mu-\frac{1}{2} \mu \big ) 
\frac{1}{2}\big( \kappa-8\mu-\mu \big )\tauu^3 x_0^2-\mu^2\tauu^4x_0^2\\
&\geq& \frac{\tauu^3x_0^2}{2} \big [ (\kappa-4\mu) (\kappa-9 \mu)-4\mu^2\big ] >0.
\end{eqnarray*}
Therefore the minimal value of $f$ is attained for $z=x_0$. Note that
$$\etal (1-\etal x_0 -r)-r^2 \tauu=\taul-r\tauu-\etal^2x_0-r \taul=\taul-2r-\etal^2 x_0 \geq \taul (1-x_0)-2\tauu r$$
which entails
\begin{eqnarray*}
(1-\etal x_0 -r) \big [f(x_0)-\gamma^2 k_0^2]
&\geq & (1-\etal x_0 -r) f(x_0)-\gamma^2 k_0^2\\
&\geq & x_0 (\taul(1-x_0)-2\tauu r)(1-\tauu x_0-2r-\mu \tauu x_0)-\mu^2 \tauu^2 x_0^2\\
&\geq &  \tauu^2 x^2_0 \big [(\kappa (1-x_0)-4 \mu){\textstyle \frac{3}{4}}-\mu^2 \big] \geq 4 \, \tauu^2 x^2_0 \, \mu>0,
\end{eqnarray*}
since $\mu\leq 7$ and since, the assumption $x_0\leq \frac{1}{182}$ entails that $\tauu x_0+2r+\mu \tauu x_0 \leq \frac{1}{4}$.

But this shows that $f(x_0)-\gamma^2 k_0^2>0$ and since this is a lower bound on 
$$\big (1-\lambda_u(z) \big  )\big  (1-\tilde{\lambda}_l(z) \big  )-\gamma^2k_0^2$$
on the interval $x_0 \leq z \leq \frac{1}{4}$ we again have that $\lambda_{\max}^+(Q_{\!J\!J})<1$ by Lemma~\ref{lem:cns_lmaxleqone}.

\textbf{4th case: $0<z=1-x_1\leq x_0.$}
When $z$ becomes very small, the off-diagonal block $Q_{J \cap I_1,J \cap I_1^c}$ becomes a very thin vertical block. As a consequence the bound $\|Q_{J \cap I_1,J \cap I_1^c}\|_{\op} \leq \gamma k_0$ is no longer sufficient, but using Equation~\eqref{eq:gen_gamma_k0} we also have that $\|Q_{J \cap I_1,J \cap I_1^c}\|_{\op} \leq \tilde{b}(z)$ with $\tilde{b}(z)=\gamma \sqrt{k_0kz}$. As a consequence, we have
$$\lambda_{\max}^+(Q_{\!J\!J})\leq \lambda_{\max}^+(\lambdasort{1},\tilde{b}(z),\tilde{b}(z),\lambdatildesort{2}),$$
with $\lambdasort{1}=\lambda_u(z)$, $\lambdatildesort{2}=\tilde{\lambda}_l(z)$ and $z=1-x_1$.
Reasoning like for the 3rd case, since $f(z)-\gamma^2 k_0 k z\leq \big (1-\lambda_u(z) \big  )\big  (1-\tilde{\lambda}_l(z) \big  )-\tilde{b}(z)^2$, it is sufficient to prove that $f(z)-\gamma^2 k_0 k z>0.$ But since $0<z\leq x_0$, we simply have
\begin{eqnarray*}
\frac{x_0}{z}(f(z)-\gamma^2 k_0 k z) &=& x_0 \Big (\etal-\frac{r^2\tauu}{1-\etal z-r}\Big )\big (1-\tauu z-2r-\gamma k_0\big  ) -\gamma^2 k_0^2\\
&\geq & x_0 \Big (\etal-\frac{r^2\tauu}{1-\etal x_0-r}\Big )\big (1-\tauu x_0-2r-\gamma k_0\big ) -\gamma^2 k_0^2\\
& = & f(x_0)-\gamma^2 k_0^2>0,
\end{eqnarray*}
where the last inequality was proven in the analysis of the 3rd case. This shows that for all $0<z\leq x_0$, we have  $$0<f(z)-\gamma^2 k_0 k z\leq \big (1-\lambda_u(z) \big  )\big  (1-\tilde{\lambda}_l(z) \big  )-\tilde{b}(z)^2,$$
so that $\lambda_{\max}^+(Q_{\!J\!J})<1$ by Lemma~\ref{lem:cns_lmaxleqone}.
\end{proof}

\section{Lemmas to control eigenvalues}
In this section, we establish general bounds on eigenvalues of two-by-two matrices and of matrices that can be partitioned in  two-by-two blocks. 

Consider a two-by-two matrix $M$ of the form
$$M=\begin{bmatrix}
a & b\\
c & d \\
\end{bmatrix} \qquad \text{with} \quad  a,b,c,d \geq 0.
$$
We denote its largest eigenvalue $\lambda_{\max}.$

Since $\lambda_{\max}+\lambda_{\min}=a+d$ and $\lambda_{\max}\lambda_{\min}=ad-bc$, the eigenvalues are the roots of $x^2-(a+d)x+ad-bc$, and by the quadratic formula, we have
$$2\lambda_{\max} = a+d+\sqrt{(a-d)^2+4bc}.$$
Given that $a,b,c,d \geq 0$, we must have $(a-d)^2+4bc>0$ and the eigenvalues of $M$ are real.
\begin{lemma}
\label{lem:cns_lmaxleqone}
$\displaystyle \lambda_{\max}<\nu \quad\Leftrightarrow \quad \max(a,d)<\nu \quad \text{and} \quad bc < (\nu -a)(\nu-d).$
\end{lemma}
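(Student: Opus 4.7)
The plan is to reduce both implications to elementary properties of the characteristic polynomial $p(x) = x^2 - (a+d)x + (ad-bc) = (x-a)(x-d) - bc$, whose roots are exactly the eigenvalues of $M$. Observe that $p(\nu) = (\nu-a)(\nu-d) - bc$, so the inequality $bc < (\nu-a)(\nu-d)$ is precisely $p(\nu) > 0$. Hence the statement reduces to the equivalence
$$
\lambda_{\max} < \nu \quad \Longleftrightarrow \quad \max(a,d) < \nu \;\text{ and }\; p(\nu) > 0.
$$
Note also that the hypothesis $b,c \geq 0$ makes the discriminant $(a-d)^2 + 4bc$ nonnegative, so both eigenvalues are real, and $p$ is an upward-opening parabola.

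For the forward direction, I would first establish the auxiliary inequality $\lambda_{\max} \geq \max(a,d)$. This follows directly from the explicit formula $2\lambda_{\max} = (a+d) + \sqrt{(a-d)^2 + 4bc}$ and the bound $\sqrt{(a-d)^2 + 4bc} \geq |a-d|$, which uses $bc \geq 0$. Assuming $\lambda_{\max} < \nu$, this immediately gives $\max(a,d) \leq \lambda_{\max} < \nu$. Moreover, since $\nu$ lies strictly to the right of both roots of the upward-opening quadratic $p$, we conclude $p(\nu) > 0$.

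For the reverse direction, assume $\max(a,d) < \nu$ and $p(\nu) > 0$. Then $\nu > \max(a,d) \geq (a+d)/2$, so $\nu$ lies strictly to the right of the vertex of $p$. Combined with $p(\nu) > 0$ and the fact that $p$ opens upward, this forces $\nu$ to exceed the larger root, i.e.\ $\lambda_{\max} < \nu$.

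There is no real obstacle; the only point worth highlighting is the auxiliary bound $\lambda_{\max} \geq \max(a,d)$, which is precisely where the nonnegativity assumption $b,c \geq 0$ gets used, and the observation that the two conditions on the right must be combined (one of them alone only places $\nu$ outside the interval between the roots, without specifying the side).
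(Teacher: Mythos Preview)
Your proof is correct and follows essentially the same route as the paper: both arguments rest on the characteristic polynomial and the quadratic formula, first obtaining $\max(a,d)<\nu$ from $\lambda_{\max}\geq\max(a,d)$ and then characterizing when $\nu$ lies to the right of the larger root. The paper carries this out by squaring the inequality $\sqrt{(a-d)^2+4bc}<2\nu-(a+d)$ and simplifying, whereas you phrase the same step geometrically (an upward parabola with $\nu$ to the right of its vertex and $p(\nu)>0$); this is a stylistic difference rather than a substantive one.
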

\begin{proof}
Indeed we clearly have $\lambda_{\max}<\nu \Rightarrow \max(a,d)<\nu.$
And conversely, if $\max(a,d)<\nu$, using the quadratic formula, we have
\begin{eqnarray*}
\lambda_{\max}<\nu &\Leftrightarrow& a+d+\sqrt{(a-d)^2+4bc}<2\nu\\
&\Leftrightarrow& (a-d)^2+4bc<(2\nu-(a+d))^2\\
&\Leftrightarrow& -2ad+4bc<4\nu^2-4\nu (a+d)+2ad\\
&\Leftrightarrow&  bc<\nu^2-2 (a+d)+ad.
\end{eqnarray*}
where the second equivalence uses that $\max(a,d)<\nu \:\Rightarrow\: 2\nu-a-d>0.$
\end{proof}

\begin{lemma}
\label{lem:eig_ub}
If $a>d$, we have  $\displaystyle \lambda_{\max}\leq a+\frac{bc}{a-d}$ and $\displaystyle \lambda_{\max}\leq a+\sqrt{bc}.$
\end{lemma}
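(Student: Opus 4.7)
The plan is to use the explicit quadratic formula for the largest eigenvalue already recorded just before the lemma, namely $2\lambda_{\max} = (a+d) + \sqrt{(a-d)^2 + 4bc}$, and then reduce each of the two inequalities to an obvious nonnegativity statement by squaring. Since $a > d$ and $b,c \geq 0$, all the quantities under the square roots are nonnegative, and the quantities $(a-d)$, $\sqrt{bc}$, and $\tfrac{bc}{a-d}$ are all nonnegative, which legitimizes the squaring steps.

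For the first bound $\lambda_{\max} \leq a + \tfrac{bc}{a-d}$, I would rewrite the desired inequality as $\sqrt{(a-d)^2 + 4bc} \leq (a-d) + \tfrac{2bc}{a-d}$. Squaring the right-hand side gives $(a-d)^2 + 4bc + \tfrac{4b^2c^2}{(a-d)^2}$, so the inequality reduces to the manifestly true $0 \leq \tfrac{4b^2c^2}{(a-d)^2}$.

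For the second bound $\lambda_{\max} \leq a + \sqrt{bc}$, I would similarly rewrite it as $\sqrt{(a-d)^2 + 4bc} \leq (a-d) + 2\sqrt{bc}$. Squaring the right-hand side gives $(a-d)^2 + 4(a-d)\sqrt{bc} + 4bc$, so the inequality reduces to $0 \leq 4(a-d)\sqrt{bc}$, which holds because $a > d$ and $b,c \geq 0$.

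There is no real obstacle here: the lemma is essentially a one-line consequence of the quadratic formula, and the only thing to be slightly careful about is ensuring the signs work out so that squaring is a valid equivalence, which is guaranteed by the hypotheses $a > d$ and $a,b,c,d \geq 0$.
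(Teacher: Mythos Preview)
Your proof is correct and essentially identical to the paper's. The paper phrases the first bound via the intermediate inequality $\sqrt{1+x}\leq 1+\tfrac{x}{2}$ applied with $x=\tfrac{4bc}{(a-d)^2}$, while you square directly; these are the same computation, and the second bound is handled exactly as in the paper.
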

\begin{proof}
Indeed, if $a>d,$
$$\sqrt{(a-d)^2+4bc}  \leq (a-d) \sqrt{1+\frac{4bc}{(a-d)^2}} \leq (a-d) \Big (1+\frac{2bc}{(a-d)^2} \Big ) \leq a-d +\frac{2bc}{a-d}.$$
So that by the quadratic formula, we have
$$2 \lambda_{\max} = a+d+\sqrt{(a-d)^2+4bc}  \leq a+d+a-d +\frac{2bc}{a-d}
= 2a +\frac{2bc}{a-d}.$$
To prove the second inequality, note that 
$\sqrt{(a-d)^2+4bc} \leq a-d+2 \sqrt{bc}$ which yields the result.

\end{proof}

\begin{lemma}
\label{lem:lmax_twobywto}
$$
\lambda_{\max} \left (
\begin{bmatrix}
A & B \\
C & D
\end{bmatrix}
\right)
 \leq 
\lambda_{\max} \left (
\begin{bmatrix}
\lambda_{\max}(A) & \|B\|_{\op} \\
\|C\|_{\op} & \lambda_{\max}(D)
\end{bmatrix}
\right ) 
$$
\end{lemma}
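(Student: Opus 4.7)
The plan is to use the Rayleigh quotient characterization of $\lambda_{\max}$ and reduce the block-matrix problem to a two-dimensional Rayleigh quotient. In all applications of this lemma in the paper, $M = \bigl[\begin{smallmatrix} A & B \\ C & D \end{smallmatrix}\bigr]$ is a principal submatrix of the symmetric matrix $Q$, so I will assume $M$ (and hence $A, D$) is symmetric with $C = B^\top$; note that this makes $\|B\|_{\op} = \|C\|_{\op}$, so the $2\times 2$ matrix on the right-hand side of the lemma is symmetric as well.

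First, let $v = \binom{x}{y}$ be a unit eigenvector of $M$ associated with $\lambda_{\max}(M)$, partitioned compatibly with the block structure of $M$. By the Rayleigh formula,
\begin{equation*}
\lambda_{\max}(M) = v^\top M v = x^\top A x + 2\, x^\top B y + y^\top D y.
\end{equation*}
I then bound each piece using symmetry of $A, D$ and Cauchy--Schwarz on the cross term: $x^\top A x \leq \lambda_{\max}(A)\, \|x\|^2$, $y^\top D y \leq \lambda_{\max}(D)\, \|y\|^2$, and $|x^\top B y| \leq \|B\|_{\op}\, \|x\|\, \|y\|$.

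Setting $\alpha := \|x\|$ and $\beta := \|y\|$, we have $\alpha^2 + \beta^2 = 1$ and $\alpha, \beta \geq 0$, and the three bounds combine to
\begin{equation*}
\lambda_{\max}(M) \leq \alpha^2 \lambda_{\max}(A) + 2\alpha\beta \|B\|_{\op} + \beta^2 \lambda_{\max}(D) = (\alpha,\beta)\, \widetilde M\, \binom{\alpha}{\beta},
\end{equation*}
where $\widetilde M := \bigl[\begin{smallmatrix} \lambda_{\max}(A) & \|B\|_{\op} \\ \|C\|_{\op} & \lambda_{\max}(D) \end{smallmatrix}\bigr]$ (using $\|C\|_{\op} = \|B\|_{\op}$). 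The right-hand side is a Rayleigh quotient of the symmetric $2\times 2$ matrix $\widetilde M$ at the unit vector $(\alpha,\beta)^\top$, hence is upper bounded by $\lambda_{\max}(\widetilde M)$, which yields the claim.

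This is a routine argument and I do not expect a serious obstacle. The one subtle point worth highlighting is the reliance on $C = B^\top$: if $M$ were not symmetric, Cauchy--Schwarz would bound $x^\top By$ and $y^\top Cx$ separately, producing a coefficient $\|B\|_{\op} + \|C\|_{\op}$ on $\alpha\beta$ rather than one matching the off-diagonals of $\widetilde M$, and the tight form stated in the lemma would fail. Since in every invocation of the lemma the block under consideration is a symmetric submatrix of $Q$, the assumption is harmless.
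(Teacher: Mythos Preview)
Your proof is correct and follows essentially the same route as the paper's: bound each block quadratic form by the corresponding scalar and reduce to a two-dimensional Rayleigh quotient. The paper's version phrases it as maximizing both sides over unit vectors rather than picking an eigenvector, but the substance is identical; your explicit treatment of the symmetry assumption $C=B^\top$ is in fact slightly more careful than the paper's, which glosses over the point that the Rayleigh-quotient argument only yields $\lambda_{\max}$ of the \emph{symmetrized} $2\times 2$ matrix and hence needs $\|B\|_{\op}=\|C\|_{\op}$ to match the stated right-hand side.
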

\begin{proof}
Since, for $y_1=\|x_1\|$ and $y_2=\|x_2\|$, we have
$$x_1^\top A x_1 + x_1^\top B x_2 + x_2^\top C x_1 + x_2^\top D x_2 \leq  \lambda_{\max}(A) \, y_1^2 +  (\|B\|_{\op}+  \|C\|_{\op}) \, y_1 y_2+ \lambda_{\max}(D) \, y_2^2,$$
 maximizing on both sizes of the inequality under the constraint $y_1^2+y_2^2=1$ yields the result. 
\end{proof}

\section{Construction of sparse precision matrices}
\label{app:sparse_wishart}
In this appendix, we provide details on the construction of the precision matrices used in the experiments.

Constructing valid concentration matrices for a sparse Gaussian graphical model associated with a given graph is not completely immediate. In our synthetic experiment, we generate random concentration matrices from a model that yields sparse counterparts to Wishart matrices.

Given a graph ${G}=(V,E)$, where $V$ and $E$ are the set of vertices and edges respectively, we first build an incidence matrix $B\in\RR^{n\times m}$ for $G$  (where $n=|V|$ and $m=|E|$, and with $B_{i,j} = 1$ if the vertex $v_i$ and edge $e_j$ are incident and $0$ otherwise). We then compute  a sparse random matrix $\tilde{B}$ with sparsity pattern given by $B$, and with its nonzero coefficients drawn i.i.d. standard Gaussian. Finally, the matrix $K=\tilde{B}\tilde{B}^{\top}$ is a random concentration matrix  with the imposed sparse structure: indeed, by construction, the non-zero pattern of $K$ matches exactly the adjacency structure $E$ of the graph $G$, and the obtained matrix $K$ is clearly p.s.d. \ .

\end{document}